\newtheorem{assumption}{Assumption}
\newtheorem{corollary}{Corollary}[section]
\newtheorem{mylemma}{Lemma}[section]
\newtheorem{mypro}{Proposition}[section]
\newtheorem{definition}{Definition}
\def \n2{{N_0 \over 2}}
\def\diag{\mbox{\textrm{diag}}}
\def\proj{\mbox{\textrm{proj}}}
\def \h5{\hspace{0.5in}}
\def \bS{\mbox{\boldmath $S$}}
\def \bS{\mbox{\boldmath $S$}}
\def \bC{\mbox{\boldmath $C$}}
\def \bE{\mbox{\boldmath $E$}}
\def \bG{\mbox{\boldmath $G$}}
\def \bd{\mbox{\boldmath $d$}}
\def \bS{\mbox{\boldmath $S$}}
\newcommand{\var}{\mbox{\rm var}}
\newcommand{\cov}{\mbox{\rm cov}}
\title{System Identification via Meta-Learning in Linear Time-Varying Environments}
 \author{%
   Sen Lin\\
   \And
   Hang Wang\\
   \And
   Junshan Zhang\thanks{The authors are affiliated with the School of EECE, Arizona State University, Tempe, AZ 85287, USA; e-mail:\{slin70, hwang442, junshan.zhang\}@asu.edu} \\
 }
\begin{document}

\maketitle

\begin{abstract}
System identification is a fundamental problem in reinforcement learning, control theory and signal processing,  and  the non-asymptotic analysis of the corresponding sample complexity  is challenging and elusive, even for linear time-varying (LTV) systems. To tackle this challenge, we  develop an episodic block model for the LTV system where  the model parameters remain constant within each block but change from block to block. Based on the observation that the model parameters across different blocks are related, we treat each episodic block as a learning task and then run meta-learning over many blocks for  system identification, using two steps, namely offline meta-learning and online adaptation.
We carry out a comprehensive non-asymptotic analysis of the performance of meta-learning based system identification. To deal with the technical challenges rooted in the sample correlation and small sample sizes in each block, we devise a new two-scale martingale small-ball approach for offline meta-learning, for arbitrary model correlation structure across blocks. We then quantify the finite time error of online adaptation by leveraging recent advances in linear stochastic approximation with correlated samples.
\end{abstract}

\section{Introduction}
With the recent success stories in video games and Go, there is a general consensus that reinforcement learning (RL) techniques have great potential for intelligent decision making in dynamical systems, thanks to its ability to learn  from the  environment on the fly and carry out adaptive control.  
It is therefore of great interest to understand the system identification in dynamic systems, especially  for model-based   RL. Thus motivated, this study focuses on characterizing the sample complexity of system identification, i.e., how many data samples are required  to estimate the unknown parameters of a time-varying dynamic system. Notably,  sharp non-asymptotic analysis, even for the system identification of   linear time-invariant (LTI) systems, is rare. Recent work \cite{dean2017sample} has built a finite sample theoretical guarantee of the least squares estimator for LTI systems in the context of Linear Quadratic Regulator (LQR) using multiple independent trajectories, with all but the last state-transition discarded for each trajectory. A sharp non-asymptotic analysis of the least squares estimator for the identification of LTI systems with a single trajectory  is provided in \cite{simchowitz2018learning}.
Nevertheless, the LTI model would not be applicable to the time-varying dynamic systems in many applications.


Considering that the environment is often time-varying and  evolves over time, we move one step forward and study the performance of system identification in unknown linear time-varying (LTV) systems. We aim to obtain a clear understanding of the impact of the sample size and model dynamics on the parameter estimation in LTV systems.
Inspired by the block fading channel model in wireless communications systems where the random channel gains are assumed to be constant within a block, we treat the LTV system as episodic blocks in which the model parameters remain constant within each block but change from block to block in a stochastic manner.

Clearly, system identification is a challenging task for LTV systems, for a number of reasons, including (1) \emph{continuous learning}: one-time learning for a global model would not suffice because different episodes have distinct model parameters; (2) \emph{fast learning}: since the environment may change quickly, straggled learning  could result in outdated estimators. A key observation here is that model parameters across adjacent episodes  are often `related' (in some sense) and in many applications they may follow some common distribution. Based on this observation, we propose meta-learning (Meta-L)   \cite{finn2017model} for system identification. The underlying rationale behind Meta-L is to learn a good model initialization by training over many similar tasks \cite{finn2017model}, and use it for fast adaptation to learn the new model using only a small amount of data from the new learning task. Thus inspired, we advocate meta-learning to continuously and quickly learn the  model parameters in LTV systems.  

The main contributions of this paper can be summarized as follows.

    (1) We propose an episodic block model for the LTV system, where the model parameters are assumed to be constant  within each episodic block of length $L$ but change from block to block.  The block length $L$ hinges upon the system dynamics; and the faster the variation is, the smaller $L$ is. Building on this proposed episodic block model, we leverage meta-learning to learn a  model initialization by making use of the model similarity across episodic blocks, thereby addressing the challenges in system identification of LTV dynamics.  The proposed Meta-L based system identification consists of 1) offline Meta-L and 2) online adaptation, and  it is akin to a recursive least square (RLS) estimator, with interleaved usage of training data and testing data within each block and iterations across blocks.

    
    (2) To the best of our knowledge, this work provides the first  non-asymptotic analysis for the system identification performance of Meta-L with general correlation structure in LTV dynamic systems. In particular, based on \cite{simchowitz2018learning}, we devise a new two-scale martingale small-ball method to address the difficulties rooted in  sample correlation and small block sizes. The derived upper bound on the distance between Meta-L based model initialization and the underlying  parameters is sharp and  encapsulates the impact of the model similarity and the sample size on system identification.
    
    (3) Further, we characterize the model estimation error corresponding to the online adaptation using the  model initialization learnt from offline meta-learning.
     We devise a multi-step gradient descent algorithm and recast it as a  linear stochastic approximation algorithm with correlated samples. The upper bounds on the finite time error  reveal that  the error between the model estimator and the underlying model decays  exponentially.

It is worth noting that the selection of block length $L$ in  the episodic block model used to approximate the LTV system can be nontrivial. As will be shown in Theorem \ref{thm:thm.1}, one can choose a smaller block length $L$ to improve the approximation accuracy of the episodic block model for the LTV system, and the   meta-learning algorithm for offline learning can  yield a good model initialization as long as there is a large number of episodic blocks available.  However, the selection of $L$ is more challenging for real-time learning which may necessitate  
an adaptive episodic block model to approximate the LTV system; and this deserves further investigation.
We note that  related work \cite{ouyang2017learning} proposed a   time-varying model where the model parameters are assumed to follow a jump process and change independently subject to the constraints on the average number of jumps in a given time window; in contrast,  the proposed episodic block model encompasses general correlation structure and  makes it possible to leverage meta-learning as a promising approach for system identification of LTV systems.

\subsection{Related Work}
\emph{Meta-Learning:} Meta-learning has recently emerged as a promising solution for learning to learn. Both meta-learning and multi-task learning aim to improve the performance by leveraging other related tasks. However, meta-learning focuses on learning a good model initializer first and uses it for fast learning in a new task \cite{santoro2016meta,munkhdalai2017meta,snell2017prototypical}, whereas conventional multi-task learning aims to learn all tasks simultaneously.  
One gradient-based meta-learning algorithm, called MAML \cite{finn2017model}, directly optimizes the learning performance with respect to an initialization of the model such that fast adaptation from the initialization can produce good performance on a new task. A first-order method  named Reptile is proposed in \cite{nichol2018first} to circumvent the need of  second derivatives in MAML.
These approaches have been extended to devise new reinforcement learning algorithms, which can perform significantly better than standard reinforcement learning algorithms that learn from scratch \cite{gupta2018unsupervised,nagabandi2018learning,rakelly2019efficient}.
However, there is a lack of fundamental understanding about the performance of meta-learning with correlated samples in terms of the sample complexity required to achieve certain performance. This work  makes a first attempt to  characterize the non-asymptotic estimation error  for  meta-learning based system identification. And  our study on Meta-L based adaptive control is underway.

\emph{System Identification:} System identification is a fundamental problem in control theory, reinforcement learning and signal processing. Most existing studies in this area have used
mixing-time arguments, which rely on fast convergence to a stationary distribution
so that correlated samples can be treated roughly as if they were independent (see, e.g., \cite{yu1994rates,mohri2008stability,kuznetsov2017generalization,mcdonald2017nonparametric}).
Recently, there has been increasing interest in non-asymptotic analysis of system identification. Polynomial time guarantee in terms of predication errors for identifying stable linear systems is 
provided in \cite{shah2012linear,hardt2018gradient,hazan2018spectral}. The series of recent work \cite{faradonbeh2017finite,faradonbeh2018finite} characterize a non-asymptotic convergence rate of the least-square estimator. Simchowitz et al.\cite{simchowitz2018learning} address the coupling between the covariate process and the noise process, and devise the innovative block Martingale small ball (BMSB)  method, based on which sharp non-asymptotic analyses is carried out for the sample complexity of system identification in LTI systems.
Taking one step further, \cite{sarkar2019near} derives the finite time error bounds for general LTI systems where eigenvalues of the model parameters are arbitrarily distributed in three different regimes, i.e., stable, marginally stable and explosive.

Nevertheless, system identification for LTV systems from a learning perspective remains not well understood. Existing work on LTV system identification relies on time-domain recursion algorithms \cite{bosse1998real,goethals2004recursive} and the frequency domain analysis \cite{xu2012identification}, subject to restrictive assumptions. 
There are also some prior work on transforming LTV systems into LTI systems to simplify the analysis. Tsatsanis et al. \cite{tsatsanis1993time} transform the LTV identification into a LTI identification problem  via expanding coefficients onto a finite set of wavelet basis sequences. By assuming that the model parameters follow a jump process model, \cite{ouyang2017learning} proposes a Thompson sampling-based learning algorithm for the Linear Quadratic control but with no performance guarantee for system identification.

\section{Problem Formulation}
In this section, we introduce an episodic block model for  LTV systems, building on which we explore  meta-learning for system identification. For ease of exposition, we study the system identification problem in the context of LQR with LTV dynamics approximated by the episodic block model. It is clear that our analysis techniques for meta-learning based system identification can be carried over to  general LTV systems.

\subsection{Episodic Block Model for LTV Systems}
Consider  a LTV system with the system dynamics satisfying $x_{t+1}=A_tx_t+B_tu_t+w_t$,
where $x_t\in\mathbb{R}^n$ is the system state with $x_0=0$, $u_t\in\mathbb{R}^m$ is the  action based on the state history $\{x_t, ..., x_0 \}$, and $w_t\sim \mathcal{N}(0, \sigma_w^2I_n)$ is the stochastic disturbance. $A_t$ and $B_t$ are the unknown model parameters at time $t$ with proper dimensions. 

 
\begin{figure}
\centering
\includegraphics[scale=0.43]{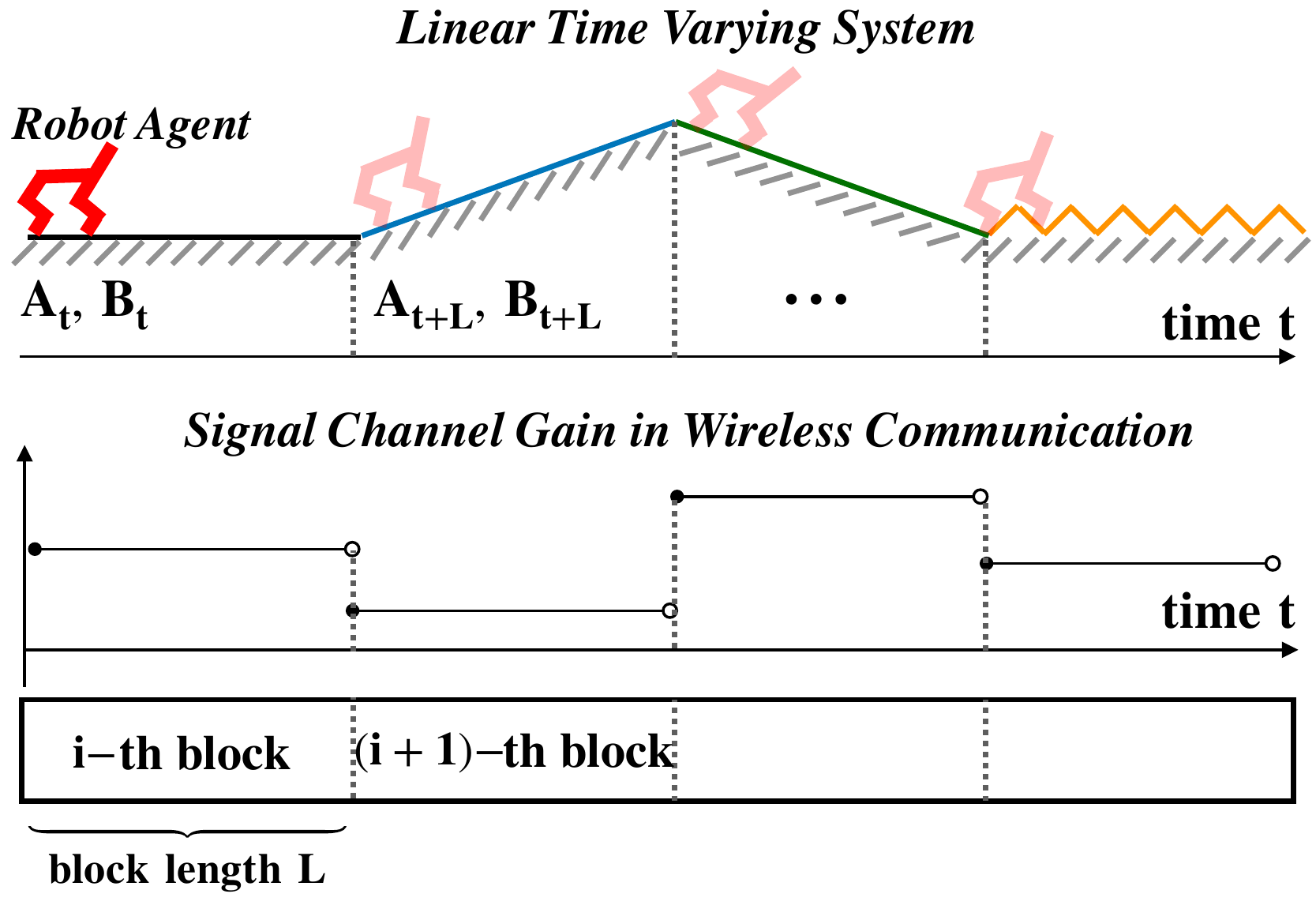}
\caption{An episodic block model for the LTV system.}
\label{Fig:block}
\end{figure}
Motivated by the widely used block fading channel model in time-varying wireless communication systems, we advocate an episodic block model for the LTV system, where the model parameters are assumed to be constant  within each episodic block of length $L$ but change from block to block. Note that the episodic block model with block length $L$ is used to approximate the underlying general LTV systems, in the same spirit as using piecewise linear functions to approximate \emph{any} nonlinear functions.
$L$ is a design parameter of choice, and the smaller $L$ is, the more accurate the approximation is.
For a fast changing LTV system, $L$ should be set small to guarantee  approximation accuracy.
Suppose there are $N$ episodic blocks.
As shown in Figure \ref{Fig:block}, $(A_t, B_t)= (A_i, B_i)$ for $t\in [(i-1)L+1, iL]$ and $i\in [1, N]$, and $(A_t, B_t)$ varies from one  block to another. In this work, we consider the general case where the model parameters can be correlated across blocks.
Further, departing from the standard assumption in meta-learning that the model parameters follow a known distribution \cite{finn2017model}, in this study we only assume that the parameter $(A_i, B_i)$ lies in a compact set $\Theta \subset \mathbb{R}^{(m+n)\times n}$ where $\{A_i\}$ are within unit disk, as in very recent work \cite{khodak2019provable}. And this assumption is made only to facilitate the analysis of meta-L based system identification, given that meta-learning does not need the knowledge of the compact set in implementation.

For convenience, define for each block $d$,
    $G_{t,d}=\sum_{i=0}^{t-1} A_d^iB_dB_d^T(A_d^i)^T$ and $F_{t,d}=\sum_{i=0}^{t-1} A_d^i(A_d^i)^T$,
as the finite time controllability Gramians to capture the magnitudes of the system excitations induced by the control inputs and the noise process. 
Let $\|\Bar{A}\|\triangleq\max{\|A_d\|}$, $\|\Bar{B}\|\triangleq\max{\|B_d\|}$, $\|\underline{A}\|\triangleq\min{\|A_d\|}$ and $\|\underline{B}\|\triangleq\min{\|B_d\|}$. 
 We further assume that there exist positive semi-definite matrices $G_t$, $F_t$, $\underline{G}_{t}$ and $\underline{F}_{t}$ such that
\begin{align*}
    \underline{G}_{t}\preceq G_{t,d} \preceq G_t, \; \underline{F}_{t}\preceq F_{t,d} \preceq F_t.
\end{align*}
We note that the above assumption would hold when $\{A_i\}$ are within unit disk (recall that the parameters $(A_i, B_i)$ lie in a compact set $\Theta$).  
As expected, the performance of meta-learning based system identification hinges  on the finite-time controllability Gramians \cite{simchowitz2018learning}.

\subsection{Meta-Learning based System Identification}

 Building on the episodic block model above, we next apply Meta-L to train a global model initialization $(A_{\theta}, B_{\theta})$ in an offline manner, with data from  historical episodes, such that $(A_{\theta}, B_{\theta})$ could be
   quickly adapted to learn the model parameters  $(\hat{A}_i, \hat{B}_i)$ for a new episodic block. As illustrated in Figure \ref{Fig:meta}, there are two main steps for Meta-L based system identification, namely 1) offline Meta-L and 2) online adaptation.
   
\begin{figure}
\centering
\includegraphics[scale=0.38]{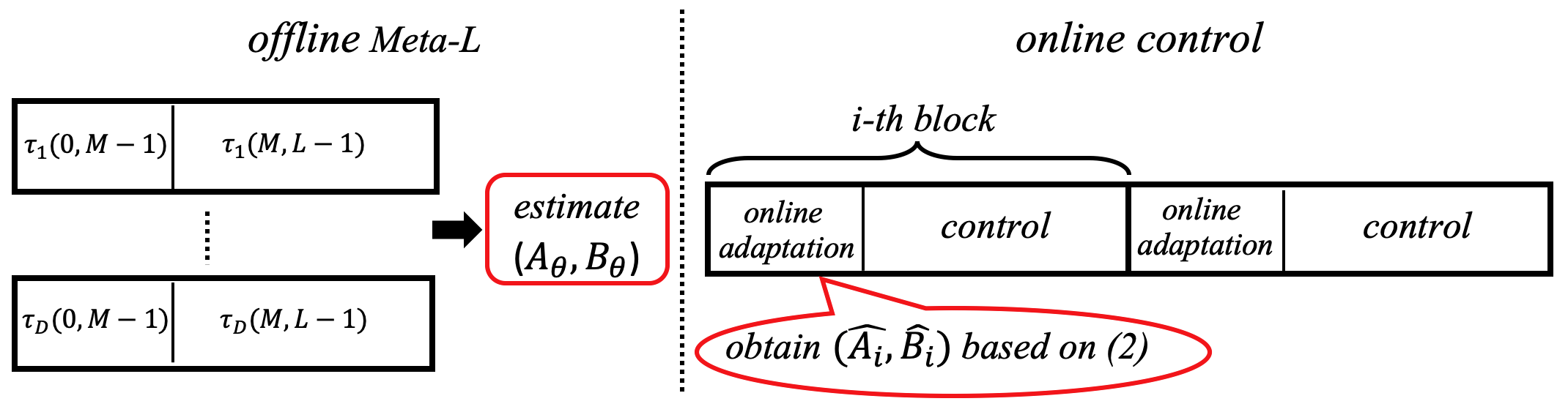}
\caption{Meta-L based system identification under the episodic block model.}
\label{Fig:meta}
\end{figure}

\emph{Offline Meta-L}: Without loss of generality, suppose there is  a sequence of arbitrary  \emph{realizations} over $D$ episodic trajectories $\{\tau_1, \tau_2, ..., \tau_D\}$, each with length $L$. The LTI dynamics for trajectory $\tau_d$ is given by 
    $x_{t+1,d}=A_dx_{t,d}+B_du_{t,d}+w_{t,d}$, for $t\in[0, L-1]$.
Following \cite{dean2017sample}, we assume that the system could be reset to zero initial state after each trajectory, \emph{for offline learning only}, to collect $D$ trajectories, i.e., $x_{0,d}=0$, and input $u_{t,d}\sim \mathcal{N}(0, \sigma_a^2I_m)$,  which not only simplifies the analysis but is also  important to deal with potentially unstable systems.

For convenience, we use  $\tau_d(i,j):=\{x_{i,d},u_{i,d},...,x_{j,d},u_{j,d},x_{j+1,d}\}$ to denote the sample trajectory from $i$ to $j$ within episodic block $d$. 
Let $z_{t,d}:=[(x_{t,d})^T, (u_{t,d})^T]^T$ and $\phi_{d}^T:=[A_d, B_d]$. The dynamics for block $d$ can be rewritten as 
    $x_{t+1,d}=\phi_d^T z_{t,d}+w_{t,d}$.
As is standard, we define the loss function as
    $\mathcal{L}(\tau(i, j), (A, B)):= \frac{1}{2}\sum_{k=i}^j \|x_{k+1}-Ax_k-Bu_k\|_2^2$.
Further, for each  block $d$, the samples collected in the first $M$ time steps, i.e., $\tau_d(0, M-1)$, are taken as the training set for that block, whereas the rest of the block, i.e., $\tau_d(M,L-1)$, serves as the testing set.
Denote $\hat{\phi}^T_{d}:=[\hat{A}_d, \hat{B}_d]$ as the estimated  model parameter of block $d$ and $\phi_{\theta}^T:=[A_{\theta}, B_{\theta}]$ as the Meta-L model initialization to be learned.
The {\it offline Meta-L problem} is given as follows:
\begin{small}
\begin{align}\label{problemformulation-1}
    &\underset{\phi_{\theta}}{\min}\ \ \ \ \ \ \ \ \ ~\sum\nolimits_{d=1}^D \mathcal{L}(\tau_d(M,L-1),\hat{\phi}_{d}),\\
    &\mbox{subject \  to}~ \ \ \hat{\phi}_{d}= \phi_{\theta}-\alpha\nabla \mathcal{L}(\tau_d(0, M-1),\phi_{\theta}),\nonumber
\end{align}\par
\end{small}
where $\alpha$ is the learning rate.
In general, the optimal Meta-L model initialization $\phi^*_{\theta}$ can be found as the solution to the above  optimization problem (see, e.g., \cite{finn2017model,lin2020collaborative}).

\emph{Online Adaptation}: As the system continuously evolves online (with no resetting), based on model initialization $\phi^*_{\theta}$, the model parameter $\hat{\phi}_i$ of a new episodic block $i$ can be obtained via {\it online adaptation} using  $M$ samples in the block, i.e.,
\begin{small}
\begin{align}\label{problemformulation-2}
    \hat{\phi}_i=\phi^*_{\theta}-\alpha \nabla\mathcal{L}(\tau_i(0,M-1),\phi^*_{\theta}).
\end{align}\par
\end{small}

In a nutshell, the Meta-L based system identification boils down to characterizing the solution to \eqref{problemformulation-1} and \eqref{problemformulation-2}. In particular, we seek to answer the following key questions:
What is the distance between the Meta-L initialization $\phi^*_{\theta}$ and the underlying true model parameter of a given block?
How do the training dataset size $M$, the testing dataset size $L-M$ and the number of trajectories $D$, impact this distance? What is the impact of the model similarity across different blocks on this distance? What is the estimation error after fast adaptation with a few samples only?

\section{Performance Analysis of Offline Meta-Learning}

Next, we  quantify the distance between the Meta-L initialization and the  model parameter of a given episodic block $j$, i.e., $\|\phi^*_{\theta}-\phi_j\|$. 
For convenience,  define,  for the training dataset of block $d$, 
\begin{align*}
x_{tr,d}:=[x_{1,d}, ..., x_{M,d}], \;
    z_{tr,d}:=[z_{0,d}, ..., z_{M-1,d}], \;  w_{tr,d}:=[w_{0,d}, ..., w_{M-1,d}]; 
\end{align*}\par
 and for the testing dataset of block $d$,
\[  
 x_{te,d}
 :=[x_{M+1,d}, ..., x_{L,d}], \;
 z_{te,d}:=[z_{M,d}, ..., z_{L-1,d}], \; 
 w_{te,d}:=[w_{M,d}, ..., w_{L-1,d}];
 \]
\begin{small}
\[ \Gamma_{t,d}=\diag(\sigma_a^2 G_{t,d}+\sigma_w^2 F_{t,d},\sigma_a^2 I_m), \;
 \Gamma_{t}=\diag(\sigma_a^2 G_{t}+\sigma_w^2 F_{t},\sigma_a^2 I_m), \;  
\underline{\Gamma}_{t}=\diag(\sigma_a^2 \underline{G}_{t}+\sigma_w^2 \underline{F}_{t},\sigma_a^2 I_m).
\]\par
\end{small}
Further, define the following matrices:
\begin{small}
\begin{align*}
&  Z_d= ( I- \alpha z_{tr,d}z_{tr,d}^T)z_{te,d}, Z=[Z_1,...,Z_D];\
\Pi_d=(z_{te,d}^T - \alpha z_{te,d}^T z_{tr,d} z_{tr,d}^T)\phi_d, \Pi=[\Pi_1^T,...,\Pi_D^T]^T  ;\\
&W_d=w_{te,d}^T - \alpha z_{te,d}^Tz_{tr,d}w_{tr,d}^T,  W=[W_1^T,..., W_D^T]^T;\
\Tilde{W}_d=x_{te,d}^T - \alpha z_{te,d}^T z_{tr,d} x_{tr,d}^T, \Tilde{W}=[\Tilde{W}_1^T,...,\Tilde{W}_D^T]^T.
\end{align*}\par
\end{small}
The following result characterizes the  Meta-L model initialization in terms of $Z$ and $\Tilde{W}$. 
\begin{restatable}{lem}{lemmasolution}
\label{lem:lem.solu}
For a matrix $Z$ we denote by $Z^{\dagger}$ its pseudo-inverse. The solution to the problem \eqref{problemformulation-1} is
\begin{align*}
    \phi^*_{\theta}=(Z^T)^{\dagger}\Tilde{W}=(Z^T)^{\dagger}(\Pi+W).
\end{align*}\par
\end{restatable}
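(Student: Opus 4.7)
The plan is to reduce the bi-level problem in \eqref{problemformulation-1} to a standard linear least-squares problem in $\phi_\theta$ by explicitly evaluating the inner gradient step, stacking per-block quantities across the $D$ trajectories, and then using the LTV dynamics to split the residual into signal and noise parts. Once this reduction is in place, the claimed formula falls out of the Moore--Penrose characterization of the minimum-norm least-squares solution.

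First I would rewrite the inner training loss in compact matrix form. Using $z_{t,d}=[x_{t,d}^T,u_{t,d}^T]^T$ and $\phi_\theta^T=[A_\theta,B_\theta]$, we have $\mathcal{L}(\tau_d(0,M-1),\phi_\theta)=\tfrac{1}{2}\|x_{tr,d}^T-z_{tr,d}^T\phi_\theta\|_F^2$, so its gradient is $\nabla_{\phi_\theta}\mathcal{L}=z_{tr,d}z_{tr,d}^T\phi_\theta-z_{tr,d}x_{tr,d}^T$. The inner update therefore becomes
\[
\hat{\phi}_d \;=\; \bigl(I-\alpha\,z_{tr,d}z_{tr,d}^T\bigr)\phi_\theta + \alpha\,z_{tr,d}x_{tr,d}^T.
\]

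Next I would substitute $\hat{\phi}_d$ into the outer test loss $\tfrac{1}{2}\|x_{te,d}^T-z_{te,d}^T\hat{\phi}_d\|_F^2$ and collect terms. By symmetry of $z_{tr,d}z_{tr,d}^T$, the coefficient of $\phi_\theta$ is precisely $Z_d^T=z_{te,d}^T(I-\alpha\,z_{tr,d}z_{tr,d}^T)$, while the $\phi_\theta$-free remainder is $\Tilde{W}_d=x_{te,d}^T-\alpha\,z_{te,d}^Tz_{tr,d}x_{tr,d}^T$, matching the paper's definitions exactly. Summing over $d$ and stacking vertically, the sum of per-block losses collapses into the single unconstrained quadratic
\[
\min_{\phi_\theta}\; \tfrac{1}{2}\bigl\|\Tilde{W}-Z^T\phi_\theta\bigr\|_F^2,
\]
whose minimum-norm minimizer is, by definition of the Moore--Penrose pseudo-inverse, $\phi_\theta^*=(Z^T)^\dagger\Tilde{W}$. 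This yields the first equality in the statement.

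Finally, to obtain $\Tilde{W}=\Pi+W$ I would invoke the LTI dynamics within each block, $x_{t+1,d}=\phi_d^Tz_{t,d}+w_{t,d}$, which in stacked form gives $x_{tr,d}=\phi_d^Tz_{tr,d}+w_{tr,d}$ and $x_{te,d}=\phi_d^Tz_{te,d}+w_{te,d}$. Substituting these two identities into the definition of $\Tilde{W}_d$ and grouping the $\phi_d$-linear part from the noise terms yields $\Tilde{W}_d=\Pi_d+W_d$ per block; vertical stacking then gives $\Tilde{W}=\Pi+W$, completing the second equality. The entire argument is essentially bookkeeping; the only mild subtlety I anticipate is keeping transposes and block-concatenation orders consistent, and noting that $(Z^T)^\dagger$ must be read as the Moore--Penrose pseudo-inverse so that the formula remains valid even when $ZZ^T$ is rank-deficient.
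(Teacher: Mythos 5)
Your proposal is correct and follows essentially the same route as the paper's proof: evaluating the inner gradient step, substituting into the outer loss so the objective becomes the quadratic $\tfrac{1}{2}\|\Tilde{W}-Z^T\phi_{\theta}\|_F^2$ (the paper equivalently sets the chain-rule gradient to zero to get the normal equations $ZZ^T\phi_{\theta}=Z\Tilde{W}$), and then using the dynamics $x_{\cdot,d}=\phi_d^Tz_{\cdot,d}+w_{\cdot,d}$ to split $\Tilde{W}_d$ into $\Pi_d+W_d$. Your remark about the minimum-norm reading of $(Z^T)^{\dagger}$ in the rank-deficient case is a fair clarification consistent with the paper's statement.
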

 
It can be seen from  Lemma~\ref{lem:lem.solu} that the Meta-L based model initialization $\phi^*_{\theta}$ can be viewed as a weighted sum of the true model parameters for all $D$ trajectories, with perturbation incurred by the noise process.
Based on Lemma~\ref{lem:lem.solu}, we next investigate the distance between meta-initialization $\phi^*_{\theta}$ and the  true model $\phi_j$, $\|\phi^*_{\theta}-\phi_j\|$, aiming to quantify the impact of the sample size and the model similarity on the estimation error.  

To this end, we first apply  SVD to $Z^T$, i.e., $Z^T=U\Sigma V^T$ where $\Sigma, V\in\mathbb{R}^{(m+n)\times (m+n)}$ and $U\in\mathbb{R}^{D(L-M)\times (m+n)}$. Through careful manipulation,
we  have that
\begin{small}
\begin{align*}
\|\phi^*_{\theta}-\phi_j\|
=\|(Z^T)^{\dagger}(P+Q_w-\alpha Q_0)\|
\leq& \frac{1}{\sqrt{\lambda_{min}(ZZ^T)}}(\|U^TP\|+\|U^TQ_w\|+\alpha \|U^TQ_0\|),
\end{align*}\par
\end{small}
where {\small $P\triangleq \begin{bmatrix}
(z_{te,1}^T - \alpha z_{te,1}^T z_{tr,1} z_{tr,1}^T)(\phi_{1}-\phi_j) \\
(z_{te,2}^T - \alpha z_{te,2}^T z_{tr,2} z_{tr,2}^T)(\phi_{2}-\phi_j) \\
\vdots\\
(z_{te,D}^T - \alpha z_{te,D}^T z_{tr,D} z_{tr,D}^T)(\phi_{D} -\phi_j)
\end{bmatrix}$}, {\small $Q_w\triangleq \begin{bmatrix}
w_{te,1}^T \\
w_{te,2}^T \\
\vdots\\
w_{te,D}^T 
\end{bmatrix}$}
and {\small $Q_0\triangleq \begin{bmatrix}
 z_{te,1}^Tz_{tr,1}w_{tr,1}^T\\
 z_{te,2}^Tz_{tr,2}w_{tr,2}^T\\
\vdots\\
z_{te,D}^Tz_{tr,D}w_{tr,D}^T
\end{bmatrix}$}. 

A few key observations are in order. 
Intuitively, $U^TP$ encapsulates the impact of the model similarity across different blocks, whereas $U^TQ_w$ and $U^TQ_0$ capture the impact of the noise process from the testing dataset and the training dataset, respectively. 
 To find tight upper bounds on $\|\phi^*_{\theta}-\phi_j\|$, there are a few  challenges originating from the following facts:
1) (\emph{sample correlation}) Not only the elements of $Z$ are dependent, $Z$ is also correlated with $P$, $Q_w$ and $Q_0$;
2) (\emph{small episodic block size}) for a LTV system, the block size $L$ and the training dataset size $M$ could be small, which impedes the usage of the standard tools for analyzing in-block properties  based on large sample sizes.

To tackle these challenges, we  devise a new two-scale martingale small-ball approach to deal with the correlation among elements of $Z$ and small block sizes, and then use a  martingale-Chernoff bound method  \cite{simchowitz2018learning} to analyze the correlation between the system state and the random disturbance. 

\subsection{Lower Bound on $\lambda_{min}(ZZ^T)$: A Two-Scale Small-Ball Approach}
Observe that
\begin{small}
\begin{align}\label{lbzz1}
    ZZ^T=\sum\nolimits_{d=1}^D ( I- \alpha z_{tr,d}z_{tr,d}^T)z_{te,d}z^T_{te,d}(I- \alpha z_{tr,d}z_{tr,d}^T).
\end{align}\par
\end{small}
\begin{figure}
    \centering
    \includegraphics[width=0.46\textwidth]{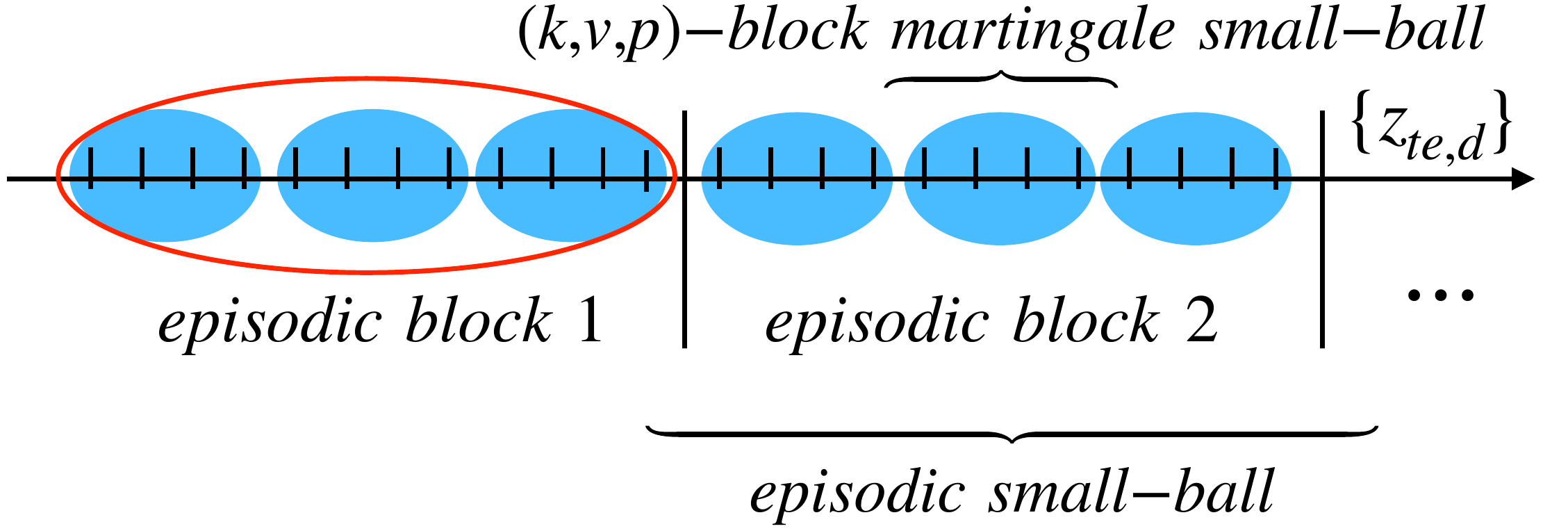}
    \caption{A two-scale small-ball view of episodic blocks.}
    \label{fig:twoscale}
\end{figure}
It follows  that it would suffice to find a lower bound for $\sum_{d=1}^D z_{te,d}z^T_{te,d}$,  provided that we can also find a  lower bound on $I- \alpha z_{tr,d}z_{tr,d}^T$ uniformly for all $D$ trajectories. Observe that $\sum_{d=1}^D z_{te,d}z^T_{te,d}$ is the sum of  sample covariances over $D$ episodic blocks, each being an independent martingale process \emph{conditioned on the realized model parameters in $D$ historical blocks}. With this insight,  we devise a two-scale small-ball method, in which a block martingale small ball method is used to handle the correlation structure within each block and then Mendelson's small ball method is applied across episodic blocks jointly to find a lower bound accordingly. Such a two-scale approach enables us to quantify the correlation structure with each block and then exploit the conditional independence of observations across blocks, given realized model parameters,  thereby yielding a sharper lower bound.

Specifically,  as illustrated in Figure \ref{fig:twoscale}, we treat the testing sequence $\{z_{te,d} \}$ for all $D$ blocks together as a `super-sequence'. Then  this `super-sequence' can be treated as a combination of $D$ independent martingale processes where within each block the system state sequence is a martingale process with filtration given by $\mathcal{F}_t:=\sigma(z_{M,d},...,z_{t,d}, w_{M,d},..., w_{t,d})$. By dividing the testing sequence $z_{te,d}$ of each block $d$ as a set of mini-blocks with block size $k$, a martingale small-ball method is applied to evaluate the sample covariances within the entire testing sequence. For the sake of completeness, we restate a generalized martingale small-ball condition \cite{simchowitz2018learning}  (cf.~\cite{mendelson2014learning} which is developed to deal with the correlation within  each block).

\begin{definition}[Martingale Small-Ball]
Let $(Z_t)_{t\geq 1}$ be an $\{\mathcal{F}_t\}_{t\geq 1}$-adapted random process taking values in $\mathbb{R}$. We say $(Z_t)_{t\geq 1}$ satisfies the $(k,\nu,p)$-block martingale small-ball (BMSB) condition if, for any $j\geq 0$, one has $\frac{1}{k}\sum_{i=1}^k \mathbb{P}(|Z_{j+i}|\geq\nu)\geq p$ almost surely. Given a process $(X_t)_{t\geq 1}$ taking values in $\mathbb{R}^d$, we say that it satisfies the $(k, \Gamma_{sb}, p)$-BMSB condition for $\Gamma_{sb}\succ 0$ if, for any fixed $u\in\mathcal{S}^{d-1}$, the process $Z_t:=\langle u, X_t \rangle$ satisfies $(k, \sqrt{u^T\Gamma_{sb} u}, p)$-BMSB.
\end{definition}

 It can be shown that for each block, $\{z_{t,d}\}_{t=M}^{L-1}$ satisfies the $(k,\Gamma_{\lfloor k/2\rfloor,d},p)$-BMSB condition where $k\in[1, \lfloor L-M\rfloor/2]$.
 Next, based on the observation that in fact each block as a whole  also satisfies the small-ball condition, we  apply Mendelson's small-ball method to evaluate the sample covariances of the entire super-sequence. 
define
 \[\Bar{\lambda}
 \triangleq M^3\|\Bar{B}\|^2\Big(1+3\sqrt{\log\frac{10DM}{\delta}}\Big)^2\max\{m\sigma_a^2,n\sigma_w^2\}.\]
It is clear that $\Bar{\lambda}$ is of order $\Tilde{O}\left(M^3\max\{m\sigma_a^2,n\sigma_w^2\}\right)$, where the term $\Tilde{O}(\cdot)$ encompasses some constants and ploylog factors.
With this two-scale small-ball approach,
we can obtain the following result about the lower bound on $\lambda_{min}(ZZ^T)$ as follows.
\begin{restatable}{lem}{lemmaa}
\label{lem:lem.2}
Suppose that the learning rate satisfies that
$0<\alpha<1/\Bar{\lambda}$,
and for any $k\in [1,\lfloor L-M\rfloor/2]$, the number of blocks $D$, training dataset size per block $M$ and the block size $L$ satisfy that
\begin{small}
\begin{align*}
    D\left(1-e^{-\frac{p^2\lfloor (L-M)/k\rfloor}{8}}\right)^2
    \geq D_{\lambda}
    =\Tilde{O}\left(\frac{(m+n)\lambda_{max}^2(\Gamma_{\lfloor k/2\rfloor})}{\lambda^2_{min}(\underline{\Gamma}_{\lfloor k/2\rfloor})}\right).
\end{align*}\par
\end{small}
Then, for any $\delta\in(0,1)$, it follows that with probability  $1-\delta$, 
\begin{small}
\begin{align*}
    \lambda_{min}(ZZ^T)\geq& \frac{D(L-M)p^2(1-e^{-\frac{p^2\lfloor (L-M)/k\rfloor}{8}})(1-\alpha\Bar{\lambda})^2\lambda_{min}(\underline{\Gamma}_{\lfloor k/2\rfloor})}{48}.
\end{align*}\par
\end{small}
\end{restatable}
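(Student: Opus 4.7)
The plan is to exploit the factorization $Z_d = M_d z_{te,d}$ with $M_d := I - \alpha z_{tr,d} z_{tr,d}^T$, bounding the two factors on separate probability scales. For any unit $u \in \mathbb{R}^{m+n}$ one has $u^T ZZ^T u = \sum_{d=1}^D \|z_{te,d}^T M_d u\|^2$, so the Courant--Fischer identity $\lambda_{min}(ZZ^T) = \min_{\|u\|=1} u^T ZZ^T u$ reduces the proof to (i) a uniform-in-$d$ lower bound $M_d \succeq (1-\alpha\bar\lambda)I$, combined with (ii) a uniform-in-$u$ small-ball lower bound on $\sum_d \|z_{te,d}^T v_d\|^2$, where $v_d := M_d u$.

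For step (i), I would establish $\|z_{tr,d}\|^2 \leq \bar\lambda$ uniformly over $d\in[D]$. Because $z_{0,d}=0$, the spectral radii of $\{A_d\}$ lie within the unit disk, and $\|\bar B\|<\infty$ by compactness of $\Theta$, the state recursion yields $\|z_{t,d}\|\lesssim t\,\|\bar B\|(\max_i\|u_{i,d}\|+\max_i\|w_{i,d}\|)$; sub-Gaussian tails for the Gaussian inputs and noise combined with a union bound over $(d,t)$ at a $\delta/3$ budget reproduce exactly the stated $\bar\lambda=\tilde O(M^3\|\bar B\|^2\max\{m\sigma_a^2,n\sigma_w^2\})$. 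Together with $\alpha<1/\bar\lambda$, this gives $(1-\alpha\bar\lambda)I\preceq M_d\preceq I$ for every $d$, hence $\|M_d u\|\geq 1-\alpha\bar\lambda$ for all unit $u$.

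For step (ii), I would condition on the training filtration $\mathcal F_{tr}$ and on the realized parameters $\{\phi_d\}_{d=1}^D$. Under this conditioning each $M_d$ is deterministic and the testing window $\{z_{t,d}\}_{t=M}^{L-1}$ evolves as a linear system with fresh Gaussian excitations starting from the fixed state $x_{M,d}$; Paley--Zygmund then implies that for every fixed unit vector $w$, the scalar process $\langle w,z_{t,d}\rangle$ satisfies the $(k,\sqrt{w^T\Gamma_{\lfloor k/2\rfloor,d}w},p)$-BMSB condition with a universal constant $p$. Instantiating this with $w=v_d/\|v_d\|$ (conditionally deterministic) across the $\lfloor (L-M)/k\rfloor$ mini-blocks of the testing window and invoking the block-martingale Azuma--Chernoff bound of Simchowitz et al.\ yields
\[
\|z_{te,d}^T M_d u\|^2 \;\geq\; \tfrac{p(L-M)}{8}\,v_d^T\underline\Gamma_{\lfloor k/2\rfloor}v_d \;\geq\; \tfrac{p(L-M)}{8}(1-\alpha\bar\lambda)^2\,\lambda_{min}(\underline\Gamma_{\lfloor k/2\rfloor})
\]
with conditional probability at least $q:=1-e^{-p^2\lfloor(L-M)/k\rfloor/8}$. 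Crucially, because we have conditioned on all $\phi_d$'s, the per-block success events are \emph{independent} across $d$, so a Bernoulli/Chernoff bound guarantees that at least $Dq/2$ blocks succeed with probability $1-e^{-Dq/8}$; summing the surviving blocks reproduces the target $D(L-M)p^2(1-\alpha\bar\lambda)^2\lambda_{min}(\underline\Gamma_{\lfloor k/2\rfloor})/48$ scaling up to absolute constants.

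The final step lifts the per-$u$ bound to the minimum eigenvalue via an $\epsilon$-net over $S^{m+n-1}$ of cardinality $(3/\epsilon)^{m+n}$, using a matching upper bound $\|\sum_d z_{te,d}z_{te,d}^T\|_{op}\lesssim D(L-M)\lambda_{max}(\Gamma_{\lfloor k/2\rfloor})$ (standard matrix-Bernstein) to absorb the discretization error; choosing $\epsilon$ of order $\lambda_{min}(\underline\Gamma_{\lfloor k/2\rfloor})/\lambda_{max}(\Gamma_{\lfloor k/2\rfloor})$ produces an $(m+n)\log(\lambda_{max}/\lambda_{min})$ net cost, matched against the Chernoff exponent precisely by the hypothesis $D(1-e^{-p^2\lfloor(L-M)/k\rfloor/8})^2\geq D_\lambda$. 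The main obstacle I anticipate is threading the BMSB machinery through two layers of conditioning: the inner conditioning must not disrupt the martingale structure of the testing window, while the outer conditioning on $\{\phi_d\}$ must simultaneously render each $M_d$ deterministic, preserve Gaussian increments so that Paley--Zygmund applies to $\langle v_d/\|v_d\|, z_{t,d}\rangle$, and decouple the per-block success events so that the cross-block Bernoulli aggregation remains valid despite arbitrary correlation among the $\phi_d$'s -- this is exactly what the proposed two-scale small-ball construction is engineered to deliver.
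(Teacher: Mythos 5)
Your proposal follows essentially the same two-scale small-ball route as the paper's proof: a uniform high-probability bound $\lambda_{max}(z_{tr,d}z_{tr,d}^T)\leq\bar\lambda$ producing the $(1-\alpha\bar\lambda)^2$ factor, a per-block BMSB bound (valid for any initial state, conditioned on the realized $\{\phi_d\}$ and training data) combined with the block-martingale small-ball estimate of Simchowitz et al., an independent Bernoulli/Hoeffding aggregation across the $D$ blocks, and a $1/4$-net whose cardinality is matched against the cross-block Chernoff exponent exactly through the $D_\lambda$ condition. The only deviations are minor execution details: the paper separates the training weights by proving $\lambda_{min}(ZZ^T)\geq\frac{(1-\alpha\bar\lambda)^2}{2}\lambda_{min}\bigl(\sum_d z_{te,d}z_{te,d}^T\bigr)$ via an $\epsilon$-perturbation argument (which is where the constant $48$ comes from) rather than tilting the small-ball directions by $M_d u$ as you do, and it obtains the covariance upper bound needed for the net by a Markov/trace argument at scale $\Gamma_{L-1}$, whereas your appeal to matrix Bernstein at scale $\Gamma_{\lfloor k/2\rfloor}$ uses the wrong (small) Gramian and would not apply directly to the correlated, unbounded within-block samples, though this only affects logarithmic factors inside $\Tilde{O}(\cdot)$.
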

 
The dependence on $\lambda_{min}(\underline{\Gamma}_{\lfloor k/2\rfloor})$ directly manifests the impact of the system ``excitability" on the lower bound of the minimum eigenvalue of the sample covariance matrix. In particular, the more the system is excited by the noise process, the larger $\lambda_{min}(ZZ^T)$ is. As expected, note that the impact of $\lambda_{max}(\Gamma_{M-1})$, i.e., the Gramians of the training process, can be controlled via tuning the learning rate $\alpha$. 

\subsection{Upper Bounds on $\|U^TP\|$, $\|U^TQ_w\|$ and $\|U^TQ_0\|$}

\emph{Upper bound on $\|U^TP\|$}: The term $\|U^TP\|$ captures the impact of the model similarity across different blocks on the estimation gap. To obtain a fundamental understanding of this impact, we impose the following assumption.
\begin{assumption}\label{assum1}
There exists a positive number $D_0$ such that when $D\geq D_0$, for any sequence $\{\phi_d, d=1, \ldots, D\}$ and $\phi_i$ in the compact set $\Theta $ the following inequalities hold:
\begin{small}
\begin{align}
    \frac{1}{D}\sum\nolimits_{d=1}^D \|\phi_d-\phi_i\|\leq \eta,
    \ 
    \frac{1}{D}\sum\nolimits_{d=1}^D\|\phi_d-\phi_i\|^2\leq V_{\phi}.
    \label{assumption1-condition}
    \end{align}\par
\end{small}
\end{assumption}

Assumption \ref{assum1} encapsulates the model similarity between offline episodic blocks  and any  block in terms of the average distance  (and the corresponding variance) between model parameters. This condition is mild in the sense that there  exist  $\eta$ and $V_{\phi}$ for (\ref{assumption1-condition}) to hold with high probability when $\{\phi_i\}$ follows some distribution within the support in $\Theta$. For instance, when $\phi_i$  follows a uniform   distribution in  $\Theta$, an upper bound on $\mathbb{E}[\|\phi_d-\phi_i\|]$ can be found in \cite{burgstaller2009average}. 
Based on the Law of Large Numbers, the sample average $\frac{1}{D}\sum_{d=1}^D \|\phi_d-\phi_i\|$ is close to the expectation $\mathbb{E}[\|\phi_d-\phi_i\|]$ when $D$ is large enough.
Under Assumption 1,  we have the following upper bound  on $\|U^TP\|$. 

\begin{restatable}{lem}{lemmab}
\label{lem:lem.3}
Suppose Assumption 1 holds. With probability $1-\delta$, the following inequality holds:
\begin{small}
\begin{align*}
   \|U^TP\|\leq \frac{D\eta\lambda_{max}\left(L\Gamma_{L-1}-\frac{\alpha}{2}\left(\sum_{t=0}^{M-1} \underline{\Gamma}_{t}\right)^2\right)+(L-M)\sqrt{\frac{1}{\delta}}\sqrt{C_vDV_{\phi}}}{\sqrt{\lambda_{min}(ZZ^T)}},
\end{align*}\par
\end{small}
where $C_v$ is some constant.
\end{restatable}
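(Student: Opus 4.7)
The plan is to first reduce the problem of bounding $\|U^T P\|$ to bounding $\|ZP\|$, which is the origin of the $1/\sqrt{\lambda_{min}(ZZ^T)}$ in the claim. Writing the SVD $Z^T = U\Sigma V^T$ with $V$ orthogonal and $\Sigma$ diagonal of positive singular values (guaranteed by Lemma~\ref{lem:lem.2} since $\lambda_{min}(ZZ^T)>0$), we have $U^T P = \Sigma^{-1} V^T (ZP)$, so
\[
\|U^T P\| \;\le\; \|\Sigma^{-1}\|_{op}\,\|V^T\|_{op}\,\|ZP\| \;=\; \frac{\|ZP\|}{\sqrt{\lambda_{min}(ZZ^T)}}.
\]
A direct expansion of the definitions of $Z$ and $P$ gives
\[
ZP \;=\; \sum_{d=1}^{D} M_d\,(\phi_d - \phi_j), \qquad M_d := (I - \alpha z_{tr,d} z_{tr,d}^T)\, z_{te,d} z_{te,d}^T\, (I - \alpha z_{tr,d} z_{tr,d}^T) \;\succeq\; 0,
\]
so the whole task reduces to controlling this PSD-weighted sum of the parameter gaps $\phi_d-\phi_j$.

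I would next decompose $M_d = \bar M_d + (M_d - \bar M_d)$ with $\bar M_d := \mathbb{E}[M_d \mid \phi_d, \phi_j]$. For the bias contribution, the triangle inequality followed by the first inequality of Assumption~\ref{assum1} yields
\[
\Bigl\|\sum_d \bar M_d (\phi_d-\phi_j)\Bigr\| \;\le\; \max_d\lambda_{max}(\bar M_d)\cdot\sum_d\|\phi_d-\phi_j\| \;\le\; \lambda_{max}(\bar M)\cdot D\eta,
\]
where $\bar M$ is a deterministic PSD envelope for $\bar M_d$. Expanding $M_d$ and taking conditional expectations, the $z_{te,d}z_{te,d}^T$ piece contributes at most $\sum_{t=M}^{L-1}\Gamma_{t,d}\preceq L\,\Gamma_{L-1}$ via the Gramian upper bound $\Gamma_{t,d}\preceq \Gamma_t$. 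The $\alpha$-order cross terms, after symmetrization and using the lower Gramian bound $\Gamma_{t,d}\succeq\underline\Gamma_t$, together with the $\alpha^2$ remainder being dominated thanks to the step-size constraint $\alpha\bar\lambda<1$ from Lemma~\ref{lem:lem.2}, produce the subtracted quadratic $-\tfrac{\alpha}{2}(\sum_{t=0}^{M-1}\underline\Gamma_t)^2$, matching the envelope in the claim.

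For the fluctuation part, the $D$ blocks are conditionally independent given $\{\phi_d\}_{d=1}^{D}$ and $\phi_j$, so the cross terms vanish in the second moment:
\[
\mathbb{E}\!\left[\Bigl\|\sum_d (M_d-\bar M_d)(\phi_d-\phi_j)\Bigr\|_F^2 \,\Big|\, \phi_{1:D},\phi_j\right] \;=\; \sum_d (\phi_d-\phi_j)^T\,\mathbb{E}\!\left[(M_d-\bar M_d)^2 \mid \phi_d\right](\phi_d-\phi_j).
\]
Since the operator norm of $M_d$ is $O(L-M)$ (its trace is a sum of $L-M$ quadratic-in-noise terms, modulated by the bounded factor $I-\alpha z_{tr,d}z_{tr,d}^T$), the eigenvalues of $\mathbb{E}[(M_d-\bar M_d)^2\mid\phi_d]$ are at most $C_v(L-M)^2$; pulling this out and applying the second inequality of Assumption~\ref{assum1} gives an expected-squared bound of $C_v(L-M)^2\,DV_\phi$, and Markov's inequality converts it into the high-probability bound $(L-M)\sqrt{C_v DV_\phi/\delta}$. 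Adding the bias and fluctuation pieces and dividing by $\sqrt{\lambda_{min}(ZZ^T)}$ yields the stated inequality. The main obstacle is the bias-envelope computation: because $z_{te,d}$ depends on $z_{tr,d}$ through the state recursion, the cross moments $\mathbb{E}[z_{tr,d}z_{tr,d}^T\,z_{te,d}z_{te,d}^T\mid\phi_d]$ do not split into a product of Gramians, so extracting the sign-definite leading correction $-\tfrac{\alpha}{2}(\sum_t\underline\Gamma_t)^2$ will require Wick-type computations on the jointly Gaussian noise/input sequence together with careful PSD absorption of the $\alpha^2$ remainders permitted by Lemma~\ref{lem:lem.2}.
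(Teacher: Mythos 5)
Your overall skeleton matches the paper's: reduce $\|U^TP\|$ to $\|ZP\|/\sqrt{\lambda_{min}(ZZ^T)}$ (the paper does this via $\sup_{u,v} u^TZPv/\|Z^Tu\|$, which is equivalent), split $Z_dZ_d^T(\phi_d-\phi_j)$ into a mean (bias) part controlled by $\eta$ and a fluctuation part controlled by $V_\phi$ via a second-moment/Chebyshev--Markov argument with conditional independence across blocks, and absorb fourth-moment constants into $C_v$. The fluctuation half of your argument is essentially the paper's (the paper bounds $\var[Y_d]\le \|\phi_d-\phi_j\|^2\|\mathbb{E}[z_{te,d}z_{te,d}^Tz_{te,d}z_{te,d}^T]\|\le C_v(L-M)^2\|\phi_d-\phi_j\|^2$ by explicit Gaussian moment computations and then applies Chebyshev), and your Frobenius-norm/Markov variant is acceptable modulo making the $C_v(L-M)^2$ moment bound precise.

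The genuine gap is the bias envelope $\mathbb{E}[Z_dZ_d^T]\preceq L\Gamma_{L-1}-\frac{\alpha}{2}\bigl(\sum_{t=0}^{M-1}\underline{\Gamma}_t\bigr)^2$, which is the heart of the lemma and which you defer to ``Wick-type computations'' on the cross moments $\mathbb{E}[z_{tr,d}z_{tr,d}^T\,z_{te,d}z_{te,d}^T]$. That route is both unnecessary and unlikely to succeed as described: the symmetrized cross term $-\alpha\,\mathbb{E}[z_{tr}z_{tr}^Tz_{te}z_{te}^T+z_{te}z_{te}^Tz_{tr}z_{tr}^T]$ is not sign-definite (a symmetrized product of PSD matrices need not be PSD), the $\alpha^2 z_{tr}z_{tr}^Tz_{te}z_{te}^Tz_{tr}z_{tr}^T$ remainder is PSD and so cannot simply be ``dominated'' away when seeking an \emph{upper} envelope, and the target correction involves only training Gramians --- a strong hint that it does not come from training--testing cross moments at all. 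The paper avoids every cross moment by a pathwise argument: from the PSD domination $(I-\alpha z_{tr}z_{tr}^T)(z_{te}z_{te}^T+\epsilon_3 I)(I-\alpha z_{tr}z_{tr}^T)\preceq z_{te}z_{te}^T+\epsilon_3 I$ one gets $Z_dZ_d^T\preceq z_{te}z_{te}^T+\epsilon_3[2\alpha z_{tr}z_{tr}^T-\alpha^2(z_{tr}z_{tr}^T)^2]$, and choosing $\epsilon_3=\tfrac{1}{2\alpha}$ yields the deterministic bound $Z_dZ_d^T\preceq z_{te}z_{te}^T+z_{tr}z_{tr}^T-\frac{\alpha}{2}(z_{tr}z_{tr}^T)^2$; taking expectations, $\mathbb{E}[z_dz_d^T]=\sum_{t=0}^{L-1}\Gamma_{t,d}\preceq L\Gamma_{L-1}$ and Jensen's inequality $\mathbb{E}[(z_{tr}z_{tr}^T)^2]\succeq(\mathbb{E}[z_{tr}z_{tr}^T])^2\succeq\bigl(\sum_{t=0}^{M-1}\underline{\Gamma}_t\bigr)^2$ give the stated envelope (note the $+z_{tr}z_{tr}^T$ term is why the envelope has $L\Gamma_{L-1}$ rather than $(L-M)\Gamma_{L-1}$). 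Without this (or an equivalent) device, your proposal does not establish the specific matrix appearing in the lemma.
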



We relegate the bounds on $\|U^TQ_w\|$ and $\|U^TQ_0\|$ to the appendix, and state the main ideas here. 

\emph{Upper bound on $\|U^TQ_w\|$}: Along the lines in \cite{simchowitz2018learning}, we  study the quantities with $U$ in terms of $Z$, since $\|U^TQ_w\|\leq\sup_{v\in\mathcal{S}^{n-1},u\in\mathcal{S}^{m+n-1}\setminus \{0\}}\frac{u^TZQ_wv}{\|Z^Tu\|}$, with $\mathcal{S}^{n-1}$ being the unit sphere in $\mathbb{R}^n$.
With a closer look in $Z$, it can be seen that the weighted system state sequence remains a martingale process for each block.
The key idea here is to control the deviation of sum of independent sub-Gaussian martingale sequences by using the martingale-Chernoff bound approach (cf.~ \cite{simchowitz2018learning}). 

\emph{Upper bound on $\|U^TQ_0\|$}: Each row block $Q_{0,d}=z_{te,d}^Tz_{tr,d}w_{tr,d}^T$ in the matrix $Q_0$ is intimately related to how the training noise $w_{tr,d}^T$ is amplified during the system evolution, i.e.,  how the system is excited by  $w_{tr,d}^T$. Once again, 
we study this error term with $Z$. 
However, the martingale-Chernoff bound approach is not applicable here due to the complicated correlation structure between $z_{te,d}$ and $w_{tr,d}$ within each block. Therefore, we  develop bounds for $u^TZQ_0v$ and $\|Z^Tu\|$ separately.

\subsection{Summary of Main Results}
Summarizing, we have the following theorem on the distance between the Meta-L based model initialization $\phi^*_{\theta}$ and the true  model parameter $\phi_j$ for a given episodic block $j$.
\begin{restatable}{theorem}{theorema}
\label{thm:thm.1}
Suppose Assumption 1 holds, 
and the learning rate $\alpha$ and block sizes  satisfy:
\begin{small}
\begin{align*}
0<\alpha<\frac{1}{\Bar{\lambda}},\;
\mbox{and } \ 
    D\left(1-e^{-\frac{p^2\lfloor (L-M)/k\rfloor}{8}}\right)^2\geq \max\{D_{\lambda}, D_0\}.
\end{align*}\par
\end{small}
Then, for a given  $\phi_j$ and some $C_0$,  the following inequality holds with probability  $1-5\delta$:
\begin{small}
\begin{align*}
    \|\phi^*_{\theta}-\phi_j\|\leq C_0\eta+\Tilde{O}\left(D^{-1/2}\right)\sqrt{V_{\phi}}+\Tilde{O}\left([D(L-M)]^{-1/2}\right),
\end{align*}\par
\end{small}
where
\begin{small}
\begin{align*}
C_0=&\frac{48\lambda_{max}\left(L\Gamma_{L-1}-\frac{\alpha}{2}\left(\sum_{t=0}^{M-1} \underline{\Gamma}_{t}\right)^2\right)}{ (L-M)p^2(1-e^{-\frac{p^2\lfloor (L-M)/k\rfloor}{8}})(1-\alpha\Bar{\lambda})^2\lambda_{min}(\underline{\Gamma}_{\lfloor k/2\rfloor})}.
\end{align*}\par
\end{small}
\end{restatable}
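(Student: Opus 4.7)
The plan is to assemble the bound from the pseudo-inverse decomposition that the excerpt has already set up, plugging in Lemmas~\ref{lem:lem.solu}--\ref{lem:lem.3} (and the stated appendix bounds on $\|U^T Q_w\|$ and $\|U^T Q_0\|$), and then applying a union bound over the high-probability events. Concretely, I would first observe that on the event where $\lambda_{\min}(ZZ^T)>0$ (which the assumptions together with Lemma~\ref{lem:lem.2} guarantee), $Z^T$ has full column rank, so $(Z^T)^{\dagger}Z^T\phi_j=\phi_j$. Combining with Lemma~\ref{lem:lem.solu} and the LTI identity $x_{te,d}^T=z_{te,d}^T\phi_d+w_{te,d}^T$, $x_{tr,d}^T=z_{tr,d}^T\phi_d+w_{tr,d}^T$, one rewrites
$$\phi^*_{\theta}-\phi_j=(Z^T)^{\dagger}(\tilde W-Z^T\phi_j)=(Z^T)^{\dagger}(P+Q_w-\alpha Q_0).$$
Applying SVD $Z^T=U\Sigma V^T$ and $\|\Sigma^{-1}\|=1/\sqrt{\lambda_{\min}(ZZ^T)}$ gives the decomposition already displayed in the excerpt,
$$\|\phi^*_{\theta}-\phi_j\|\leq\frac{\|U^TP\|+\|U^TQ_w\|+\alpha\|U^TQ_0\|}{\sqrt{\lambda_{\min}(ZZ^T)}},$$
which is the skeleton of the proof.

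Next I would plug in the three bounds. Lemma~\ref{lem:lem.2}, under the prescribed conditions on $\alpha$ and $D$, supplies $\lambda_{\min}(ZZ^T)\geq C_1\,D(L-M)$ with probability $1-\delta$, where $C_1$ is exactly the constant appearing in the denominator of $C_0$. Lemma~\ref{lem:lem.3} contributes two numerator terms: the first, $D\eta\,\lambda_{\max}(L\Gamma_{L-1}-\tfrac{\alpha}{2}(\sum_{t=0}^{M-1}\underline\Gamma_t)^2)$, when combined with $\lambda_{\min}(ZZ^T)\gtrsim D(L-M)$, has its $D$ cancel and yields precisely the $C_0\eta$ term of the theorem; the second, $(L-M)\sqrt{C_vDV_\phi/\delta}$, likewise loses its $(L-M)$ factor to yield $\tilde O(D^{-1/2})\sqrt{V_\phi}$. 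The self-normalized bounds on $\|U^TQ_w\|$ and $\|U^TQ_0\|$ from the appendix, each scaling as $\tilde O(\sqrt{D(L-M)})$ via martingale-Chernoff / direct martingale concentration, give the final $\tilde O([D(L-M)]^{-1/2})$ residual term after dividing by $\sqrt{\lambda_{\min}(ZZ^T)}\gtrsim\sqrt{D(L-M)}$.

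Finally, I would close with a union bound over the five high-probability events: (i) the spectral lower bound from Lemma~\ref{lem:lem.2}; (ii) the model-similarity bound from Lemma~\ref{lem:lem.3} (whose probability budget absorbs the sample-average concentration under Assumption~\ref{assum1}); (iii) and (iv) the appendix bounds on $\|U^TQ_w\|$ and $\|U^TQ_0\|$; and (v) the event needed to invoke Assumption~\ref{assum1}'s $V_\phi$-tail for $P$. Each costs $\delta$, giving the stated $1-5\delta$.

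The part that I expect to be the main obstacle is not the bookkeeping of this combination but the handling of $\|U^TQ_0\|$ that feeds into step two. Because $U$ is a function of the full super-matrix $Z$ while $Q_{0,d}=z_{te,d}^Tz_{tr,d}w_{tr,d}^T$ couples the training noise $w_{tr,d}$ with the testing covariates $z_{te,d}$ through the system evolution, the clean self-normalized martingale-Chernoff argument that controls $\|U^TQ_w\|$ in terms of $\|Z^Tu\|$ no longer applies; one is forced to bound the numerator $u^TZQ_0v$ and the denominator $\|Z^Tu\|$ separately, absorb an extra $\alpha$ factor, and verify that the resulting rate still composes cleanly with the $1/\sqrt{\lambda_{\min}(ZZ^T)}$ outside to recover the $\tilde O([D(L-M)]^{-1/2})$ scaling. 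Everything else is essentially constant tracking and ensuring the five failure events fit in the $5\delta$ budget.
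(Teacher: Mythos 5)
Your proposal follows essentially the same route as the paper's proof of Theorem~\ref{thm:thm.1}: the pseudo-inverse/SVD decomposition $\|\phi^*_{\theta}-\phi_j\|\leq\bigl(\|U^TP\|+\|U^TQ_w\|+\alpha\|U^TQ_0\|\bigr)/\sqrt{\lambda_{min}(ZZ^T)}$, Lemma~\ref{lem:lem.2} for the denominator, Lemma~\ref{lem:lem.3} for $P$, the appendix bounds $H_w$ and $H_0$ for $Q_w$ and $Q_0$, and a union bound, exactly as the paper does. Two minor bookkeeping slips worth fixing: the numerators $H_w$ and $\alpha H_0$ are polylogarithmic (respectively decaying in $D$), not $\Tilde{O}(\sqrt{D(L-M)})$, so the $\Tilde{O}([D(L-M)]^{-1/2})$ term comes entirely from dividing by $\sqrt{\lambda_{min}(ZZ^T)}\gtrsim\sqrt{D(L-M)}$; and the fifth $\delta$ in the failure budget comes from the $\|U^TQ_w\|$ bound holding with probability $1-2\delta$, not from a separate probabilistic event for Assumption~\ref{assum1}, which is deterministic.
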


Based on Theorem \ref{thm:thm.1}, we have the following important remarks.

\noindent{\bf Benefits of Meta-Learning.}
To better understand the benefits of meta-learning in system identification of the linear time-varying system, we first take a closer look at the optimal model initialization $\phi_{\theta}^*$, and compare it with the solution by using least square estimation  over all $D$ blocks (i.e., setting $\alpha=0$ and using all $L$ samples for each block) in a noiseless setup. Let $z_d=[z_{0,d},...,z_{L-1,d}]$. As indicated by Lemma \ref{lem:lem.solu}, $\phi_{\theta}^*$ is a weighted sum of the true model parameters $\{\phi_d\}$ for all $D$ trajectories, where the weight for block $d$ is captured by $( I- \alpha z_{tr,d}z_{tr,d}^T)z_{te,d}z^T_{te,d}(I- \alpha z_{tr,d}z_{tr,d}^T)$. Observe that the solution obtained by least square estimation is also a weighted sum of $\{\phi_d\}$ where the weight for block $d$ is captured by $z_dz_d^T$ instead. Intuitively, the least square estimator would always assign higher weights to the blocks with `larger' model parameters, which would in turn  generate a model initialization closer to the `larger' model parameters. 
In stark contrast, meta-learning  assigns  more `balanced' weights across blocks, in the sense that a larger  $I- \alpha z_{tr,d}z_{tr,d}^T$ is used to scale  a smaller  $z_{te,d}z^T_{te,d}$ in an intrinsic manner (see Figure \ref{fig:example} for examples). Therefore, meta-learning paves a better way to utilize the information in each episodic block, thus leading to a smaller estimation error for a suitably selected learning rate $\alpha$. 

\begin{figure*}
    \centering
    \subfigure[]{\label{fig:ma}\includegraphics[width=40mm]{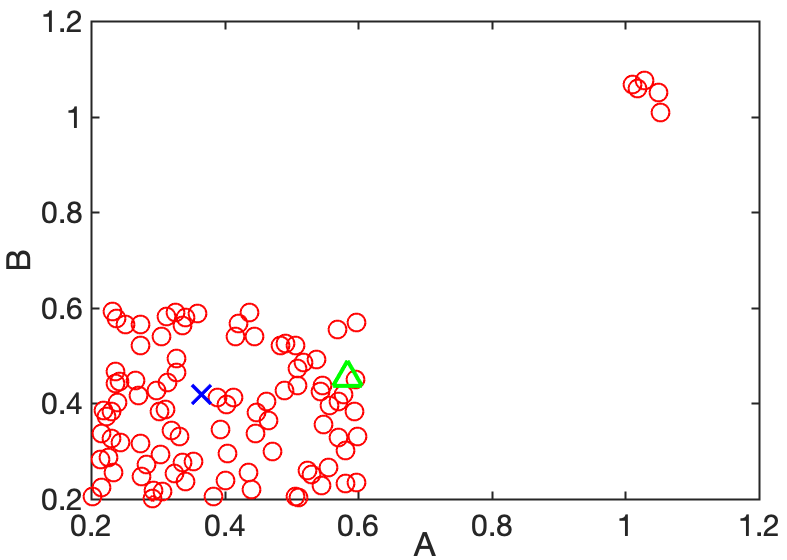}}
    \subfigure[ ]{\label{fig:mb}\includegraphics[width=40mm]{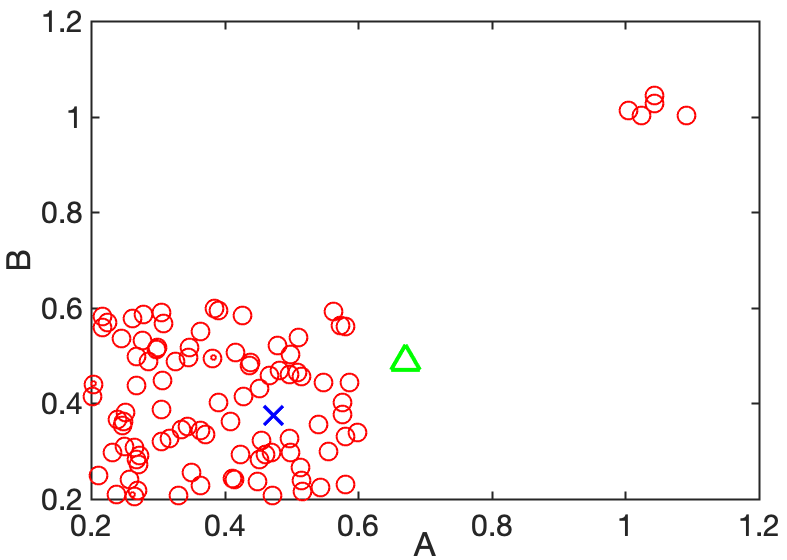}}
    \subfigure[]{\label{fig:mc}\includegraphics[width=40mm]{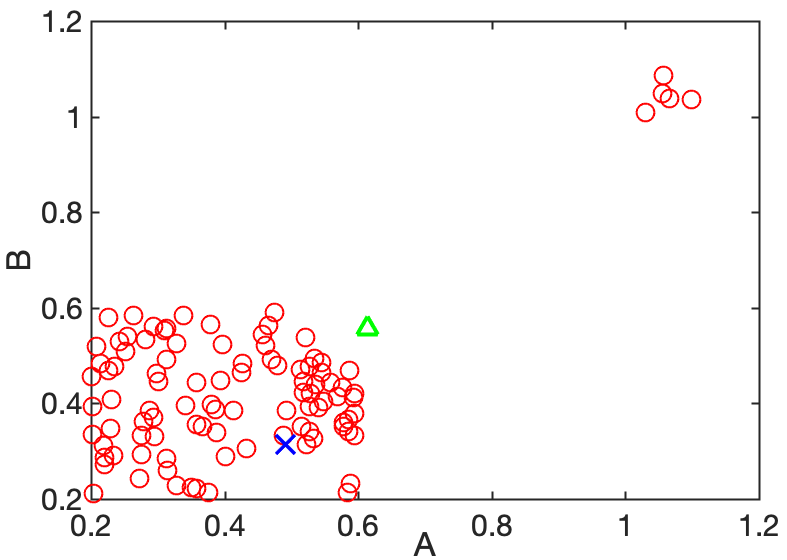}}
    \subfigure[ ]{\label{fig:md}\includegraphics[width=40mm]{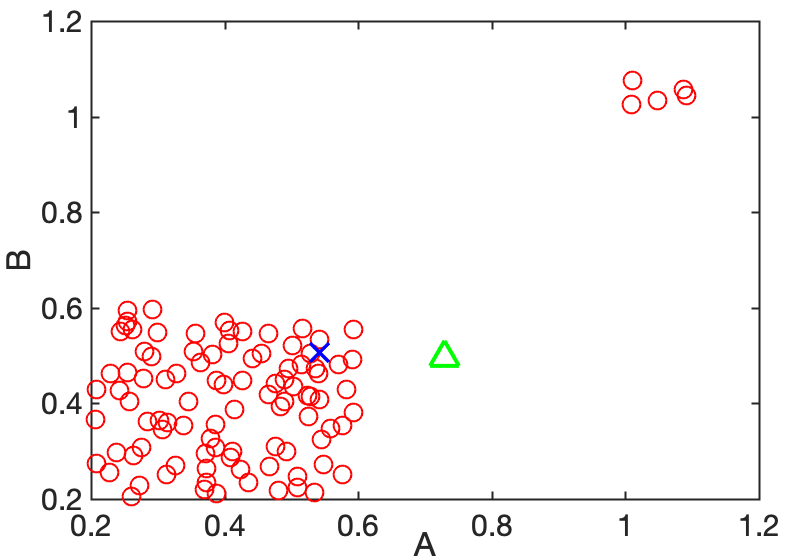}}
    \caption{Illustration of the solution by using meta-learning vs. that based on least square estimation. The red circles represent the model parameters for each block, the blue cross is the solution using meta-learning, and the green triangle is the solution using least square estimation.}
    \label{fig:example}
\end{figure*}

\noindent{\bf Impact of $M$, $L$, $D$.}
It can be seen that the offline Meta-L  distance $\|\phi^*_{\theta}-\phi_j\|$ can be decomposed into two parts: 1) the error incurred by model dissimilarity captured by
    $C_0 \eta+\Tilde{O}\left(D^{-1/2}\right)\sqrt{V_{\phi}}$, and 2) the error caused by finite sample sizes and the random noise process  captured through $\Tilde{O}\left([D(L-M)]^{-1/2}\right)$.
   In particular,  this  distance  is linear in model average distance $\eta$, while the impact of model variation decays at a rate of $\Tilde{O}\left(D^{-1/2}\right)$.
   Besides, one can choose a block length $L$ as desired to control the approximation accuracy of the episodic block model for the LTV system, and the  offline Meta-L algorithm can yield a good model initialization as long as there is a large number of blocks (large $D$) available (also corroborated by the experiments in the appendix).
   
   The impact of the training dataset size $M$ on the distance $\|\phi_{\theta}^*-\phi_j\|$ is not easy to tell, considering that both $C_0$ and the terms in $\Tilde{O}(\cdot)$ depends on $M$ directly or implicitly. To get some insights on the impact of $M$, we consider a noiseless scalar system where both $A_d$ and $B_d$ are scalars. In this case, the distance $\|\phi_{\theta}^*-\phi_j\|$ is bounded from above by $C_0 \eta+\Tilde{O}\left(D^{-1/2}\right)\sqrt{V_{\phi}}$, and it suffices to understand how $C_0$ changes with $M$. When $\{A_d\}$ are in an open unit disk, by taking a detailed look into $C_0$, $C_0$ can be regarded as a function of $M\in[1,L-1]$, i.e., $h(M)=\frac{bL-f(M)}{L-M}$, where $f(M)$ is a function of $M$ with $\nabla f(M)\geq 0$ and $\nabla^2 f(M)\leq 0$. Let $g(M)=f(M)+(L-M)\nabla f(M)$. It can be seen that $\nabla h(M)=\frac{bL-g(M)}{(L-M)^2}$. Since $\nabla g(M)=(L-M)\nabla^2 f(M)\leq 0$, $\nabla h(M)$ could be (1) always non-negative, or (2) first negative and then non-negative as $M$ increases, which indicates that the distance $\|\phi_{\theta}^*-\phi_j\|$ may (1) increase or (2) first decrease and then increase with $M$. In a nutshell, there exists an optimal $M$ for which the estimation gap is minimized.

\noindent{\bf Impact of Model Similarity.}
In a noiseless system, given a set of $D$ episodic blocks for offline Meta-L, the best upper bound on $\|\phi^*_{\theta}-\phi_j\|$ that one can achieve is intimately related to $\eta$, namely the average distance between any $\{\phi_d, d=1, \ldots, D \}$ and $\phi_j$ in $\Theta$. In the presence of stochastic disturbance in the LTV system, Theorem \ref{thm:thm.1} reveals that our upper bound only  degrades to $C_0\eta$ for a small constant $C_0$, and our result holds for any realization sequence of $\{\phi_d\}$ regardless of the correlation structure among them. 

It is clear that when $D$ goes to infinity, the last two error terms in Theorem \ref{thm:thm.1} diminish, whereas there still exists a gap between $\phi^*_{\theta}$ and $\phi_j$ due to the model dissimilarity. This makes sense since  meta-learning is used to find a good model initialization point `close to' all offline task models. Note that when the system is LTI (all blocks have same model parameters), the term $C_0\eta+\Tilde{O}\left(D^{-1/2}\right)\sqrt{V_{\phi}}$ diminishes simply because the distance between block parameters is zero. With $D(L-M)$ samples for meta-learning, Theorem \ref{thm:thm.1} recovers the nearly minimax optimal bound in \cite{simchowitz2018learning} for the identification of LTI systems with least square estimation.

\section{Error Bound for Online Adaptation}
Given the Meta-L initializer $\phi^*_{\theta}$, the model estimator $\hat{\phi}_i$ for a new block $i$ can be obtained via online adaptation by using its samples only. Since there is no resetting in online adaptation, the states would `persist' across different blocks, which however does not affect the performance of online adaptation using samples in the new block. So we focus on the adaptation of a single block. 
In what follows,  
we quantify the estimation error after  online adaptation. 
Once again, the challenge herein is originating from the small sample size and the sample correlation, such that existing techniques based on large sample sizes  and assuming that the system state is near the steady state would not work well here. 

In light of this, instead of using the one-step gradient descent algorithm where all $M$ samples are used in one shot as in \eqref{problemformulation-2}, we consider a $M$-step gradient descent algorithm with a trajectory of correlated samples $\{(x_{t,i},u_{t,i},x_{t+1,i})\}_{t=0}^{M-1}$ following the linear dynamics $x_{t+1,i}=A_i x_{t,i}+B_i u_{t,i}+w_{t,i}=\phi_i^T z_{t,i}+w_{t,i}$, and only one sample is used for every step, i.e.,
\begin{align}\label{lsa}
    \hat{\phi}_i(t+1)=\hat{\phi}_i(t)-\alpha \hat{g}_t(\hat{\phi}_i(t)),
\end{align}\par
where $\hat{g}_t(\hat{\phi}_i(t))\triangleq \nabla\mathcal{L}
    =z_{t,i}z_{t,i}^T\hat{\phi}_i(t)-z_{t,i}x_{t+1,i}^T$ for $0\leq t\leq M-1$ and $\hat{\phi}_i(0)=\phi^*_{\theta}$. It is worth noting that equation \eqref{lsa} turns out to be a linear stochastic approximation (LSA) of the underlying model parameter $\phi_i$. \emph{Hence, the problem  reduces to finding the finite time error bound of LSA with a trajectory of correlated samples following the linear dynamics}.


\textbf{Preliminary on LQR}: Based on the well-known result that the LQR problem in the LTI system can be solved with a linear feedback policy $u_t=Kx_t$, we assume that each block $i$  evolves with a stabilizing controller $K$ (cf. \cite{dean2017sample} for the controller synthesis) during the online adaptation, i.e., $A_i+B_iK$ is a stable matrix with spectral radius $\rho_i<1$, thereby generating a trajectory of $M$ samples. 
The stationary distribution of the linear dynamics $\{x_t\}$ is $\mathcal{N}(0, P_{\infty})\triangleq \nu_{\infty}$ where $P_{\infty}$ uniquely solves the Lyapunov equation 
$(A_i+B_iK)P_{\infty}(A_i+B_iK)^T-P_{\infty}+I=0$.
Furthermore, it has been shown in \cite{tu2017least} that the $\beta$-mixing coefficient of LTI for $t\geq 1$ with stable $A_i+B_iK$ is
\begin{align*}
    \beta(t)=\sup\nolimits_{k\geq 1}\mathbb{E}_{x\sim \nu_{k}}[\|\mathbb{P}_{x_t}(\cdot|x_0=x)-\nu_{\infty}\|_{tv}]
    \leq  C_m\rho^t
\end{align*}\par
for constant $C_m$, any $\rho\in(\rho_i,1)$ and distribution $\nu_{k}$, where $\|\cdot\|_{tv}$ refers to the total-variation norm.


\textbf{Finite time error:} 
 Along the same line as in \cite{bhandari2018finite}, let $C_{\phi}=\max\nolimits_{\phi_i\in\Theta}\|\phi_i\|$
 and define
$\proj_{\phi}\triangleq \arg \min_{\phi':\|\phi'\|\leq C_{\phi}}\|\phi-\phi'\|_2$
as the projection operator onto a norm ball of radius $C_{\phi}$ to circumvent unreliable estimators  outside the ball. 
Further, to mitigate the impact of the  outlier samples with extremely large state values,  the outlier samples $z_{t,i}$ are projected onto a norm ball of radius $C_z<\infty$:
\begin{align}\label{projectz}
\Tilde{z}_{t,i}\triangleq \arg \min\nolimits_{z'_{t,i}:\|z'_{t,i}\|\leq C_{z}}\|z_{t,i}-z'_{t,i}\|_2.
\end{align}\par
We caution that $C_z$ should be  set to be large enough such that only extreme outlier samples  would be projected as in \eqref{projectz}. Since there are only a few samples for online adaptation in each block, the likelihood of this happening is very small. We use this projection for online learning because technically it is impossible to bound the spectrum of $z_tz_t^T$ for only a few  samples.
It follows that the `modified' gradient  is given by $g_t(\hat{\phi}_i(t))=\Tilde{z}_{t,i}\Tilde{z}_{t,i}^T\hat{\phi}_i(t)-\Tilde{z}_{t,i}(\phi_i^T\Tilde{z}_{t,i}+w_{t,i})^T$.  
Applying $g_t(\hat{\phi}_i(t))$ and operator $\proj_{\phi}$ to \eqref{lsa}, 
the update step can be transformed to
    $\hat{\phi}_i(t+1)=\proj_{\phi}(\hat{\phi}_i(t)-\alpha g_t(\hat{\phi}_i(t)))$.



For convenience,
define 
$\hat{P}_{\infty}=\begin{bmatrix}
    I & K^T
    \end{bmatrix}^T P_{\infty}\begin{bmatrix}
    I & K^T
    \end{bmatrix}$.
Using  a similar approach in the finite time analysis of TD-learning \cite{bhandari2018finite},
we have the following result characterizing the finite time error.

\begin{restatable}{proposition}{propositiona}
\label{thm:thm.2}
Suppose that the system evolves with a stabilizing controller $K$ during online adaptation and that
the learning rate $\alpha$   satisfies  $\alpha<\frac{1-\rho}{2\lambda_{min}(\hat{P}_{\infty})}$. With the model initialization $\phi^*_{\theta}$, we have
            \begin{small}
            \begin{align*}
            \mathbb{E}[\|\hat{\phi}_i&(M)-\phi_i\|^2]\leq \frac{\alpha C_g}{2\lambda_{min}(\hat{P}_{\infty})}
            +[1-2\alpha \lambda_{min}(\hat{P}_{\infty})]^M\left(\|\phi^*_{\theta}-\phi_i\|^2+\frac{2\alpha M\Tilde{C}_{\phi}}{1-2\alpha \lambda_{min}(\hat{P}_{\infty})}\right),
            \end{align*}\par
            \end{small}
 for some constant $C_g$ and $\Tilde{C_{\phi}}$.           
    
\end{restatable}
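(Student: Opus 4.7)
The plan is to adapt the finite-time analysis of linear stochastic approximation with Markovian data (cf.\ \cite{bhandari2018finite}) to the matrix-valued update in \eqref{lsa}. The overall strategy is to establish a one-step Lyapunov recursion for $a_t \triangleq \|\hat{\phi}_i(t)-\phi_i\|^2$ and then unroll it over $M$ iterations. First, exploit the non-expansiveness of the projection $\proj_{\phi}$ (valid since $\|\phi_i\|\leq C_{\phi}$) to obtain $a_{t+1}\leq\|\hat{\phi}_i(t)-\alpha g_t(\hat{\phi}_i(t))-\phi_i\|^2$, and expand the square to write
\begin{equation*}
a_{t+1}\;\leq\;a_t\;-\;2\alpha\,\langle\hat{\phi}_i(t)-\phi_i,\;g_t(\hat{\phi}_i(t))\rangle\;+\;\alpha^{2}\|g_t(\hat{\phi}_i(t))\|^{2}.
\end{equation*}
The squared-gradient term is uniformly bounded in expectation by a constant $C_g$ because $\|\tilde z_{t,i}\|\leq C_z$ by the projection \eqref{projectz}, $\|\hat{\phi}_i(t)\|\leq C_{\phi}$ by $\proj_{\phi}$, and $w_{t,i}$ has bounded variance.

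Next, decompose the stochastic gradient into its mean-field counterpart under the stationary distribution $\nu_{\infty}$ plus a fluctuation: with $\bar{g}(\phi)\triangleq\mathbb{E}_{z\sim\nu_{\infty}}[\tilde z\tilde z^{T}](\phi-\phi_i)=\hat{P}_{\infty}(\phi-\phi_i)$, since $w_{t,i}$ is zero-mean and independent of $\tilde z_{t,i}$, write $g_t(\phi)=\bar{g}(\phi)+(g_t(\phi)-\bar{g}(\phi))$. The mean-field part delivers the contraction
\begin{equation*}
\langle \hat{\phi}_i(t)-\phi_i,\;\bar{g}(\hat{\phi}_i(t))\rangle\;\geq\;\lambda_{\min}(\hat{P}_{\infty})\,a_t,
\end{equation*}
which is the origin of the $2\alpha\lambda_{\min}(\hat{P}_{\infty})$ contraction rate appearing in the final bound.

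The main obstacle is controlling the Markovian bias term $\mathbb{E}[\langle\hat{\phi}_i(t)-\phi_i,\;g_t(\hat{\phi}_i(t))-\bar{g}(\hat{\phi}_i(t))\rangle]$, because $z_{t,i}$ is drawn from the non-stationary controlled chain rather than from $\nu_{\infty}$, and the iterate $\hat{\phi}_i(t)$ is itself correlated with the past history of samples. Here I will invoke the $\beta$-mixing bound $\beta(t)\leq C_m\rho^{t}$ stated before the proposition and apply a coupling argument: introduce a fictitious chain started from $\nu_{\infty}$, use the total-variation bound to transfer expectations taken under the true law to those under the stationary law (on which $\bar{g}$ is defined), and bound the residual by a per-step constant $\tilde{C}_{\phi}$ using the uniform boundedness of $\|\hat{\phi}_i(t)-\phi_i\|$ and $\|\tilde z_{t,i}\|$. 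This step is the technically delicate core of the argument; the factor $M$ that multiplies $\tilde{C}_{\phi}$ in the final bound arises precisely because the per-step mixing residuals are added across all $M$ iterations before the geometric contraction completely dominates.

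Assembling the three contributions yields the one-step recursion
\begin{equation*}
\mathbb{E}[a_{t+1}]\;\leq\;(1-2\alpha\lambda_{\min}(\hat{P}_{\infty}))\,\mathbb{E}[a_t]\;+\;2\alpha\,\tilde{C}_{\phi}\;+\;\alpha^{2}C_g,
\end{equation*}
where the learning-rate restriction $\alpha<(1-\rho)/(2\lambda_{\min}(\hat{P}_{\infty}))$ guarantees that the contraction factor $1-2\alpha\lambda_{\min}(\hat{P}_{\infty})\in(0,1)$ and is strong enough to dominate the mixing rate $\rho$ used in the coupling step. Iterating this recursion from $t=0$ to $t=M-1$ with $a_0=\|\phi^{*}_{\theta}-\phi_i\|^{2}$, and summing the geometric series for the constant $\alpha^{2}C_g$ term, produces the stationary component $\alpha C_g/(2\lambda_{\min}(\hat{P}_{\infty}))$; the initialization error and the accumulated bias terms together contribute the transient exponential factor $[1-2\alpha\lambda_{\min}(\hat{P}_{\infty})]^{M}(\|\phi^{*}_{\theta}-\phi_i\|^{2}+2\alpha M\tilde{C}_{\phi}/(1-2\alpha\lambda_{\min}(\hat{P}_{\infty})))$, which matches the claimed bound exactly.
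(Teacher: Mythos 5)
Your overall framework coincides with the paper's: non-expansiveness of $\proj_{\phi}$, expansion of the squared error, the mean-field contraction through $\hat{P}_{\infty}$, a uniform bound $C_g$ on $\mathbb{E}[\|g_t\|^2]$ coming from the two projections (radius $C_z$ and $C_{\phi}$) and the noise moments, and a mixing-based treatment of the Markovian bias, followed by unrolling. The genuine gap is in how you bound the bias term and, consequently, in the recursion you unroll. You bound $\mathbb{E}[\langle \phi_i-\hat{\phi}_i(t),\, g_t(\hat{\phi}_i(t))-\bar{g}(\hat{\phi}_i(t))\rangle]$ by a \emph{time-independent} constant $\tilde{C}_{\phi}$ and write $\mathbb{E}[a_{t+1}]\leq (1-2\alpha\lambda_{min}(\hat{P}_{\infty}))\mathbb{E}[a_t]+2\alpha\tilde{C}_{\phi}+\alpha^2 C_g$. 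Unrolling that recursion accumulates the bias into the geometric series $2\alpha\tilde{C}_{\phi}\sum_{t=0}^{M-1}(1-2\alpha\lambda_{min}(\hat{P}_{\infty}))^t\leq \tilde{C}_{\phi}/\lambda_{min}(\hat{P}_{\infty})$, a constant that does \emph{not} vanish as $M\to\infty$; it cannot be rewritten as the transient term $[1-2\alpha\lambda_{min}(\hat{P}_{\infty})]^M\cdot 2\alpha M\tilde{C}_{\phi}/(1-2\alpha\lambda_{min}(\hat{P}_{\infty}))$ of the statement, which does vanish. So, as written, your recursion does not ``match the claimed bound exactly,'' and the final assembly step fails.

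What is needed (and what the paper does) is a \emph{time-decaying} bias bound: since the law $\nu_t$ of $x_t$ satisfies $d_{TV}(\nu_\infty,\nu_t)\leq C_m\rho^t$ and the integrand $(\phi_i-\hat{\phi}_i(t))(\phi_i-\hat{\phi}_i(t))^T z z^T$ is uniformly bounded by $4C_{\phi}^2C_z^2$ thanks to the two projections, one gets $\mathbb{E}[\xi_t(\hat{\phi}_i(t))]\leq \tilde{C}_{\phi}\rho^t$ with $\tilde{C}_{\phi}=8nC_z^2C_{\phi}^2C_m$. The recursion is then $\mathbb{E}[a_{t+1}]\leq (1-2\alpha\lambda_{min}(\hat{P}_{\infty}))\mathbb{E}[a_t]+\alpha^2 C_g+2\alpha\tilde{C}_{\phi}\rho^t$, and the learning-rate condition $\alpha<(1-\rho)/(2\lambda_{min}(\hat{P}_{\infty}))$ is used at exactly one place: it gives $\rho\leq 1-2\alpha\lambda_{min}(\hat{P}_{\infty})$, so that the cross sum obeys $\sum_{t=0}^{M-1}(1-2\alpha\lambda_{min}(\hat{P}_{\infty}))^{M-1-t}\rho^{t}\leq M\,(1-2\alpha\lambda_{min}(\hat{P}_{\infty}))^{M-1}$, which is simultaneously the source of the factor $M$ and of the exponential decay of the bias contribution. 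Your prose gestures at this (``the contraction dominates the mixing rate''), but the step you actually wrote down—bounding the per-step bias by a constant—is the one that would fail; the fix is to keep the $\rho^t$ decay in the bias bound (the coupling/TV argument applied at each $t$ to the chain started from the block's initial state) before unrolling.
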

Clearly, for a sufficiently small learning rate, the gap between the parameter $\hat{\phi}_i(M)$ and the underlying parameter $\phi_i$ decreases exponentially. Moreover, the larger $\lambda_{min}(\hat{P}_{\infty})$ is, the faster the convergence is. Different controllers for the LQR in block $i$ can be then designed based on the model estimation using online adaptation, with performance characterized by the model estimation error (see Appendix~H for details). 
It is worth noting that for a large sample size when the system nearly approaches its steady state, the finite time error bound about LSA in \cite{srikant2019finite} can be directly applied here, which indicates an exponential decay about the mean squared estimation error. Note that the result in \cite{simchowitz2018learning} shows that the maximum squared estimation error for the least square estimator (LSE) decays linearly with the sample size when the sample size is large enough. Our simulation studies (relegated to Appendix~J) indicate that when the underlying model is near the Meta-L initialization, the recursive LSA starting from this initialization clearly outperforms  the LSE for small sample sizes.

\section{Conclusions}

System identification plays a critical role in characterizing the fundamental limits of reinforcement learning algorithms in a LTV system. In this study, we  propose an innovative episodic block model for the LTV system and leverage  meta-learning for  system identification therein.
We carry out a comprehensive non-asymptotic analysis of the performance of Meta-L based system identification with correlated samples. With the proposed two-scale martingale small-ball approach for offline Meta-L, the derived upper bound on the distance between Meta-L based model initialization and the underlying model parameters is sharp and   encapsulates the impact of the model similarity and the sample size on system identification in LTV systems. Furthermore, by leveraging recent advances in linear stochastic approximation,  our results on online adaptation with correlated samples  indicate that the error between the model estimation and the underlying model decays exponentially.

\section*{Acknowledgement}

The authors would like to thank Prof. R. Srikant's stimulating discussions and valuable comments on an earlier version.

\bibliography{citation}
\bibliographystyle{plain}

\newpage
\appendix
\section{Proof of Lemma \ref{lem:lem.solu}}
\lemmasolution*

We first calculate the one-step gradient descent update. Observe that 

\begin{equation}
\begin{aligned}
\mathcal{L}(\tau_d(M, L-1), \hat{\phi}_d^T ) & \triangleq  \frac{1}{2}\left(\sum_{t=M}^{L-1} \left\| x_{t+1} - \hat{\phi}_d^T
\begin{bmatrix} x_{t} \\ u_{t} \end{bmatrix} 
\right\|_2^2 \right) \\
& = \frac{1}{2}\left( \sum_{t=M}^{L-1} (x_{t+1} - \hat{\phi}_d^T z_t)^T(x_{t+1} - \hat{\phi}_d^T z_t) \right)\\
& =  \frac{1}{2} \left(\sum_{t=M}^{L-1} \left(\|x_{t+1}\|_2^2 + (-x_{t+1})^T\hat{\phi}_d^Tz_t + z_t^T\hat{\phi}_d(-x_{t+1})+z_t^T\hat{\phi}_d\hat{\phi}_d^Tz_t \right) \right). 
\end{aligned}
\label{loss:K}
\end{equation}
It follows that
\begin{equation}
\begin{aligned}
\frac{\partial \mathcal{L}(\tau_d(M, L-1), \hat{\phi}_d^T )}{\partial \hat{\phi}_d^T } = z_{te,d}z_{te,d}^T \hat{\phi}_d - z_{te,d}x_{te,d}^T.
\end{aligned}
\end{equation}
Similarly, the derivative of $\mathcal{L}(\tau_d(0, M-1), \phi_{\theta}^T ) $ can be obtained as follows:
\begin{equation*}
\begin{aligned}
\mathcal{L}(\tau_d(0, M-1), \phi_{\theta}^T ) & \triangleq \frac{1}{2} \left
( \sum_{t=0}^{M-1} \left\| x_{t+1} - \phi_{\theta}^T 
\begin{bmatrix} x_{t} \\ u_{t} \end{bmatrix}
\right\|_2^2  \right) 
\end{aligned}
\end{equation*}
and
\begin{equation}
\begin{aligned}
\frac{\partial \mathcal{L}(\tau_d(0, M-1), \phi_{\theta}^T)}{\partial \phi_{\theta}^T} =  z_{tr,d}z_{tr,d}^T \phi_{\theta}- z_{tr,d}x_{tr,d}^T.
\end{aligned}
\end{equation}
Then, the relation between $\hat{\phi}_d$ and $\phi_{\theta}$ follows immediately:
\begin{equation}
\begin{aligned}
\hat{\phi}_d &= \phi_{\theta} - \alpha z_{tr,d}z_{tr,d}^T\phi_{\theta} + \alpha z_{tr,d}x_{tr,d},\\
\frac{\partial \hat{\phi}_d}{\partial \phi_{\theta}} &= I - \alpha z_{tr,d}z_{tr,d}^T.
\end{aligned}
\label{theta:d}
\end{equation}

Now we are ready to compute the optimal meta-parameter $\phi^*_{\theta}$. For convenience, we define the meta-learning objective function as:
\begin{equation*}
F(\phi_{\theta})\triangleq   \sum_{d=1}^{D}\mathcal{L}(\tau_{d}(M, L-1), \hat{\phi}_d).
\end{equation*}


It can be shown that by setting
\begin{equation*}
\begin{aligned}
\frac{\partial F(\phi_{\theta})}{\partial \phi_{\theta}}=&  \sum_{d=1}^{D} { ( I- \alpha z_{tr,d}z_{tr,d}^T)( z_{te,d}z_{te,d}^T ( I- \alpha z_{tr,d}z_{tr,d}^T)\phi_{\theta})} \\ 
&+  \sum_{d=1}^{D} { ( I- \alpha z_{tr,d}z_{tr,d}^T)(\alpha z_{te,d}z_{te,d}^T z_{tr,d} x_{tr,d}^T- z_{te,d}x_{te,d}^T)} \\
\triangleq& 0,
\end{aligned}
\end{equation*}
we can have that
\begin{align}
&\sum_{d=1}^{D} { ( I- \alpha z_{tr,d}z_{tr,d}^T)( z_{te,d}z_{te,d}^T ( I- \alpha z_{tr,d}z_{tr,d}^T)\phi_{\theta})}\nonumber\\
=& \sum_{d=1}^{D} { ( I- \alpha z_{tr,d}z_{tr,d}^T)(z_{te,d}x_{te,d}^T - \alpha z_{te,d}z_{te,d}^T z_{tr,d} x_{tr,d}^T)}. 
\end{align}

Let $Z_d= ( I- \alpha z_{tr,d}z_{tr,d}^T)z_{te,d}$ and $Z:=[Z_1,...,Z_D]$. It follows that
\begin{equation}
\begin{aligned}
ZZ^T\phi_{\theta} &= Z
\begin{bmatrix}
x_{te,1}^T - \alpha z_{te,1}^T z_{tr,1} x_{tr,1}^T\\
x_{te,2}^T - \alpha z_{te,2}^T z_{tr,2} x_{tr,2}^T\\
\vdots\\
x_{te,D}^T - \alpha z_{te,D}^T z_{tr,D} x_{tr,D}^T\\
\end{bmatrix}	 
\end{aligned}
\end{equation}
which indicates that 
\begin{equation}
\begin{aligned}
\phi^*_{\theta} &= (Z^T)^{\dagger}
\begin{bmatrix}
x_{te,1}^T - \alpha z_{te,1}^T z_{tr,1} x_{tr,1}^T\\
x_{te,2}^T - \alpha z_{te,2}^T z_{tr,2} x_{tr,2}^T\\
\vdots\\
x_{te,D}^T - \alpha z_{te,D}^T z_{tr,D} x_{tr,D}^T\\
\end{bmatrix}\\
&=(Z^T)^{\dagger}\Tilde{W}\\
&= (Z^T)^{\dagger}
\begin{bmatrix}
z_{te,1}^T\phi_{1} + w_{te,1}^T - \alpha z_{te,1}^T z_{tr,1} (z_{tr,1}^T\phi_{1} + w_{tr,1})\\
z_{te,2}^T\phi_{2} + w_{te,2}^T - \alpha z_{te,2}^T z_{tr,2} (z_{tr,2}^T\phi_{2} + w_{tr,2})\\
\vdots\\
z_{te,D}^T\phi_{D} + w_{te,D}^T - \alpha z_{te,D}^T z_{tr,D} (z_{tr,D}^T\phi_{D} + w_{tr,D})
\end{bmatrix}\\
&= (Z^T)^{\dagger}\left( 
\begin{bmatrix}
(z_{te,1}^T - \alpha z_{te,1}^T z_{tr,1} z_{tr,1}^T)\phi_{1} \\
(z_{te,2}^T - \alpha z_{te,2}^T z_{tr,2} z_{tr,2}^T)\phi_{2} \\
\vdots\\
(z_{te,D}^T - \alpha z_{te,D}^T z_{tr,D} z_{tr,D}^T)\phi_{D} 
\end{bmatrix}
+
\begin{bmatrix}
w_{te,1}^T - \alpha z_{te,1}^Tz_{tr,1}w_{tr,1}^T\\
w_{te,2}^T - \alpha z_{te,2}^Tz_{tr,2}w_{tr,2}^T\\
\vdots\\
w_{te,D}^T - \alpha z_{te,D}^Tz_{tr,D}w_{tr,D}^T
\end{bmatrix}
\right)\\
&\triangleq (Z^T)^{\dagger}(\Pi+W).
\end{aligned}
\end{equation}

\section{Proof of Lemma \ref{lem:lem.2}}\label{sec:prooflem2}
\lemmaa*
Note that 
\begin{align}\label{lbzz}
    ZZ^T&=\sum_{d=1}^D ( I- \alpha z_{tr,d}z_{tr,d}^T)z_{te,d}z^T_{te,d}(I- \alpha z_{tr,d}z_{tr,d}^T).
\end{align}
To obtain the lower bound of $ZZ^T$, we first find a high-probability lower bound on the term $I- 2\alpha z_{tr,d}z_{tr,d}^T$ such that the right hand side of \eqref{lbzz} can be bounded from below by the sum of testing data correlation over all $D$ blocks.
\begin{mylemma}\label{lem:lem.a.1}
For any block $d$ with model parameter $\phi_d\in\Theta$ and $\delta\in (0,1)$, the following inequality holds:
\begin{equation*}
    \mathbb{P}\left[z_{tr,d}z_{tr,d}^T \npreceq \frac{2(m+n)}{\delta}M\Gamma_{M-1}\right]\leq \frac{\delta}{2}.
\end{equation*}
\end{mylemma}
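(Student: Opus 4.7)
\medskip

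\noindent\textbf{Proof proposal for Lemma A.1.}
My plan is to reduce the operator inequality to a scalar Markov-type bound on an expectation that can be computed exactly from the Gaussian structure of the inputs and noise. The starting point is that, because $x_{0,d}=0$ and the controls/noise used during offline training are i.i.d.\ Gaussians independent of each other, I can write
\begin{equation*}
z_{tr,d}z_{tr,d}^T \;=\; \sum_{t=0}^{M-1} z_{t,d}z_{t,d}^T,
\end{equation*}
take conditional expectations given the realized model parameter $\phi_d$, and exploit the block-diagonal covariance already identified in the paper.

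First I would compute $\mathbb{E}[z_{t,d}z_{t,d}^T \mid \phi_d]$ from the explicit state recursion $x_{t,d}=\sum_{i=0}^{t-1}A_d^{i}(B_du_{t-1-i,d}+w_{t-1-i,d})$. Independence of $u_{t,d}$ from past controls and noise, and from $x_{t,d}$, yields
\begin{equation*}
\mathbb{E}[z_{t,d}z_{t,d}^T\mid \phi_d]\;=\;\Gamma_{t,d}\;=\;\diag\!\bigl(\sigma_a^2 G_{t,d}+\sigma_w^2 F_{t,d},\;\sigma_a^2 I_m\bigr)\;\preceq\;\Gamma_t,
\end{equation*}
using the standing semi-definite upper bounds on $G_{t,d}$ and $F_{t,d}$. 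Since $G_{t,d}$ and $F_{t,d}$ are partial sums of positive semi-definite terms, they are non-decreasing in $t$, and so are the envelopes $G_t,F_t$ obtained as suprema over $d$; hence $\Gamma_t\preceq\Gamma_{M-1}$ for $t\le M-1$. Summing gives the clean conditional expectation bound
\begin{equation*}
\mathbb{E}\bigl[z_{tr,d}z_{tr,d}^T\mid \phi_d\bigr]\;\preceq\;\sum_{t=0}^{M-1}\Gamma_t\;\preceq\;M\,\Gamma_{M-1}.
\end{equation*}

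Next I would convert this mean bound into the desired high-probability semidefinite inequality via a matrix Markov step. The key observation is that for any random positive semi-definite $X$ and deterministic $Y\succ 0$,
\begin{equation*}
\{X\npreceq cY\}\;=\;\bigl\{\lambda_{\max}(Y^{-1/2}XY^{-1/2})>c\bigr\},
\end{equation*}
so by scalar Markov and $\lambda_{\max}\le\mathrm{tr}$ on PSD matrices,
\begin{equation*}
\mathbb{P}[X\npreceq cY]\;\le\;\frac{\mathrm{tr}\bigl(Y^{-1/2}\,\mathbb{E}[X]\,Y^{-1/2}\bigr)}{c}.
\end{equation*}
Applying this to $X=z_{tr,d}z_{tr,d}^T$, $Y=M\Gamma_{M-1}$, and $c=2(m+n)/\delta$, the conditional expectation bound above makes the trace at most $\mathrm{tr}(I_{m+n})=m+n$, so the conditional probability is at most $\delta/2$. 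Taking the outer expectation over $\phi_d$ preserves the bound, yielding the claim.

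The only mildly delicate point is the monotonicity argument that lets me collapse $\sum_{t=0}^{M-1}\Gamma_t$ into $M\Gamma_{M-1}$; this is where one must be careful that the paper's envelopes $G_t,F_t$ can be chosen non-decreasing (a choice that loses no generality since the pointwise suprema over $d$ of the non-decreasing $G_{t,d},F_{t,d}$ are themselves non-decreasing). Everything else is a one-line matrix Markov application, so I expect no serious obstacle beyond bookkeeping.
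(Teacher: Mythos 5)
Your proposal is correct and follows essentially the same route as the paper: compute $\mathbb{E}[z_{tr,d}z_{tr,d}^T]\preceq M\Gamma_{M-1}$ from the Gaussian structure and monotonicity of the Gramians, then apply Markov's inequality to $\lambda_{\max}\bigl((M\Gamma_{M-1})^{-1/2}z_{tr,d}z_{tr,d}^T(M\Gamma_{M-1})^{-1/2}\bigr)$ via the bound $\lambda_{\max}\le\mathrm{tr}$. The only cosmetic differences are that the paper works with a fixed realization $\phi_d$ (so your final outer expectation is unnecessary but harmless) and collapses the sum via $\Gamma_{t,d}\preceq\Gamma_{M-1,d}\preceq\Gamma_{M-1}$ rather than through monotone envelopes.
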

\begin{proof}
Recall that for $d$-th block, 
\begin{equation*}
    z_{t,d}z_{t,d}^T=\begin{bmatrix} x_{t,d} \\ u_{t,d} \end{bmatrix}
    \begin{bmatrix}
    x_{t,d}^T & u_{t,d}^T
    \end{bmatrix} 
    =\begin{bmatrix}
    x_{t,d}x_{t,d}^T & x_{t,d}u_{t,d}^T\\
    u_{t,d}x_{t,d}^T & u_{t,d}u_{t,d}^T
    \end{bmatrix}.
\end{equation*}
Given $x_{t,d}=\phi_d^T z_{t-1,d} + w_{t-1,d}$, we have $\mathbb{E}[x_{t,d}u_{t,d}^T]=\mathbb{E}[u_{t,d}x_{t,d}^T]=0$ since $x_{t,d}$ and $u_{t,d}$ are independent, where the expectation is taken with respect to the control action $u_{t,d}\sim\mathcal{N}(0,\sigma_a^2I_m)$ and the noise $w_t\sim \mathcal{N}(0, \sigma_w^2I_n)$. Moreover, 
\begin{equation*}
    x_{t+1,d}=A_d x_{t,d}+B_d u_{t,d}+w_{t,d}=A_d x_{t,d}+w'_{t,d},
\end{equation*}
where $w'_{t,d}\triangleq B_d u_{t,d}+w_{t,d}$ and $w'_{t,d}\sim \mathcal{N}(0, \sigma_a^2 B_d B_d^T+\sigma_w^2 I_n)$. It can be shown that
\begin{align*}
    \mathbb{E}[x_{t,d}x_{t,d}^T]&=\mathbb{E}\left[\left(\sum_{i=0}^{t-1} A_d^i w'_{t-1-i,d}\right)\left(\sum_{i=0}^{t-1} (w'_{t-1-i,d})^T(A_d^i)^T\right)\right]\\
    &=\sum_{i=0}^{t-1} A_d^i(\sigma_a^2 B_d B_d^T+\sigma_w^2I_n)(A_d^i)^T\\
    &=\sigma_a^2 G_{t,d}+\sigma_w^2 F_{t,d}
\end{align*}
and
\begin{equation*}
    \mathbb{E}[u_{t,d}u_{t,d}^T]=\sigma_a^2 I_m.
\end{equation*}
From the definition of $\Gamma_{t,d}$ and $\Gamma_{M-1}$, it follows that
\begin{align}\label{ubzzmean}
    \mathbb{E}[z_{tr,d}z_{tr,d}^T]=\mathbb{E}\left[\sum_{t=0}^{M-1} z_{t,d}z_{t,d}^T\right]
    &=\sum_{t=0}^{M-1} \mathbb{E}[z_{t,d}z_{t,d}^T]\nonumber\\
    &=\begin{bmatrix}
    \sum_{t=0}^{M-1} (\sigma_a^2 G_{t,d}+\sigma_w^2 F_{t,d}) & 0\\
    0 & M\sigma_a^2 I_m
    \end{bmatrix}\nonumber\\
    &\preceq \begin{bmatrix}
    M(\sigma_a^2 G_{M-1,d}+\sigma_w^2 F_{M-1,d}) & 0\\
    0 & M\sigma_a^2 I_m
    \end{bmatrix}\nonumber\\
    &\preceq M\Gamma_{M-1}.
\end{align}
Appealing to Markov's Inequality, we conclude that
\begin{align*}
    &\mathbb{P}\left[z_{tr,d}z_{tr,d}^T \npreceq \frac{2(m+n)}{\delta}M\Gamma_{M-1}\right]\\
    =&\mathbb{P}\left[\lambda_{max}((M\Gamma_{M-1})^{-1/2}z_{tr,d}z_{tr,d}^T(M\Gamma_{M-1})^{-1/2})\geq \frac{2(m+n)}{\delta}\right]\\
    \leq& \frac{\delta}{2(m+n)}\mathbb{E}\left[\lambda_{max}((M\Gamma_{M-1})^{-1/2}z_{tr,d}z_{tr,d}^T(M\Gamma_{M-1})^{-1/2})\right]\\
    \leq& \frac{\delta}{2(m+n)}\mathbb{E}\left[Tr((M\Gamma_{M-1})^{-1/2}z_{tr,d}z_{tr,d}^T(M\Gamma_{M-1})^{-1/2})\right]\\
    \leq& \frac{\delta}{2}
\end{align*}
where the last inequality  holds because of \eqref{ubzzmean}.
\end{proof}

\subsection{Upper bound on $\lambda_{max}(z_{tr,d}z_{tr,d}^T)$}
In what follows, we aim to obtain a tighter upper bound on $\lambda_{max}(z_{tr,d}z_{tr,d}^T)$. 

\begin{mylemma}\label{lem:boundlambda}
For any $\delta\in (0,1)$, the following inequality holds:
\begin{align*}
    \mathbb{P}\Big[\lambda_{max}(z_{tr,d}z_{tr,d}^T)\leq M^3\|\Bar{B}\|^2\Big(1+3\sqrt{\log\frac{10M}{\delta}}\Big)^2\max\{m\sigma_a^2,n\sigma_w^2\}\Big]
    \geq 1-\frac{\delta}{2}.
\end{align*}
\end{mylemma}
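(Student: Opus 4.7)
The plan is to reduce the maximum-eigenvalue bound to a trace bound and then control the resulting sum via Gaussian tail inequalities applied to the unrolled dynamics. Since $z_{tr,d}z_{tr,d}^T = \sum_{t=0}^{M-1} z_{t,d}z_{t,d}^T$ is PSD, I would start from
\[
\lambda_{max}(z_{tr,d}z_{tr,d}^T) \;\leq\; \operatorname{tr}(z_{tr,d}z_{tr,d}^T) \;=\; \sum_{t=0}^{M-1}\bigl(\|x_{t,d}\|^2+\|u_{t,d}\|^2\bigr),
\]
which reduces the problem to a uniform control of the state and input norms along the training trajectory.

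Next I would unroll the state recursion. Writing $w'_{t,d}:=B_d u_{t,d}+w_{t,d}$, one has $x_{t,d}=\sum_{i=0}^{t-1} A_d^{\,i}\,w'_{t-1-i,d}$. Since $\phi_d\in\Theta$ and $\{A_d\}$ lie within the unit disk, $\|A_d^{\,i}\|\leq 1$, so the triangle inequality yields $\|x_{t,d}\|\leq t\cdot\max_{0\leq j<M}\|w'_{j,d}\|$, and therefore $\sum_{t=0}^{M-1}\|x_{t,d}\|^2\leq M^3 \max_j\|w'_{j,d}\|^2$. This converts the problem into bounding a maximum of $2M$ Gaussian vector norms, which is the natural setting for concentration.

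I would then invoke the standard Gaussian concentration inequality: for $\xi\sim\mathcal{N}(0,\sigma^2 I_d)$, $\|\xi\|\leq \sigma(\sqrt{d}+\sqrt{2\log(1/\epsilon)})$ with probability $1-\epsilon$. Applying this to each $u_{j,d}$ and $w_{j,d}$ for $j=0,\dots,M-1$ and performing a union bound with $\epsilon=\delta/(10M)$ gives, with probability at least $1-\delta/2$, simultaneous bounds $\|u_{j,d}\|\leq \sigma_a(\sqrt{m}+c)$ and $\|w_{j,d}\|\leq \sigma_w(\sqrt{n}+c)$ with $c=\sqrt{2\log(10M/\delta)}$. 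Combining via $\|w'_{j,d}\|\leq \|\bar B\|\,\|u_{j,d}\|+\|w_{j,d}\|$ and bounding by a common factor yields $\max_j\|w'_{j,d}\|^2 \lesssim \|\bar B\|^2 (1+c')^2 \max\{m\sigma_a^2,n\sigma_w^2\}$.

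Putting everything together, the $\sum \|x_{t,d}\|^2$ contribution is of order $M^3\cdot\|\bar B\|^2(1+c')^2\max\{m\sigma_a^2,n\sigma_w^2\}$, and the $\sum \|u_{t,d}\|^2$ contribution is of the lower order $M\sigma_a^2(\sqrt m+c)^2$, which is absorbed into the dominant term. The main obstacle, and essentially the only subtle point, will be the bookkeeping of constants: specifically, translating the two separate Gaussian bounds for $u$ and $w$ and the cross term into the compact form $\|\bar B\|^2(1+3\sqrt{\log(10M/\delta)})^2\max\{m\sigma_a^2,n\sigma_w^2\}$ of the statement. The factor $3$ in front of $\sqrt{\log(10M/\delta)}$ and the $10$ inside the log are exactly what is needed to absorb the union-bound constants and the crude inequality $(\sqrt{d}+c)^2\leq d(1+c)^2$ (or its variant), so this step is routine once the decomposition above is set up; no additional probabilistic tool is required beyond the Gaussian tail bound and a union bound.
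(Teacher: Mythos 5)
Your argument is correct in substance but follows a genuinely different route from the paper. The paper also starts from $\lambda_{max}(z_{tr,d}z_{tr,d}^T)\leq \mathrm{Tr}(z_{tr,d}z_{tr,d}^T)$, but then expands $\sum_t x_{t,d}x_{t,d}^T$ term by term, domination of the cross terms by diagonal ones, and controls the diagonal contributions via (pseudo-)Wishart/chi-square concentration and the remaining cross terms $\mathrm{Tr}(u_{i,d}u_{j,d}^T)$, $\mathrm{Tr}(w_{i,d}w_{j,d}^T)$ via Chernoff bounds, before a union bound over all these events; the $M^3$ factor emerges from the deterministic triple sum. You instead bound pathwise $\|x_{t,d}\|\leq t\max_j\|w'_{j,d}\|$ and reduce everything to Gaussian norm concentration for the $2M$ vectors $u_{j,d},w_{j,d}$ plus one union bound with $\epsilon=\delta/(10M)$ (total failure probability $\delta/5\leq\delta/2$), which is shorter and more transparent and still lands on the same $M^3$ order; the constant bookkeeping works out with room to spare since $\sum_{t<M}t^2\leq M^3/3$ and $(\sqrt{d}+c)^2\leq d(1+c)^2$, while $\log(10M/\delta)\geq\log 10$ absorbs the factor from $(a+b)^2\leq 2a^2+2b^2$. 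What the paper's finer decomposition buys is a slightly sharper separation of the $\sigma_a^2\|\Bar{B}\|^2$ and $\sigma_w^2$ contributions before its final compression into $\|\Bar{B}\|^2\max\{m\sigma_a^2,n\sigma_w^2\}$, whereas your worst-case-over-$j$ bound is cruder but costs nothing at the stated level of precision. Be aware, though, that two steps you take implicitly are also taken implicitly by the paper and are not innocuous in general: $\|A_d^i\|\leq 1$ does not follow from the spectral radius being inside the unit disk (the paper likewise bounds $\|A_d^{j-1}B_d\|\leq\|\Bar{B}\|$ and $\|A_d^{j-1}\|\leq 1$), and absorbing the pure-noise term $\sigma_w(\sqrt{n}+c)$ into $\|\Bar{B}\|^2\max\{m\sigma_a^2,n\sigma_w^2\}$ requires $\|\Bar{B}\|\geq 1$, exactly as in the paper's last displayed inequality; so these are inherited features of the statement rather than gaps you introduced.
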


\begin{proof}
First, it clear that
\begin{align*}
    \lambda_{max}(z_{tr,d}z_{tr,d}^T)=&\lambda_{max}\left(\sum_{t=0}^{M-1} z_{t,d}z_{t,d}^T\right)\\
    \leq& Tr\left(\sum_{t=0}^{M-1} z_{t,d}z_{t,d}^T\right)\\
    = & Tr\left(\sum_{t=0}^{M-1} x_{t,d}x_{t,d}^T\right)+Tr\left(\sum_{t=0}^{M-1} u_{t,d}u_{t,d}^T\right).
\end{align*}
We next seek upper bounds on $Tr\left(\sum_{t=0}^{M-1} x_{t,d}x_{t,d}^T\right)$ and $Tr\left(\sum_{t=0}^{M-1} u_{t,d}u_{t,d}^T\right)$, respectively.

(1) For the term $Tr\left(\sum_{t=0}^{M-1} x_{t,d}x_{t,d}^T\right)$, through sophisticated manipulation, we can have the following with regard to $x_{t,d}x_{t,d}^T$ given $x_{0,d}=0$:
\begin{align*}
    x_{t,d}x_{t,d}^T=&\sum_{i=0}^{t-1} A_d^{t-1-i}B_d u_{i,d}u_{i,d}^T B_d^T (A_d^T)^{t-1-i}+\sum_{i=0}^{t-1} A_d^{t-1-i} w_{i,d}w_{i,d}^T(A_d^T)^{t-1-i}\\
    &+\sum_{i=0}^{t-1}\sum_{j=0}^{t-1} A_d^{t-1-i} B_d u_{i,d} w_{j,d}^T(A_d^T)^{t-1-j}+\left[\sum_{i=0}^{t-1}\sum_{j=0}^{t-1} A_d^{t-1-i} B_d u_{i,d} w_{j,d}^T(A_d^T)^{t-1-j}\right]^T\\
    &+\sum_{i=0}^{t-1}\sum_{j=0,j\neq i}^{t-1} A_d^{t-1-i}B_d u_{i,d}u_{j,d}^T B_d^T (A_d^T)^{t-1-j}+\sum_{i=0}^{t-1}\sum_{j=0,j\neq i}^{t-1} A_d^{t-1-i}w_{i,d}w_{j,d}^T (A_d^T)^{t-1-j}.
\end{align*}
Note that
\begin{align*}
    &A_d^{t-1-i}B_d u_{i,d}u_{i,d}^TB_d^T(A_d^T)^{t-1-i}+A_d^{t-1-j}w_{j,d}w_{j,d}^T(A_d^T)^{t-1-j}-A_d^{t-1-i} B_d u_{i,d} w_{j,d}^T(A_d^T)^{t-1-j}\\
    &-[A_d^{t-1-i} B_d u_{i,d} w_{j,d}^T(A_d^T)^{t-1-j}]^T\\
    =& A_d^{t-1-i}B_d u_{i,d}[u_{i,d}^T B_d^T (A_d^T)^{t-1-j}-w_{j,d}^T(A_d^T)^{t-1-j}]+A_d^{t-1-j}w_{j,d}[w_{j,d}^T(A_d^T)^{t-1-j}\\
    &-u_{i,d}^TB_d^T(A_d^T)^{t-1-i}]\\
    =&[A_d^{t-1-i}B_d u_{i,d}-A_d^{t-1-j}w_{j,d}][A_d^{t-1-i}B_d u_{i,d}-A_d^{t-1-j}w_{j,d}]^T\\
    \succeq& 0.
\end{align*}
It then follows that
\begin{align*}
    x_{t,d}x_{t,d}^T\preceq&\sum_{i=0}^{t-1} A_d^{t-1-i}B_d u_{i,d}u_{i,d}^T B_d^T (A_d^T)^{t-1-i}+\sum_{i=0}^{t-1} A_d^{t-1-i} w_{i,d}w_{i,d}^T(A_d^T)^{t-1-i}\\
    &+\sum_{i=0}^{t-1}\sum_{j=0}^{t-1}[A_d^{t-1-i}B_d u_{i,d}u_{i,d}^TB_d^T(A_d^T)^{t-1-i}+A_d^{t-1-j}w_{j,d}w_{j,d}^T(A_d^T)^{t-1-j}]\\
    &+\sum_{i=0}^{t-1}\sum_{j=0,j\neq i}^{t-1} A_d^{t-1-i}B_d u_{i,d}u_{j,d}^T B_d^T (A_d^T)^{t-1-j}+\sum_{i=0}^{t-1}\sum_{j=0,j\neq i}^{t-1} A_d^{t-1-i}w_{i,d}w_{j,d}^T (A_d^T)^{t-1-j}\\
    =&(t+1)\sum_{i=0}^{t-1} A_d^{t-1-i}B_d u_{i,d}u_{i,d}^T B_d^T (A_d^T)^{t-1-i}+(t+1)\sum_{i=0}^{t-1} A_d^{t-1-i} w_{i,d}w_{i,d}^T(A_d^T)^{t-1-i}\\
     &+\sum_{i=0}^{t-1}\sum_{j=0,j\neq i}^{t-1} A_d^{t-1-i}B_d u_{i,d}u_{j,d}^T B_d^T (A_d^T)^{t-1-j}+\sum_{i=0}^{t-1}\sum_{j=0,j\neq i}^{t-1} A_d^{t-1-i}w_{i,d}w_{j,d}^T (A_d^T)^{t-1-j}
\end{align*}
which indicates that
\begin{align*}
    &\sum_{t=1}^{M-1}x_{t,d}x_{t,d}^T\\
    \preceq& M\sum_{j=1}^M\sum_{i=0}^{M-j} A_d^{j-1}B_d u_{i,d}u_{i,d}^T B_d^T (A_d^T)^{j-1}+M\sum_{j=1}^M\sum_{i=0}^{M-j} A_d^{j-1} w_{i,d}w_{i,d}^T (A_d^T)^{j-1}\\
    &+\sum_{t=1}^M\sum_{i=0}^{t-1}\sum_{j=0,j\neq i}^{t-1} A_d^{t-1-i}B_d u_{i,d}u_{j,d}^T B_d^T (A_d^T)^{t-1-j}+\sum_{t=1}^M\sum_{i=0}^{t-1}\sum_{j=0,j\neq i}^{t-1} A_d^{t-1-i}w_{i,d}w_{j,d}^T (A_d^T)^{t-1-j}.
\end{align*}
Therefore, the trace $Tr\left(\sum_{t=0}^{M-1} x_{t,d}x_{t,d}^T\right)$ can be bounded from above as follows:
\begin{align*}
    &Tr\left(\sum_{t=0}^{M-1} x_{t,d}x_{t,d}^T\right)\\
    \leq& M\sum_{j=1}^M Tr\left[\sum_{i=0}^{M-j} A_d^{j-1}B_d u_{i,d}u_{i,d}^T B_d^T (A_d^T)^{j-1}\right]+M\sum_{j=1}^M Tr\left[\sum_{i=0}^{M-j} A_d^{j-1} w_{i,d}w_{i,d}^T (A_d^T)^{j-1}\right]\\
    &+\sum_{t=1}^M\sum_{i=0}^{t-1}\sum_{j=0,j\neq i}^{t-1} Tr[A_d^{t-1-i}B_d u_{i,d}u_{j,d}^T B_d^T (A_d^T)^{t-1-j}]\\
    &+\sum_{t=1}^M\sum_{i=0}^{t-1}\sum_{j=0,j\neq i}^{t-1} Tr[A_d^{t-1-i}w_{i,d}w_{j,d}^T (A_d^T)^{t-1-j}].
\end{align*}

Next, we first find an upper bound on $Tr\left[\sum_{i=0}^{M-j} A_d^{j-1}B_d u_{i,d}u_{i,d}^T B_d^T (A_d^T)^{j-1}\right]$. It can be shown that
\begin{align*}
    Tr\left[\sum_{i=0}^{M-j} A_d^{j-1}B_d u_{i,d}u_{i,d}^T B_d^T (A_d^T)^{j-1}\right]&=Tr\left[A_d^{j-1}B_d (\sum_{i=0}^{M-j} u_{i,d}u_{i,d}^T) B_d^T (A_d^T)^{j-1}\right]\\
    &=Tr\left[B_d^T (A_d^T)^{j-1} A_d^{j-1}B_d (\sum_{i=0}^{M-j} u_{i,d}u_{i,d}^T)\right]\\
    &\leq \lambda_{max}(B_d^T (A_d^T)^{j-1} A_d^{j-1}B_d) Tr\left(\sum_{i=0}^{M-j} u_{i,d}u_{i,d}^T\right)\\
    &\leq \|A_d^{j-1}B_d\|^2 Tr\left(\sum_{i=0}^{M-j} u_{i,d}u_{i,d}^T\right)
\end{align*}
where the first inequality holds because the following is true based on Von Neumann's trace inequality:
\begin{align}\label{traceproduct}
    \lambda_{min}(X) Tr(Y)\leq Tr(XT)\leq \lambda_{max}(X) Tr(Y)
\end{align}
for positive semi-definite matrices $X$ and $Y$. Therefore, it suffices to bound $Tr\left(\sum_{i=0}^{M-j} u_{i,d}u_{i,d}^T\right)$ from above.

Let $\mathcal{U}_j=\sum_{i=0}^{M-j} u_{i,d}u_{i,d}^T$. It is clear that $\mathcal{U}_j$ follows a pseudo Wishart distribution $\mathcal{SW}_m(M-j,\sigma_a^2 I_m)$ considering that the dimension $m$ is generally larger than the training size $M$. Based on \cite{horn2012matrix}, there exists a matrix $Q\in\mathbb{R}^{m\times r_j}$ such that $\mathcal{U}_j=Q\Tilde{\mathcal{U}}_j Q^T$, where $r_j$ is the rank of $\mathcal{U}_j$ ($r_j\leq m$) and $\Tilde{\mathcal{U}}_j$ follows a Wishart distribution, i.e., $\Tilde{\mathcal{U}}_j\sim \mathcal{W}_{r_j}(M-J, I_{r_j})$. Hence,
\begin{align*}
    Tr\left(\sum_{i=0}^{M-j} u_{i,d}u_{i,d}^T\right)=&Tr(Q\Tilde{\mathcal{U}}_j Q^T)\\
    =&Tr(Q^TQ\Tilde{\mathcal{U}}_j)\\
    \leq& \lambda_{max}(Q^TQ) Tr(\Tilde{\mathcal{U}}_j)\\
    =&\sigma_a^2 Tr(\Tilde{\mathcal{U}}_j),
\end{align*}
such that
\begin{align}\label{bounduu}
    Tr\left[\sum_{i=0}^{M-j} A_d^{j-1}B_d u_{i,d}u_{i,d}^T B_d^T (A_d^T)^{j-1}\right]\leq \sigma_a^2 \|A_d^{j-1}B_d\|^2 Tr(\Tilde{\mathcal{U}}_j).
\end{align}
Following the same line, it can be shown that
\begin{align}\label{boundww}
    Tr\left[\sum_{i=0}^{M-j} A_d^{j-1} w_{i,d}w_{i,d}^T (A_d^T)^{j-1}\right]\leq \sigma_w^2 \|A_d^{j-1}\|^2 Tr(\Tilde{\mathcal{W}}_j)
\end{align}
where $\Tilde{\mathcal{W}}_j\sim\mathcal{W}_{q_j}(M-J, I_{q_j})$ for some $q_j\leq n$.

For the term $Tr[A_d^{t-1-i}B_d u_{i,d}u_{j,d}^T B_d^T (A_d^T)^{t-1-j}]$, we have
\begin{align}\label{bounduij}
    &Tr[A_d^{t-1-i}B_d u_{i,d}u_{j,d}^T B_d^T (A_d^T)^{t-1-j}]\nonumber\\
    \leq&m\|A_d^{t-1-i}B_d u_{i,d}u_{j,d}^T B_d^T (A_d^T)^{t-1-j}\|\nonumber\\
    \leq& m\|A_d^{t-1-i}B_d\|\|A_d^{t-1-j}B_d\|\|u_{i,d}u_{j,d}^T\| \nonumber\\
    \leq& m\|A_d^{t-1-i}B_d\|\|A_d^{t-1-j}B_d\| Tr(u_{i,d}u_{j,d}^T).
\end{align}
And similarly,
\begin{align}\label{boundwij}
    Tr[A_d^{t-1-i}w_{i,d}w_{j,d}^T (A_d^T)^{t-1-j}]\leq n\|A_d^{t-1-i}A_d^{t-1-j}\| Tr(w_{i,d}w_{j,d}^T).
\end{align}

Combing \eqref{bounduu} - \eqref{boundwij}, we can obtain that
\begin{align*}
    Tr\left(\sum_{t=0}^{M-1} x_{t,d}x_{t,d}^T\right)
    \leq& M\sigma_a^2\sum_{j=1}^M [\|A_d^{j-1}B_d\|^2 Tr(\Tilde{\mathcal{U}}_j)]+M\sigma_w^2\sum_{j=1}^M [\|A_d^{j-1}\|^2 Tr(\Tilde{\mathcal{W}}_j)]\\
    &+m\sum_{t=1}^M\sum_{i=0}^{t-1}\sum_{j=0,j\neq i}^{t-1}[\|A_d^{t-1-i}B_d\|\|A_d^{t-1-j}B_d\| Tr(u_{i,d}u_{j,d}^T)]\\
    &+n\sum_{t=1}^M\sum_{i=0}^{t-1}\sum_{j=0,j\neq i}^{t-1}[\|A_d^{t-1-i}A_d^{t-1-j}\| Tr(w_{i,d}w_{j,d}^T)]\\
    \leq& M\sigma_a^2\|\Bar{B}\|^2\sum_{j=1}^M Tr(\Tilde{\mathcal{U}}_j)+M\sigma_w^2\sum_{j=1}^M Tr(\Tilde{\mathcal{W}}_j)\\
    &+m\|\Bar{B}\|^2\sum_{t=1}^M\sum_{i=0}^{t-1}\sum_{j=0,j\neq i}^{t-1}Tr(u_{i,d}u_{j,d}^T)\\
    &+n\sum_{t=1}^M\sum_{i=0}^{t-1}\sum_{j=0,j\neq i}^{t-1}Tr(w_{i,d}w_{j,d}^T).
\end{align*}

Moreover, since $\Tilde{\mathcal{U}}_j\sim \mathcal{W}_{r_j}(M-J, I_{r_j})$, $Tr(\Tilde{\mathcal{U}}_j)\sim \chi^2_{r_j(M-J)}$, for which we have the following high probability bound:
\begin{align}\label{boundchisquareu}
   & \mathbb{P}(Tr(\Tilde{\mathcal{U}}_j)\leq Mm+2\sqrt{Mma}+2a)\nonumber\\
    \geq & \mathbb{P}(Tr(\Tilde{\mathcal{U}}_j)\leq r_j(M-J)+2\sqrt{ar_j(M-J)}+2a)\nonumber\\
    \geq& 1-e^{-a}
\end{align}
for any $a>0$. Similarly, we have
\begin{align}\label{boundchisquarew}
    \mathbb{P}(Tr(\Tilde{\mathcal{W}}_j)\leq Mn+2\sqrt{Mna}+2a)\geq 1-e^{-a}.
\end{align}
For the term $Tr(u_{i,d}u_{j,d}^T)$, based on the Chernoff bound, we can have
\begin{align*}
    &\mathbb{P}[Tr(u_{i,d}u_{j,d}^T)\leq b]\\
    \geq& 1-\inf_{c>0} e^{-cb}\mathbb{E}[e^{b Tr(u_{i,d}u_{j,d}^T)}]\\
    =& 1-\inf_{c>0} e^{-cb}\mathbb{E}[e^{b\sum_{k=1}^m u_{i,d}(k)u_{j,d}(k)}]\\
    =& 1-\inf_{c>0} e^{-cb}\prod_{k=1}^m \mathbb{E}[e^{t u_{i,d}(k)u_{j,d}(k)}]\\
    =& 1-\inf_{0<c<1/\sigma_a^2} e^{-cb}\left(\frac{1}{\sqrt{\pi(1-\sigma_a^4 c^2)}}\right)^m\\
    \geq& 1-e^{-\frac{b}{\sqrt{2}\sigma_a^2}}\left(\frac{\pi}{2}\right)^{-m/2},
\end{align*}
which implies that
\begin{align}\label{boundchernoffu}
    \mathbb{P}\left[Tr(u_{i,d}u_{j,d}^T)\leq \sqrt{2}\sigma_a^2 \left(a'-\frac{m}{2}\log\frac{\pi}{2}\right)\right]\geq 1-e^{-a'}
\end{align}
for $a'>0$.
Following the same line, we can obtain the following for the term $Tr(w_{i,d}w_{j,d}^T)$:
\begin{align}\label{boundchernoffw}
    \mathbb{P}\left[Tr(w_{i,d}w_{j,d}^T)\leq \sqrt{2}\sigma_w^2 \left(a'-\frac{n}{2}\log\frac{\pi}{2}\right)\right]\geq 1-e^{-a'}.
\end{align}

Based on \eqref{boundchisquareu} - \eqref{boundchernoffw}, it follows that
\begin{align*}
    &\mathbb{P}\Big[Tr\left(\sum_{t=0}^{M-1} x_{t,d}x_{t,d}^T\right)\leq M^2\sigma_a^2\|\Bar{B}\|^2(Mm+2\sqrt{Mm\log\frac{10M}{\delta}}+2\log\frac{10M}{\delta})\\
    &+M^2\sigma_w^2(Mn+2\sqrt{Mn\log\frac{10M}{\delta}}+2\log\frac{10M}{\delta})\\
    &+\sqrt{2}\sigma_a^2mM^3\|\Bar{B}\|^2\log\frac{10M^3}{\delta(\pi/2)^{m/2}}+\sqrt{2}\sigma_w^2nM^3\log\frac{10M^3}{\delta(\pi/2)^{n/2}}\Big]\\
    \geq& 1-\frac{2\delta}{5}.
\end{align*}

(2) Next, for the term $Tr\left(\sum_{t=0}^{M-1} u_{t,d}u_{t,d}^T\right)$, it can be seen that
\begin{align*}
    Tr\left(\sum_{t=0}^{M-1} u_{t,d}u_{t,d}^T\right)=&\sum_{t=0}^{M-1} Tr(u_{t,d}u_{t,d}^T)\\
    =&\sum_{t=0}^{M-1} \sum_{i=1}^m u_{t,d}^2(i)\\
    =&\sigma_a^2\sum_{t=0}^{M-1} \sum_{i=1}^m \Tilde{u}_{t,d}^2(i)\\
    =&\sigma_a^2\mathcal{U}
\end{align*}
where $\Tilde{u}_{t,d}^2\sim\mathcal{N}(0, I_m)$ and $\mathcal{U}\sim\chi^2_{Mm}$. Therefore,
\begin{align*}
    \mathbb{P}\left[Tr\left(\sum_{t=0}^{M-1} u_{t,d}u_{t,d}^T\right)\leq \sigma_a^2(Mm+2\sqrt{Mm\log\frac{10}{\delta}}+2\log\frac{10}{\delta})\right]\geq 1-\frac{\delta}{10}.
\end{align*}

In a nutshell, we can obtain that with probability $1-\delta/2$, the following holds:
\begin{align}\label{lambdamax_upperbound}
    &\lambda_{max}(z_{tr,d}z_{tr,d}^T)\nonumber\\
    \leq& M^2\sigma_a^2\|\Bar{B}\|^2(Mm+2\sqrt{Mm\log\frac{10M}{\delta}}+2\log\frac{10M}{\delta})\nonumber\\
    &+M^2\sigma_w^2(Mn+2\sqrt{Mn\log\frac{10M}{\delta}}+2\log\frac{10M}{\delta})\nonumber\\
    &+\sqrt{2}\sigma_a^2mM^3\|\Bar{B}\|^2\log\frac{10M^3}{\delta(\pi/2)^{m/2}}+\sqrt{2}\sigma_w^2nM^3\log\frac{10M^3}{\delta(\pi/2)^{n/2}}\nonumber\\
    &+\sigma_a^2\left(Mm+2\sqrt{Mm\log\frac{10}{\delta}}+2\log\frac{10}{\delta}\right)\nonumber\\
    \leq& M^2\sigma_a^2\|\Bar{B}\|^2(Mm+2\sqrt{Mm\log\frac{10M}{\delta}}+2\log\frac{10M}{\delta}+\sqrt{2}Mm\log\frac{10M^3}{\delta(\pi/2)^{m/2}})\nonumber\\
    &+M^2\sigma_w^2(Mn+2\sqrt{Mn\log\frac{10M}{\delta}}+2\log\frac{10M}{\delta}+\sqrt{2}Mn\log\frac{10M^3}{\delta(\pi/2)^{n/2}})\nonumber\\
    &+\sigma_a^2\left(Mm+2\sqrt{Mm\log\frac{10}{\delta}}+2\log\frac{10}{\delta}\right)\nonumber\\
    \leq& M^3m\sigma_a^2\|\Bar{B}\|^2(1+3\sqrt{\log\frac{10M}{\delta}})^2+M^3n(1+3\sqrt{\log\frac{10M}{\delta}})^2\nonumber\\
    \leq& M^3\|\Bar{B}\|^2\Bigg(1+3\sqrt{\log\frac{10M}{\delta}}\Bigg)^2\max\{m\sigma_a^2,n\sigma_w^2\}.
\end{align}
\end{proof}

\subsection{Lower bound on $\lambda_{min}(ZZ^T)$}

Note that
\[
{\Bar{\lambda}} \triangleq 
M^3\|\Bar{B}\|^2\Big(1+3\sqrt{\log\frac{10DM}{\delta}}\Big)^2\max\{m\sigma_a^2,n\sigma_w^2\}.
\]

Based on Lemma \ref{lem:boundlambda}, and by setting the learning rate
    $0<\alpha< 1/\Bar{\lambda}$,
we can have $\min_d\{\lambda_{min}(I-\alpha z_{tr,d}z_{tr,d}^t)\}\geq 1-\alpha \Bar{\lambda}>0$, i.e., $I-\alpha z_{tr,d}z_{tr,d}^T\succeq (1-\alpha\Bar{\lambda})I\succ 0$, with probability $1-\delta/2$. Consequently, we have the following result about the lower bound on $\lambda_{min}(ZZ^T)$:

\begin{mylemma}\label{lem:lowerboundzz}
With probability $1-\delta/2$ for any $\delta\in (0,1)$, the following inequality holds:
\begin{align*}
   \lambda_{min}(ZZ^T)\geq \frac{(1-\alpha\Bar{\lambda})^2}{2}\lambda_{min}\left(\sum_{d=1}^D z_{te,d}z^T_{te,d}\right).
\end{align*}
\end{mylemma}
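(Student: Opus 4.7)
The plan is twofold: (i) establish a high-probability event on which the training Gram matrices are uniformly bounded, and then (ii) convert the resulting uniform operator lower bound $I-\alpha z_{tr,d}z_{tr,d}^T \succeq (1-\alpha\bar{\lambda})I$ into the claimed lower bound on $\lambda_{\min}(ZZ^T)$ through a Loewner-type manipulation on each block.

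For the probabilistic part, I will invoke Lemma~\ref{lem:boundlambda} with the parameter $\delta$ there replaced by $\delta/D$, which yields, for each fixed $d$, the per-block bound $\mathbb{P}(\lambda_{\max}(z_{tr,d}z_{tr,d}^T)\leq \bar{\lambda})\geq 1-\delta/(2D)$. The factor of $D$ inside the logarithm in the definition of $\bar{\lambda}$ is precisely what absorbs the loss from the subsequent union bound over $d=1,\ldots,D$. Thus on the good event $\mathcal{E}:=\{\lambda_{\max}(z_{tr,d}z_{tr,d}^T)\leq\bar{\lambda}\ \forall d\}$, which has probability at least $1-\delta/2$, the hypothesis $\alpha<1/\bar{\lambda}$ yields
\[
    I-\alpha z_{tr,d}z_{tr,d}^T \;\succeq\; (1-\alpha\bar{\lambda})\,I \;\succ\; 0
\]
uniformly in $d$, as was already noted immediately before the lemma.

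The remainder is deterministic. Write $A_d:=I-\alpha z_{tr,d}z_{tr,d}^T$, $B_d:=z_{te,d}z_{te,d}^T$, and $\lambda:=1-\alpha\bar{\lambda}$, so that $ZZ^T=\sum_{d=1}^D A_d B_d A_d$ and, on $\mathcal{E}$, $A_d=\lambda I+E_d$ with $E_d\succeq 0$. For any unit vector $v$, the identity
\[
    v^T A_d B_d A_d v \;=\; \bigl\|\lambda B_d^{1/2}v+B_d^{1/2}E_d v\bigr\|^2,
\]
combined with the elementary inequality $\|x+y\|^2\geq \tfrac{1}{2}\|x\|^2-\|y\|^2$ and a weighted AM--GM split of the cross term $2\lambda\,v^T E_d B_d v$ (using $v^T E_d B_d E_d v\geq 0$ to absorb the residual), gives the scalar bound $v^T A_d B_d A_d v\geq \tfrac{\lambda^2}{2}\,v^T B_d v$. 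Summing over $d$ and then minimizing over unit $v$ yields
\[
    \lambda_{\min}(ZZ^T) \;\geq\; \frac{(1-\alpha\bar{\lambda})^2}{2}\,\lambda_{\min}\!\Bigl(\sum_{d=1}^D z_{te,d}z_{te,d}^T\Bigr),
\]
which is exactly the claim on $\mathcal{E}$.

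The main obstacle I expect is the deterministic matrix step above. There is no clean pointwise Loewner inequality $A_d B_d A_d \succeq c\,\lambda^2 B_d$ with a universal constant $c$, because the cross term $\lambda(E_d B_d+B_d E_d)$ that appears when one expands $A_d B_d A_d$ is not sign-definite in general. Controlling this cross term through a weighted AM--GM (rather than through a naive PSD ordering argument, which would fail) is what produces the factor $1/2$ in the statement; the rest of the argument is the concentration bound of Lemma~\ref{lem:boundlambda} and a routine union bound over the $D$ blocks.
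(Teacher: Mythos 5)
Your probabilistic step is exactly the paper's: Lemma~\ref{lem:boundlambda} (whose $\Bar{\lambda}$ already carries the $\log(10DM/\delta)$ needed to survive the union bound over the $D$ blocks) yields an event of probability at least $1-\delta/2$ on which $I-\alpha z_{tr,d}z_{tr,d}^T\succeq(1-\alpha\Bar{\lambda})I\succ0$ for every $d$. The genuine gap is in your deterministic step. The scalar bound you claim, $v^TA_dB_dA_dv\geq\tfrac{\lambda^2}{2}v^TB_dv$ for \emph{every} unit $v$, is (by homogeneity) precisely the pointwise Loewner inequality $A_dB_dA_d\succeq\tfrac{\lambda^2}{2}B_d$ that you yourself correctly suspect cannot hold — and it is indeed false: take $A_d=\diag(1/2,1)$ (so $\lambda=1/2$, $E_d=\diag(0,1/2)$) and $B_d=\begin{pmatrix}1&0.9\\0.9&1\end{pmatrix}$; then $A_dB_dA_d-\tfrac{\lambda^2}{2}B_d=\begin{pmatrix}0.125&0.3375\\0.3375&0.875\end{pmatrix}$ has negative determinant, so some unit $v$ violates your bound. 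The proposed absorption also cannot work arithmetically: expanding $\|\lambda B_d^{1/2}v+B_d^{1/2}E_dv\|^2=\lambda^2v^TB_dv+v^TE_dB_dE_dv+2\lambda v^TB_dE_dv$, the cross term is a symmetrized product of PSD matrices and can be negative; by Cauchy--Schwarz you would need $\tfrac12(\lambda a)^2+b^2\geq2\lambda ab$ with $a=\|B_d^{1/2}v\|$, $b=\|B_d^{1/2}E_dv\|$, which fails for $b\approx\lambda a$, and any AM--GM split $2\lambda ab\leq\epsilon\lambda^2a^2+\epsilon^{-1}b^2$ would need $\epsilon\leq1/2$ and $\epsilon\geq1$ simultaneously (your inequality $\|x+y\|^2\geq\tfrac12\|x\|^2-\|y\|^2$ only gives the bound \emph{minus} $v^TE_dB_dE_dv$, which cannot be dropped). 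Since your route needs the bound uniformly in $v$ before minimizing, summing over $d$ does not repair it: the minimizing direction of $\sum_dA_dB_dA_d$ need not be the minimizing direction of $\sum_dB_d$, which is the entire difficulty of the lemma.

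For comparison, the paper does not argue per direction. It first asserts the full-strength Loewner claim \eqref{eq:pd}, $\bE\bG\bE\succeq(1-\alpha\Bar{\lambda})^2\bG$ for strictly positive definite $\bG$, via simultaneous diagonalization by congruence (Horn--Johnson) plus a spectral-radius estimate; it then applies this to the regularized matrices $\bG=z_{te,d}z_{te,d}^T+\epsilon_2I$, sums over $d$, and finally optimizes $\epsilon_2$ — the factor $1/2$ in the statement comes from that choice of $\epsilon_2$, not from an AM--GM. You should be aware, however, that your instinct about the cross term is well-founded in a stronger sense: the same $2\times2$ example (with $\bG=B_d$ strictly positive definite) makes $\bE\bG\bE-(1-\alpha\Bar{\lambda})^2\bG$ indefinite, so the paper's step \eqref{eq:pd} — whose proof bounds $\rho(\bG\bar{\bE}^{-1}\bG^{-1}\bar{\bE}^{-1})$ by a product of near-spectral-radius norms of two non-commuting factors under a single matrix norm — is itself not sound as written. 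In short, your deterministic step as proposed does not go through, and closing the gap requires a genuinely different argument rather than either the AM--GM absorption or a direct appeal to the paper's per-block Loewner domination.
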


\begin{proof}

To prove Lemma \ref{lem:lowerboundzz}, let $\bE=I-\alpha z_{tr,d}z_{tr,d}^t\succ 0$, and we first show that 
\begin{align}\label{eq:pd}
    \frac{1}{(1-\alpha\Bar{\lambda})^2}\bE\bG\bE\succeq \bG
\end{align}
for $\bG\in\mathbb{R}^{(m+n)\times (m+n)}$ and $\bG\succ 0$.

For ease of exposition, let $\bar{\bE}=\frac{1}{1-\alpha\Bar{\lambda}}\bE$ and $\bC=\Bar{\bE}\bG\bar{\bE}$. Then $\bC\succ 0$.  Based on Corollary 7.6.5 in \cite{horn2012matrix}, there exists a nonsingular matrix $\bS\in\mathbb{R}^{(m+n)\times (m+n)}$, such that
\begin{align*}
    \bC=\bS I\bS^T ~~\text{and}~~ \bG=\bS\Sigma\bS^T
\end{align*}
where $\Sigma=\diag(\bd_1,...,\bd_{m+n})$ is diagonal. To show $\bC\succeq\bG$, it suffices to show $\bS(I-\Sigma)\bS^T\succeq 0$, which is the case if and only if all $\bd_i\leq 1$ for $i\in[1, m+n]$.

To show $\bd_i\leq 1$ for $i\in[1, m+n]$, it is worth to note that
\begin{align*}
    \bG\bC^{-1}=\bS\sigma\bS^T(\bS^T)^{-1}\bS^{-1}=\bS\Sigma\bS^{-1}.
\end{align*}
Hence, $\bG\bC^{-1}$ is similar to $\Sigma$, and they have the same eigenvalues $\bd_1,...,\bd_{m+n}$. Moreover, according to Lemma 5.6.10 in \cite{horn2012matrix}, there exists a matrix norm $|||\cdot|||$ such that the following holds for the spectral $\rho(\bG\bC^{-1})$:
\begin{align*}
    \rho(\bG\bC^{-1})&=\rho(\bG(\Bar{\bE})^{-1}\bG^{-1}(\Bar{\bE})^{-1})\\
    &\leq |||\bG(\Bar{\bE})^{-1}\bG^{-1}(\Bar{\bE})^{-1}|||\\
    &\leq |||\bG(\Bar{\bE})^{-1}\bG^{-1}|||\cdot |||(\Bar{\bE})^{-1}|||\\
    &\leq [\rho(\bG(\Bar{\bE})^{-1}\bG^{-1})+\epsilon_1][\rho(\Bar{\bE}^{-1})+\epsilon_1]\\
    &=[\rho(\Bar{\bE}^{-1})+\epsilon_1]^2\\
    &\leq \left[\frac{1-\alpha\Bar{\lambda}}{\lambda_{min}(\bE)}+\epsilon_1\right]^2\\
    &\leq 1
\end{align*}
for some $0<\epsilon_1\leq 1-\frac{1-\alpha\Bar{\lambda}}{\lambda_{min}(\bE)}$. Therefore, \eqref{eq:pd} holds.

Next, let $\bG=z_{te,d}z_{te,d}^T+\epsilon_2 I$ for $\epsilon_2>0$. Hence, $\bG\succ 0$. It follows that
\begin{align*}
    \bE(z_{te,d}z_{te,d}^T+\epsilon_2 I)\bE\succeq (1-\alpha\Bar{\lambda})^2 (z_{te,d}z_{te,d}^T+\epsilon_2 I)
\end{align*}
such that
\begin{align*}
    \sum_{d=1}^D \bE(z_{te,d}z_{te,d}^T+\epsilon_2 I)\bE\succeq (1-\alpha\Bar{\lambda})^2 \sum_{d=1}^D (z_{te,d}z_{te,d}^T+\epsilon_2 I).
\end{align*}
It can then be seen that
\begin{align*}
    \lambda_{min}\left( \sum_{d=1}^D \bE(z_{te,d}z_{te,d}^T+\epsilon_2 I)\bE\right)\geq(1-\alpha\Bar{\lambda})^2\lambda_{min}\left(\sum_{d=1}^D z_{te,d}z_{te,d}^T\right)+\epsilon_2D(1-\alpha\Bar{\lambda})^2
\end{align*}
and
\begin{align*}
    \lambda_{min}\left( \sum_{d=1}^D \bE(z_{te,d}z_{te,d}^T+\epsilon_2 I)\bE\right)\leq \lambda_{min}\left(\sum_{d=1}^D \bE z_{te,d}z_{te,d}^T\bE\right)+\epsilon_2\lambda_{max}\left(\sum_{d=1}^D \bE\bE\right).
\end{align*}
Therefore, we can have
\begin{align*}
    \lambda_{min}(ZZ^T)=&\lambda_{min}\left(\sum_{d=1}^D \bE z_{te,d}z_{te,d}^T\bE\right)\\
    \geq& (1-\alpha\Bar{\lambda})^2\lambda_{min}\left(\sum_{d=1}^D z_{te,d}z_{te,d}^T\right)+\epsilon_2D(1-\alpha\Bar{\lambda})^2-\epsilon_2\lambda_{max}\left(\sum_{d=1}^D \bE\bE\right).
\end{align*}
Let $\epsilon_2=\frac{(1-\alpha\Bar{\lambda})^2\lambda_{min}\left(\sum_{d=1}^D z_{te,d}z_{te,d}^T\right)}{2\left[\lambda_{max}\left(\sum_{d=1}^D \bE\bE\right)-D(1-\alpha\Bar{\lambda})^2\right]}$. We can obtain that
\begin{align*}
     \lambda_{min}(ZZ^T)\geq \frac{(1-\alpha\Bar{\lambda})^2}{2}\lambda_{min}\left(\sum_{d=1}^D z_{te,d}z^T_{te,d}\right).
\end{align*}

\end{proof}


\subsection{Lower bound on $\lambda_{min}\left(\sum_{d=1}^D z_{te,d}z^T_{te,d}\right)$}

To obtain a lower bound on $\sqrt{\lambda_{min}(ZZ^T)}$, it suffices to find a lower bound on $\lambda_{min}\left(\sum_{d=1}^D z_{te,d}z^T_{te,d}\right)$. Note that
\begin{equation*}
    \sum_{d=1}^D z_{te,d}z^T_{te,d}=\sum_{d=1}^D \sum_{t=M}^{L-1} z_{t,d}z^T_{t,d},
\end{equation*}
which indicates that
\begin{equation}\label{minproduct}
    \lambda_{min}\left(\sum_{d=1}^D z_{te,d}z^T_{te,d}\right)=\min_{u\in\mathcal{S}^{m+n-1}}\sum_{d=1}^D\sum_{t=M}^{L-1} \langle z_{t,d}, u\rangle^2,
\end{equation}
where $\mathcal{S}^{m+n-1}$ denotes the unit sphere in $\mathbb{R}^{m+n}$. Hence, it reduces to find a lower bound of the right hand side of the equation \eqref{minproduct}. 

Given any realization sequence of $D$ blocks $\{\phi_d\}$, if we put the testing sequence  for all $D$ blocks together as a `super-sequence', this `super-sequence' can be then regarded as a combination of $D$ independent martingale processes  where within each block the sequence of the system states is a martingale process with respect to the filtration  $\mathcal{F}_t:=\sigma(z_{M,d},...,z_{t,d}, w_{M,d},..., w_{t,d})$. Therefore, finding a lower bound of \eqref{minproduct} reduces to quantifying the covariates $z_{t,d}$ for the super-sequence.

To this end, we propose a two-scale small-ball method. More specifically, by dividing the testing sequence $z_{te,d}$ of each block $d$ as a set of mini-blocks with block size $k$, a martingale small-ball method is applied to evaluate the covariates within the entire testing sequence. Next, observing that in fact each block as a whole also satisfies the small-ball condition, we apply the Mendelson's small-ball method once again to evaluate the sample covariances of the entire super-sequence.

To this end, we first need to check if the sequence $\{z_{t,d}\}_{t\geq 0}$ satisfies the Martingale small-ball condition \cite{simchowitz2018learning}. 
\begin{mylemma}\label{lem:lem.a.2}
For any block $d$, let $x_{0,d}$ be any initial state in $\mathbb{R}^n$. Then, the process $z_{t,d}$ satisfies the $(k,\Gamma_{\lfloor k/2\rfloor,d},\frac{3}{20})$-BMSB condition for $k\in [1, \lfloor L-M\rfloor/2]$.
\end{mylemma}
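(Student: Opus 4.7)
The plan is to verify the BMSB condition by directly computing the one-step conditional distribution of $Z_t := \langle u, z_{t,d}\rangle = u_x^T x_{t,d} + u_u^T u_{t,d}$ for a generic unit vector $u = [u_x^T,u_u^T]^T \in \mathcal{S}^{m+n-1}$, and then invoking Gaussian anti-concentration. Unrolling the block dynamics from time $j$ gives
\begin{align*}
x_{j+i,d} = A_d^i x_{j,d} + \sum_{l=0}^{i-1} A_d^{i-1-l}\bigl(B_d u_{j+l,d} + w_{j+l,d}\bigr),
\end{align*}
and under the paper's filtration $\mathcal{F}_j = \sigma(z_{M,d},\ldots,z_{j,d},w_{M,d},\ldots,w_{j,d})$, the $l=0$ summand together with $x_{j,d}$ is $\mathcal{F}_j$-measurable, while the terms for $l \geq 1$ and the future action $u_{j+i,d}$ are independent of $\mathcal{F}_j$ and jointly Gaussian. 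Consequently $Z_{j+i}\mid\mathcal{F}_j$ is a scalar Gaussian with some (possibly random) mean and conditional variance $u^T \Gamma_{i-1,d}\, u$, the index shift arising because $u_{j,d}$ and $w_{j,d}$ already belong to $\mathcal{F}_j$.

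Next I would exploit monotonicity of the finite-time Gramians: since $G_{t,d}$ and $F_{t,d}$ are partial sums of PSD matrices, $\Gamma_{t,d}$ is non-decreasing in $t$, so $u^T \Gamma_{i-1,d}\, u \geq u^T \Gamma_{\lfloor k/2 \rfloor,d}\, u =: \nu^2$ for every $i \in \{\lfloor k/2\rfloor+1,\ldots,k\}$, a set of size $\lceil k/2 \rceil \geq k/2$. For any such $i$, writing $Z_{j+i} = \mu + \sigma_{i,d}(u)\, Y$ with $Y \sim \mathcal{N}(0,1)$ and $\sigma_{i,d}(u) \geq \nu$, a standard Gaussian anti-concentration bound yields the $\mathcal{F}_j$-uniform estimate
\begin{align*}
\mathbb{P}(|Z_{j+i}| \geq \nu \mid \mathcal{F}_j) \geq \inf_{a \in \mathbb{R}} \mathbb{P}(|a + Y| \geq 1) = 2\Phi(-1),
\end{align*}
where the infimum, attained at $a = 0$ by symmetry, does not depend on the random conditional mean.

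Averaging over $i = 1,\ldots,k$ then closes the argument: at least $\lceil k/2 \rceil$ summands are bounded below by $2\Phi(-1)$ while the remaining ones are nonnegative, so
\begin{align*}
\frac{1}{k}\sum_{i=1}^k \mathbb{P}(|Z_{j+i}| \geq \nu \mid \mathcal{F}_j) \geq \Phi(-1) \approx 0.1587 > \tfrac{3}{20} \quad \text{a.s.},
\end{align*}
which is precisely the $(k,\nu,3/20)$-BMSB property for the scalar process $\langle u, z_{t,d}\rangle$; since $u \in \mathcal{S}^{m+n-1}$ was arbitrary, this delivers the claimed $(k,\Gamma_{\lfloor k/2\rfloor,d},3/20)$-BMSB condition on $\{z_{t,d}\}$. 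The main (and really only) obstacle I anticipate is careful bookkeeping of which quantities lie in $\mathcal{F}_j$ under the paper's convention; once the correct index shift in the Gramian is identified, the threshold $\lfloor k/2 \rfloor$ is chosen precisely so that at least half of the $k$ indices fall in the anti-concentration regime, and the numerical identity $\Phi(-1) > 3/20$ yields the claimed constant. The argument is uniform in $d$ and in the (arbitrary) initial state $x_{0,d}$.
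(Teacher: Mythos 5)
Your proof is correct and takes essentially the same route as the paper's (which follows Simchowitz et al.'s small-ball argument): conditional Gaussianity of $\langle u, z_{j+i,d}\rangle$ given $\mathcal{F}_j$, monotonicity of the finite-time Gramians so that at least $\lceil k/2\rceil$ of the $k$ indices have conditional variance at least $u^T\Gamma_{\lfloor k/2\rfloor,d}\,u$, and the worst-case-mean Gaussian anti-concentration bound $2\Phi(-1)$, yielding an average of $\Phi(-1)>3/20$. You are in fact slightly more careful than the paper about the index shift forced by its filtration convention (conditional covariance $\Gamma_{i-1,d}$ rather than $\Gamma_{i,d}$, since $u_{j,d}$ and $w_{j,d}$ are already $\mathcal{F}_j$-measurable), which does not affect the conclusion.
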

\begin{proof}
Consider the $d$-th block $x_{t+1,d}=A_d x_{t,d}+B_d u_{t,d}+w_{t,d}$, where $x_{0,d}\in \mathbb{R}^n$, $a_{t,d}\sim\mathcal{N}(0, \sigma_a^2I_m)$ and $w_{t,d}\sim\mathcal{N}(0, \sigma_w^2I_m)$, we can have
\begin{equation*}
    x_{t,d}=A_d^{t}x_{0,d}+\sum_{i=0}^{t-1} A_d^i w_{t-1-i,d}'.
\end{equation*}
Since $w'_{t,d}\sim \mathcal{N}(0, \sigma_a^2 B_d B_d^T+\sigma_w^2 I_n)$, we have $x_{t_0+t,d}|\mathcal{F}_{t_0}\sim\mathcal{N}(A_d^{t}x_{t_0,d}, \sigma_a^2 G_{t,d}+\sigma_w^2 F_{t,d})$, such that
\begin{equation}\label{zdistribution}
    z_{t_0+t,d}=\begin{bmatrix}
    x_{t_0+t,d}\\
    u_{t_0+t,d}
    \end{bmatrix}
    \sim\mathcal{N}\left(\begin{bmatrix}
    A_d^{t}x_{t_0,d}\\
    0
    \end{bmatrix}, \begin{bmatrix}
    \sigma_a^2 G_{t,d}+\sigma_w^2 F_{t,d} & 0\\
    0 & \sigma_a^2I_m
    \end{bmatrix}\right)\triangleq\mathcal{N}(\mu_{t,d}, \Gamma_{t,d}).
\end{equation}
Hence, $\langle u, z_{t_0+t,d}\rangle \sim \mathcal{N}(\langle u, \mu_{t,d}\rangle, u^T \Gamma_{t,d}u)$ for $u\in\mathcal{S}^{m+n-1}$. It can be shown that for $k\in [1, \lfloor L-M\rfloor/2]$
\begin{equation*}
    \frac{1}{k}\sum_{t=1}^k \mathbb{P}\left(|\langle u, z_{t,d}\rangle|\geq\sqrt{u^T\Gamma_{\lfloor k/2\rfloor,d}u}\right)\geq \frac{3}{20},
\end{equation*}
indicating that for each block $d$ the process $z_{t,d}$ satisfies the $(k,\Gamma_{\lfloor k/2\rfloor,d}, \frac{3}{20})$-BMSB condition.
\end{proof}
Therefore, if we divide $z_{te,d}$ for each block $d$ as $\lfloor (L-M)/k\rfloor$ mini-blocks, each with size $k$, the process $z_{t,d}$ for each mini-block satisfies the $(k,\Gamma_{\lfloor k/2\rfloor,d}, \frac{3}{20})$-BMSB condition given any fixed $u\in\mathcal{S}^{m+n-1}$. 

For ease of exposition, we first take a look at the scalar case. More specifically, suppose we have $D$ independent blocks, each with size $\widehat{M}:=L-M$ and any initial state, and each block $\{z_{t,d}\}_{1\leq t\leq \widehat{M}}$ is a scalar process that satisfies the $(k,v_d,p)$-BMSB condition. Denote $S=\lfloor \widehat{M}/k\rfloor$. The following result in \cite{simchowitz2018learning} characterizes the property for such sequences:
\begin{mypro}\label{pro:pro.a.1}\cite{simchowitz2018learning}
Let $\{z_{t,d}\}_{t\geq 1}$ be a scalar process that satisfies the $(k,v_d,p)$-BMSB condition. Then,
\begin{equation*}
    \mathbb{P}\left[\sum_{t=1}^{\widehat{M}}z_{t,d}^2\geq \frac{v_d^2p^2kS}{8}\right]\geq 1-e^{-\frac{Sp^2}{8}}.
\end{equation*}
\end{mypro}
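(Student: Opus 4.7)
The plan is to reduce the tail bound for $\sum_{t=1}^{\widehat{M}} z_{t,d}^2$ to a concentration inequality for a sum of conditionally Bernoulli-like indicators, proceeding block-by-block. Partition $\{1,\dots,\widehat{M}\}$ into $S=\lfloor \widehat{M}/k\rfloor$ consecutive mini-blocks of length $k$, and for the $s$-th mini-block define the count $N_s := \sum_{i=1}^{k}\mathbf{1}\{|z_{(s-1)k+i}|\geq v_d\}$. Applying the $(k,v_d,p)$-BMSB condition at the stopping index $j=(s-1)k$ gives the key conditional moment bound $\mathbb{E}[N_s\mid \mathcal{F}_{(s-1)k}]\geq pk$ almost surely, which will serve as the driving inequality for everything that follows.

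Next I would upgrade this conditional first-moment bound to a conditional lower-tail probability via a reverse Markov (Paley--Zygmund style) estimate, exploiting the deterministic upper bound $N_s\leq k$. Writing
\begin{equation*}
pk \leq \mathbb{E}[N_s\mid \mathcal{F}_{(s-1)k}] \leq k\,\mathbb{P}\bigl(N_s\geq pk/2\mid \mathcal{F}_{(s-1)k}\bigr)+\tfrac{pk}{2},
\end{equation*}
yields $\mathbb{P}(N_s\geq pk/2\mid \mathcal{F}_{(s-1)k})\geq p/2$. Setting $W_s:=\mathbf{1}\{N_s\geq pk/2\}$, this means each $W_s$ conditionally stochastically dominates a Bernoulli$(p/2)$ variable with respect to the filtration $\{\mathcal{F}_{(s-1)k}\}$.

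I would then apply a martingale Chernoff bound to the sequence $\{W_s\}$: forming the exponential supermartingale $M_s := \exp\bigl(-\lambda\sum_{s'\leq s}W_{s'}\bigr)\prod_{s'\leq s}\mathbb{E}[e^{-\lambda W_{s'}}\mid\mathcal{F}_{(s'-1)k}]^{-1}$ and optimizing $\lambda>0$ gives, via standard Binomial Chernoff bookkeeping,
\begin{equation*}
\mathbb{P}\!\left(\sum_{s=1}^{S}W_s < \tfrac{Sp}{4}\right)\leq e^{-Sp^2/8}.
\end{equation*}
On the complement event, at least $Sp/4$ mini-blocks contain at least $pk/2$ indices with $z_{t,d}^2\geq v_d^2$, so $\sum_{t=1}^{\widehat{M}} z_{t,d}^2\geq (Sp/4)(pk/2)v_d^2 = v_d^2p^2kS/8$, which gives the claimed bound.

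The main technical obstacle is the Chernoff step: because the $W_s$ are not independent and only \emph{stochastically dominate} Bern$(p/2)$ conditionally (rather than being exactly Bernoulli), one must be careful to verify that $\mathbb{E}[e^{-\lambda W_s}\mid\mathcal{F}_{(s-1)k}]\leq 1-\tfrac{p}{2}(1-e^{-\lambda})$ before optimizing $\lambda$, and to choose the threshold $Sp/4$ (half the conditional mean) so that the resulting Chernoff rate matches the stated exponent $Sp^2/8$ rather than a looser constant. Once that exponential supermartingale inequality is established, the remaining reduction from $\sum W_s$ to $\sum z_{t,d}^2$ is just the deterministic rounding in the previous paragraph.
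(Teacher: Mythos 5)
Your proposal is correct, but note that the paper itself does not prove this statement: Proposition \ref{pro:pro.a.1} is imported verbatim (with citation) as Proposition 2.5 of Simchowitz et al., so there is no in-paper proof to compare against. Your argument is a faithful, self-contained reconstruction of the standard proof of that result: binarize each length-$k$ mini-block via $W_s=\mathbbm{1}\{N_s\geq pk/2\}$, use the BMSB condition plus the reverse-Markov step (valid since $N_s\leq k$) to get $\mathbb{P}(W_s=1\mid\mathcal{F}_{(s-1)k})\geq p/2$, run a martingale Chernoff bound on $\sum_s W_s$ at threshold $Sp/4$, and convert back deterministically, which indeed yields the constant $\tfrac{p}{4}\cdot\tfrac{pk}{2}=\tfrac{p^2k}{8}$ and the stated exponent. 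The only step deserving a word of caution is the one you already flagged: the textbook multiplicative Chernoff form $\mathbb{P}(\sum_s W_s\leq \tfrac{1}{2}\cdot\tfrac{Sp}{2})\leq e^{-Sp/16}$ does \emph{not} dominate $e^{-Sp^2/8}$ when $p>1/2$, so you cannot invoke that form blindly. However, your route does close the gap: with $\mathbb{E}[e^{-\lambda W_s}\mid\mathcal{F}_{(s-1)k}]\leq 1-\tfrac{p}{2}(1-e^{-\lambda})$, optimizing $\lambda$ gives the Bernoulli KL exponent $S\,\mathrm{KL}(\tfrac{p}{4}\,\|\,\tfrac{p}{2})\geq S\cdot 2(\tfrac{p}{4})^2=\tfrac{Sp^2}{8}$ by Pinsker's inequality; alternatively, Hoeffding's lemma applied conditionally ($W_s\in[0,1]$, conditional mean at least $p/2$) gives
\begin{equation*}
\mathbb{P}\Bigl(\sum_{s=1}^{S}W_s\leq \tfrac{Sp}{4}\Bigr)\leq \exp\Bigl(-\lambda\tfrac{Sp}{4}+\tfrac{S\lambda^2}{8}\Bigr)\Big|_{\lambda=p}=e^{-Sp^2/8},
\end{equation*}
which is exactly the claimed bound. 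With either of these two standard bookkeeping choices made explicit, your proof is complete and matches the original source's argument.
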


Let $X_d=\sum_{t=1}^{\widehat{M}}z_{t,d}^2$, $W_d=\frac{v_d^2p^2}{8}kS$ and $\kappa=1-e^{-\frac{S p^2}{8}}$. 
It is clear that the larger $v_d$ and $k$ are, the tighter the lower bound on $X_d$ is. This is also one of the main reasons that we devise a two-scale small-ball method to differentiate the correlation structure within each block and the conditional independence of system states across different blocks, given the sequence of $D$ blocks.
Then from Proposition \ref{pro:pro.a.1} we conclude that the super-sequence $\{X_d\}_{d=1}^D$ with independent elements satisfies the  small-ball condition \cite{mendelson2014learning}, i.e.,  $\mathbb{P}[X_d\geq W_d]\geq \kappa$. Hence, based on a binomial estimate, we can obtain a high-probability lower bound of the sum over the super-sequence.
\begin{mypro}\label{pro:pro.a.2}
Suppose all the block parameters lie in a compact set $\Theta$, and for each block $X_d$ satisfies the small-ball condition that $\mathbb{P}[X_d\geq W_d]\geq \kappa$ where $W_d=\frac{v_d^2p^2}{8}kS$. Denote $\Bar{v}=\max\{\|v_d\|\}$ and $\underline{v}=\min\{\|v_d\|\}$ as the maximum and the minimum of $\|v_d\|$ over $\Theta$, respectively. Then,
\begin{equation*}
    \mathbb{P}\left[\sum_{d=1}^D X_d\leq \frac{D\kappa Skp^2\underline{v}^2}{12}\right]\leq \exp\left\{-\frac{2D\kappa^2\underline{v}^4}{9\Bar{v}^4}\right\}.
\end{equation*}
\end{mypro}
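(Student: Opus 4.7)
\textbf{Proof plan for Proposition \ref{pro:pro.a.2}.} The goal is to turn the per-block small-ball condition $\mathbb{P}[X_d \geq W_d] \geq \kappa$ into a lower tail estimate for the sum $\sum_d X_d$. Since the $X_d$ are independent but potentially unbounded above (they arise as sums of squared Gaussian-like inner products), a direct Hoeffding bound on the $X_d$ themselves is not available. The plan is instead to reduce the problem to a sum of independent bounded random variables in $[0,1]$, for which Hoeffding's inequality applies cleanly.

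First, I would introduce a common upper truncation $\bar{W} := \tfrac{\bar{v}^2 p^2 kS}{8}$, so that $W_d \leq \bar{W}$ for every $d$, and define the bounded surrogate
\begin{equation*}
\tilde{Y}_d := \frac{\min(X_d,\bar{W})}{\bar{W}} \in [0,1].
\end{equation*}
These $\tilde Y_d$ inherit independence from the $X_d$. The key observation is that on the small-ball event $\{X_d \geq W_d\}$ one has $\min(X_d,\bar W) \geq W_d$, hence $\tilde Y_d \geq W_d/\bar W = v_d^2/\bar v^2 \geq \underline v^2/\bar v^2$; taking expectations yields
\begin{equation*}
\mathbb{E}[\tilde Y_d] \geq \frac{\underline v^2}{\bar v^2}\,\kappa \;=:\; \kappa'.
\end{equation*}
Using a common threshold $\bar W$ rather than the block-specific $W_d$ is precisely what produces the ratio $\underline v^4/\bar v^4$ in the final bound, since it forces the per-block "success amplitudes" $v_d^2$ to be measured against the largest possible scale.

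Next, because $\sum_d X_d \geq \sum_d \min(X_d,\bar W) = \bar W\sum_d \tilde Y_d$, it suffices to bound the lower tail of $\sum_d \tilde Y_d$. Hoeffding's inequality for independent $[0,1]$-valued variables gives
\begin{equation*}
\mathbb{P}\!\left[\sum_{d=1}^D \tilde Y_d \leq D\kappa' - t\right] \leq \exp\!\left(-\frac{2t^2}{D}\right).
\end{equation*}
Choosing $t = D\kappa'/3$ produces $\sum_d \tilde Y_d \geq 2D\kappa'/3$ with probability at least $1 - \exp(-2D(\kappa')^2/9)$. Multiplying by $\bar W$ and simplifying,
\begin{equation*}
\bar W\cdot \frac{2D\kappa'}{3} = \frac{\bar v^2 p^2 kS}{8}\cdot\frac{2D\kappa\,\underline v^2}{3\bar v^2} = \frac{D\kappa\,Skp^2\,\underline v^2}{12},
\end{equation*}
which matches the threshold in the proposition, while $\exp(-2D(\kappa')^2/9) = \exp\!\left(-\tfrac{2D\kappa^2\,\underline v^4}{9\bar v^4}\right)$ matches the stated tail.

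The only delicate step is the second one: one must resist the temptation to threshold block-by-block at $W_d$, which would yield the stronger-looking constant $\exp(-2D\kappa^2/9)$ but only for the weaker sum $\sum_d W_d \tilde Y_d$; that sum can in principle be as small as $\underline v^2 p^2 kS \sum_d \tilde Y_d /8$, so one gains nothing. Everything else is routine: the independence of $\tilde Y_d$ follows from that of $X_d$, the containment argument for the expectation is immediate, and the final algebra is just substitution.
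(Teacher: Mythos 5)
Your proof is correct and follows essentially the same route as the paper: the paper lower-bounds $X_d$ by $W_dR_d$ with $R_d=\mathbbm{1}(X_d\geq W_d)$, applies Hoeffding's inequality to this bounded independent sum using the uniform range $\bar{v}^2p^2kS/8$, and takes a deviation equal to one third of the mean, which is exactly your truncation-at-$\bar{W}$ argument up to the cosmetic difference between $\min(X_d,\bar{W})$ and $W_dR_d$, and the constants come out identically. The only quibble is your closing aside: thresholding block-by-block is not actually a trap, since $\sum_d X_d\geq \sum_d W_d\,\mathbbm{1}(X_d\geq W_d)\geq \frac{\underline{v}^2p^2kS}{8}\sum_d \mathbbm{1}(X_d\geq W_d)$ together with Hoeffding on the indicators yields the same threshold $\frac{D\kappa Skp^2\underline{v}^2}{12}$ with the strictly better tail $\exp(-2D\kappa^2/9)$ — but this remark does not affect the validity of your proof of the stated bound.
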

\begin{proof}
Consider the following Bernoulli random variables
\begin{equation*}
      R_d=\mathbbm{1}\left(X_d\geq W_d\right), \forall d\in[1,D],
\end{equation*}
where $\mathbb{P}(R_d=1)\geq \kappa$. It is easy to check that $X_d\geq W_dR_d$, such that for some constant $C_1$ the following holds:
\begin{equation}\label{relaxwr}
    \mathbb{P}\left[\sum_{d=1}^D\sum_{t=1}^{\widehat{M}} z_{t,d}^2 \leq C_1\right]=\mathbb{P}\left[\sum_{d=1}^D X_d \leq C_1\right]\leq \mathbb{P}\left[\sum_{d=1}^D W_dR_d \leq C_1\right].
\end{equation}
Since $R_d$'s are independent random variables, we can find an upper bound on $\mathbb{P}\left[\sum_{d=1}^D W_dR_d \leq C_1\right]$ in \eqref{relaxwr} by studying the concentration behavior.

Moreover, observe that
\begin{equation}\label{wr}
    \mathbb{E}_{R_d}[W_dR_d]=W_d\mathbb{P}[R_d=1]\geq \kappa W_d.
\end{equation}
and $0\leq W_dR_d\leq \frac{\Bar{v}^2p^2}{8}kS$. 
Based on Hoeffding's Inequality, we can have that
\begin{align*}
    &\mathbb{P}\left[\sum_{d=1}^D W_dR_d-\kappa\sum_{d=1}^D W_d\leq -\epsilon\kappa\sum_{d=1}^D W_d\right]\\
    \leq& \mathbb{P}\left[\sum_{d=1}^D W_dR_d-\sum_{d=1}^D\mathbb{E}_{R_d}[W_dR_d]\leq -\epsilon\kappa\sum_{d=1}^D W_d\right]\\
    \leq& \exp\left\{-\frac{2\epsilon^2\kappa^2(\sum_{d=1}^D W_d)^2}{D\Bar{v}^4p^4k^2S^2/64}\right\}\\
    \leq& \exp\left\{-\frac{2\epsilon^2\kappa^2D\underline{v}^4}{\bar{v}^4}\right\}.
\end{align*}

Taking $\epsilon=\frac{1}{3}$, it follows that
\begin{align*}
    \mathbb{P}\left[\sum_{d=1}^D W_dR_d \leq \frac{p^2kS}{12} \kappa D\underline{v}^2\right]\leq
    \mathbb{P}\left[\sum_{d=1}^D W_dR_d \leq \frac{2}{3} \kappa \sum_{d=1}^D W_d\right]
    \leq& \exp\left\{-\frac{2D\kappa^2\underline{v}^4}{9\Bar{v}^4}\right\}
\end{align*}
Combining with \eqref{relaxwr}, Proposition \ref{pro:pro.a.2} can be proved.
\end{proof}

Proposition \ref{pro:pro.a.2} implies that for a fixed $u\in\mathcal{S}^{m+n-1}$, we can have a point-wise lower bound for $\sum_{d=1}^D\sum_{t=M}^{L-1} \langle z_{t,d}, u\rangle^2$:
\begin{equation}\label{point-wise}
    \mathbb{P}\left[\sum_{d=1}^D\sum_{t=M}^{L-1} \langle z_{t,d}, u\rangle^2\leq \frac{D(L-M)p^2\kappa}{12}u^T\underline{\Gamma}_{\lfloor k/2\rfloor}u\right]\leq \exp\left\{-\frac{2D\kappa^2}{9\Bar{v}^4}\lambda^2_{min}(\underline{\Gamma}_{\lfloor k/2\rfloor})\right\}.
\end{equation}
Based on \eqref{minproduct}, we need to obtain a lower bound of $\min_{u\in\mathcal{S}^{m+n-1}}\sum_{d=1}^D\sum_{t=M}^{L-1} \langle z_{t,d}, u\rangle^2$ from this point-wise lower bound. This step can be achieved by approximating this minimum with an $\epsilon$-net. More specifically, we use the following two lemmas in \cite{simchowitz2018learning} to show this. Denote by $\mathcal{S}_M$ the set of all points $x\in\mathbb{R}^{m+n}$ such that $\|M^{-1/2}x\|_2=1$ for a given $M\in\mathbb{R}^{(m+n)\times(m+n)}$ and $M\succ 0$.
\begin{mylemma}\label{lem:lem.a.3}
Let $Q\in \mathbb{R}^{(m+n)\times D(L-M)}$ and consider matrices $0\prec \Gamma_{min}\preceq \Gamma_{max}\in\mathbb{R}^{(m+n)\times (m+n)}$. Let $\mathcal{T}$ be a $1/4$-net of $\mathcal{S}_{\Gamma_{min}}$ in the metric $\|\Gamma_{max}^{1/2}(\cdot)\|_2$. Then if $\inf_{u\in \mathcal{T}} u^TQQ^Tu\geq 1$ and $QQ^T\preceq \Gamma_{max}$, we have
$QQ^T\succeq \Gamma_{min}/2$.
\end{mylemma}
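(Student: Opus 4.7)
The plan is a textbook covering/net argument: the hypothesis gives a lower bound of $1$ on the quadratic form $u^T QQ^T u$ at every point of the $1/4$-net $\mathcal{T}$, and I need to propagate this bound (with loss factor $1/2$) to every point of the ellipsoid $\mathcal{S}_{\Gamma_{\min}}$, after which the operator inequality $QQ^T \succeq \Gamma_{\min}/2$ follows from the scaling homogeneity of both sides.

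First I would fix an arbitrary $u \in \mathcal{S}_{\Gamma_{\min}}$ and, invoking the $1/4$-net property, pick $u' \in \mathcal{T}$ with $\|\Gamma_{\max}^{1/2}(u - u')\|_2 \leq 1/4$. Setting $v := u - u'$, this means $v^T \Gamma_{\max} v \leq 1/16$. I would then expand
\[
u^T QQ^T u \;=\; \|Q^T u' + Q^T v\|_2^2 \;=\; \|Q^T u'\|_2^2 + 2 (Q^T u')^T (Q^T v) + \|Q^T v\|_2^2,
\]
and estimate each piece separately. The leading term satisfies $\|Q^T u'\|_2^2 = (u')^T QQ^T u' \geq 1$ by the net hypothesis; the $v$-term uses the other hypothesis $QQ^T \preceq \Gamma_{\max}$ to give $\|Q^T v\|_2^2 \leq v^T \Gamma_{\max} v \leq 1/16$, whence $\|Q^T v\|_2 \leq 1/4$; and the cross term is controlled by Cauchy--Schwarz as $|2 (Q^T u')^T (Q^T v)| \leq 2 \|Q^T u'\|_2 \|Q^T v\|_2 \leq \|Q^T u'\|_2/2$.

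Combining these bounds and writing $s := \|Q^T u'\|_2 \geq 1$, I obtain $u^T QQ^T u \geq s^2 - s/2 \geq 1/2$, where the last step uses that $s \mapsto s^2 - s/2$ is increasing on $[1,\infty)$ with value $1/2$ at $s = 1$. The estimate $u^T QQ^T u \geq 1/2$ now holds uniformly for $u \in \mathcal{S}_{\Gamma_{\min}}$, which by rescaling is precisely the operator inequality $QQ^T \succeq \Gamma_{\min}/2$.

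I do not anticipate a real obstacle here, since the argument is structurally classical; the only care point is to deploy the two hypotheses in the right places. The upper bound $QQ^T \preceq \Gamma_{\max}$ is used solely to translate the geometric $\Gamma_{\max}^{1/2}$-metric of the net into the analytic bound $\|Q^T v\|_2 \leq 1/4$, while the net lower bound $\inf_{u \in \mathcal{T}} u^T QQ^T u \geq 1$ powers the completion-of-squares step $s^2 - s/2 \geq 1/2$ that closes the argument.
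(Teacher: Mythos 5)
Your proof is correct and follows essentially the same covering-net argument as the source the paper cites for this lemma (Simchowitz et al.) without reproducing it: the net hypothesis together with $QQ^T\preceq\Gamma_{\max}$ controls the perturbation $Q^T(u-u')$, and the quadratic bound in $s=\|Q^Tu'\|_2$ gives the uniform lower bound $1/2$ over $\mathcal{S}_{\Gamma_{\min}}$. The one point to make explicit is that your final rescaling to $QQ^T\succeq\Gamma_{\min}/2$ is valid precisely when $\mathcal{S}_{\Gamma_{\min}}=\{u: u^T\Gamma_{\min}u=1\}$, i.e.\ $\|\Gamma_{\min}^{1/2}u\|_2=1$ — the normalization actually needed when the lemma is applied with the pointwise small-ball bound $u^TQQ^Tu\geq u^T\Gamma_{\min}u$ — whereas the paper's displayed definition $\|\Gamma_{\min}^{-1/2}u\|_2=1$ is evidently a typo, since under that reading the same rescaling would only give $QQ^T\succeq\Gamma_{\min}^{-1}/2$ and the lemma as literally stated would be false.
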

\begin{mylemma}\label{lem:lem.a.4}
Let $0\prec \Gamma_{min}\preceq \Gamma_{max}\in\mathbb{R}^{(m+n)\times (m+n)}$, and let $\mathcal{T}$ be a minimal $\epsilon\leq 1/2$-net of $\mathcal{S}_{\Gamma_{min}}$ in the norm $\|\Gamma_{max}^{1/2}(\cdot)\|_2$. Then, $\log(|\mathcal{T}|)\leq (m+n)\log\left(1+\frac{2}{\epsilon}\right)+\log(det(\Gamma_{max}\Gamma_{min}^{-1}))$.
\end{mylemma}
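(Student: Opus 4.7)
The plan is a standard volume-ratio covering argument, reduced via a linear change of variables to covering an ellipsoid in the Euclidean norm. The crucial role of the hypothesis $\Gamma_{min}\preceq \Gamma_{max}$ is to force the transformed ellipsoid to contain the Euclidean unit ball, which is exactly the condition that makes the Minkowski-sum step clean and avoids any $\lambda_{\min}$-dependence in the final bound.

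First I would note that $\mathcal{S}_{\Gamma_{min}}\subseteq \mathcal{E}_{\Gamma_{min}}:=\{x:\|\Gamma_{min}^{-1/2}x\|_2\leq 1\}$, so it suffices to bound the $\epsilon$-covering number of the solid ellipsoid $\mathcal{E}_{\Gamma_{min}}$ in the norm $\|\Gamma_{max}^{1/2}(\cdot)\|_2$. Applying the invertible linear map $y=\Gamma_{max}^{1/2}x$ turns the norm into the standard Euclidean norm and maps $\mathcal{E}_{\Gamma_{min}}$ to an ellipsoid $E=A^{1/2}\mathbb{B}$ (with $\mathbb{B}$ the Euclidean unit ball in $\mathbb{R}^{m+n}$), whose shape matrix $A$ has the same spectrum as $\Gamma_{max}\Gamma_{min}^{-1}$. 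The hypothesis $\Gamma_{min}\preceq\Gamma_{max}$ then yields eigenvalues of $\Gamma_{max}\Gamma_{min}^{-1}$ at least $1$, i.e., $A\succeq I$, so $\mathbb{B}\subseteq E$.

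Next I would invoke the classical volume-ratio bound
\[
N(E,\epsilon \mathbb{B})\leq \frac{\mathrm{vol}(E+(\epsilon/2)\mathbb{B})}{\mathrm{vol}((\epsilon/2)\mathbb{B})},
\]
together with the inclusion $E+(\epsilon/2)\mathbb{B}\subseteq (1+\epsilon/2)E$, which is valid precisely because $\mathbb{B}\subseteq E$. This gives
\[
N(E,\epsilon \mathbb{B})\leq \left(1+\tfrac{2}{\epsilon}\right)^{m+n}\sqrt{\det A}\leq \left(1+\tfrac{2}{\epsilon}\right)^{m+n}\det(\Gamma_{max}\Gamma_{min}^{-1}),
\]
where the last step uses $\sqrt{\det A}\leq \det(\Gamma_{max}\Gamma_{min}^{-1})$, valid since $\det(\Gamma_{max}\Gamma_{min}^{-1})\geq 1$ under $\Gamma_{min}\preceq\Gamma_{max}$. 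Taking logarithms and pulling back through the change of variables yields the claimed inequality.

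The main obstacle is the careful bookkeeping of the $\mathcal{S}_M$ convention so that the shape matrix $A$ satisfies $A\succeq I$ under the hypothesis $\Gamma_{min}\preceq\Gamma_{max}$; this inclusion is precisely what permits the clean Minkowski-sum manipulation and isolates the determinantal factor as the only nontrivial term in the bound. Once that inclusion is secured, the remainder is a routine convex-geometric volume computation, and the mild looseness from $\sqrt{\det}\leq\det$ is harmless for the downstream application in Appendix~\ref{sec:prooflem2}.
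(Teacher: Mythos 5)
First, note that the paper itself does not prove this lemma: it is quoted from \cite{simchowitz2018learning} with the remark that it follows from a ``standard volumetric argument,'' so your proposal is being measured against that standard argument rather than against an in-paper proof. Your overall strategy (reduce to covering a solid ellipsoid, change variables so the covering norm is Euclidean, then use the volume ratio together with the Minkowski-sum inclusion $E+(\epsilon/2)\mathbb{B}\subseteq(1+\epsilon/2)E$) is exactly the right one. However, as written your key computation is wrong for the set you defined. With $\mathcal{E}_{\Gamma_{min}}:=\{x:\|\Gamma_{min}^{-1/2}x\|_2\le 1\}=\Gamma_{min}^{1/2}\mathbb{B}$ (which matches the paper's displayed convention for $\mathcal{S}_M$), the map $y=\Gamma_{max}^{1/2}x$ sends it to $E=\Gamma_{max}^{1/2}\Gamma_{min}^{1/2}\mathbb{B}$, whose shape matrix is $A=\Gamma_{max}^{1/2}\Gamma_{min}\Gamma_{max}^{1/2}$; its spectrum is that of $\Gamma_{max}\Gamma_{min}$, \emph{not} of $\Gamma_{max}\Gamma_{min}^{-1}$, and $\Gamma_{min}\preceq\Gamma_{max}$ does not give $A\succeq I$ (take $\Gamma_{min}=\Gamma_{max}=cI$ with $c$ small). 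Worse, under that reading the lemma itself is false: for $\Gamma_{min}=\Gamma_{max}=cI$ with $c$ large, $\mathcal{S}_{\Gamma_{min}}$ is the sphere of radius $\sqrt{c}$ and the covering norm is $\sqrt{c}\,\|\cdot\|_2$, so any $\epsilon$-net needs at least on the order of $(c/\epsilon)^{m+n-1}$ points, while the claimed bound is independent of $c$. So the containment $\mathbb{B}\subseteq E$ that your whole argument hinges on cannot be extracted from the hypothesis under this convention.

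The fix is the convention actually used in the source: $\mathcal{S}_{\Gamma}=\{x:\|\Gamma^{1/2}x\|_2=1\}$, i.e.\ $x^T\Gamma x=1$ (the paper's $M^{-1/2}$ in the definition of $\mathcal{S}_M$ appears to be a transcription slip — this is also the convention needed for the proof of Lemma~\ref{lem:lem.a.3} to go through, since there one normalizes $w^T\Gamma_{min}w=1$). With $\mathcal{S}_{\Gamma_{min}}\subseteq\Gamma_{min}^{-1/2}\mathbb{B}$, the same change of variables gives $A=\Gamma_{max}^{1/2}\Gamma_{min}^{-1}\Gamma_{max}^{1/2}$, whose spectrum is indeed that of $\Gamma_{max}\Gamma_{min}^{-1}$, the hypothesis $\Gamma_{min}\preceq\Gamma_{max}$ does give $A\succeq I$ and hence $\mathbb{B}\subseteq E$, and your volume computation then yields $\log|\mathcal{T}|\le(m+n)\log(1+2/\epsilon)+\tfrac{1}{2}\log\det(\Gamma_{max}\Gamma_{min}^{-1})$, which is even slightly stronger than the stated bound; your step $\sqrt{\det A}\le\det(\Gamma_{max}\Gamma_{min}^{-1})$ is then legitimate because $\det(\Gamma_{max}\Gamma_{min}^{-1})\ge 1$. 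Two small touch-ups: pass from the shell to the solid body via a maximal $\epsilon$-separated subset of $\mathcal{S}_{\Gamma_{min}}$ (so the net points lie in the set and the disjoint $(\epsilon/2)$-balls sit inside $E+(\epsilon/2)\mathbb{B}$), rather than invoking the covering number of the larger solid ellipsoid directly; and state explicitly that the volume-ratio bound is applied after the change of variables, where the covering body is the Euclidean ball. With the corrected convention your argument is the standard one and is complete.
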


Lemma \ref{lem:lem.a.3} characterizes the relationship between the pointwise lower bound and uniform lower bound when covering $\mathcal{S}^{m+n-1}$ with a $1/4$-net, while Lemma \ref{lem:lem.a.4} estimates the cardinality of the $1/4$-net $\mathcal{T}$ based on a standard volumetric argument. Armed with these two Lemmas, we are ready to prove Lemma \ref{lem:lem.2}.

Denote $Z_{te}=[z_{te,1} \cdots z_{te,D}]\in\mathbb{R}^{(m+n)\times D(L-M)}$ such that $Z_{te}Z_{te}^T=\sum_{d=1}^D z_{te,d}z_{te,d}^T=\sum_{d=1}^D\sum_{t=M}^{L-1}z_{t,d}z_{t,d}^T$.
First, along the same line as in Lemma \ref{lem:lem.a.1}, we can show that 
\begin{equation}\label{ZZupper}
    \mathbb{P}\left[Z_{te}Z_{te}^T\npreceq \frac{4(m+n)}{\delta}D(L-M)\Gamma_{L-1}=\Gamma_{max}\right]\leq \frac{\delta}{4}.
\end{equation}
Let $Q=Z_{te}$ and $\Gamma_{min}=\frac{D(L-M)\kappa p^2}{12}\underline{\Gamma}_{\lfloor k/2\rfloor}$. Denote the event by $\varepsilon_2:=\{ Z_{te}Z_{te}^T\npreceq \Gamma_{max} \}$.
We have that
\begin{align*}
    &\mathbb{P}\left[\sum_{d=1}^D z_{te,d}z^T_{te,d}\nsucceq \frac{D(L-M)p^2\kappa}{24}\underline{\Gamma}_{\lfloor k/2\rfloor}\right]\\
    \leq& \mathbb{P}\left[\exists u\in\mathcal{T}: \sum_{d=1}^D\sum_{t=M}^{L-1} \langle z_{t,d}, u\rangle^2 \leq \frac{D(L-M)p^2\kappa}{12}u^T\underline{\Gamma}_{\lfloor k/2\rfloor}u|\varepsilon_2^c\right]+\mathbb{P}[\varepsilon_2]\\
    \leq& \exp\Bigg\{-\frac{2D\kappa^2}{9\Bar{v}^4}\lambda^2_{min}(\underline{\Gamma}_{\lfloor k/2\rfloor}) + (m+n)\log(9)+\log det(\Gamma_{max}\Gamma_{min}^{-1})\Bigg\}+\frac{\delta}{4}\\
    \leq& \exp\Bigg\{-\frac{2D\kappa^2}{9\Bar{v}^4}\lambda^2_{min}(\underline{\Gamma}_{\lfloor k/2\rfloor}) + (m+n)\log(9)+(m+n)\log\left(\frac{Tr(\Gamma_{max})}{m+n}\right)\\
    &+(m+n)\log\left(\frac{1}{(m+n)\lambda_{min}(\Gamma_{min})}\right)\Bigg\}+\frac{\delta}{4}\\
    \overset{(a)}{\leq}& \exp\Bigg\{-\frac{2D\kappa^2}{9\Bar{v}^4}\lambda^2_{min}(\underline{\Gamma}_{\lfloor k/2\rfloor}) + (m+n)\log(9)+(m+n)\log\left(\frac{96(m+n)}{\delta p^2(1-e^{-\frac{p^2\lfloor (L-M)/k\rfloor}{8}})}\right)\\
    &+(m+n)\log\left(\frac{Tr(\Gamma_{L-1})}{m+n}\right)+(m+n)\log\left(\frac{1}{(m+n)\lambda_{min}(\underline{\Gamma}_{\lfloor k/2\rfloor})}\right)\Bigg\}+\frac{\delta}{4}\\
    \overset{(b)}{\leq}& \exp\Bigg\{-\frac{2D\kappa^2}{9\Bar{v}^4}\lambda^2_{min}(\underline{\Gamma}_{\lfloor k/2\rfloor}) + (m+n)\log(9)+(m+n)\log\left(\frac{8\times 96(m+n)}{\delta p^4}\right)\\
    &+(m+n)\log\left(\frac{Tr(\Gamma_{L-1})}{m+n}\right)+(m+n)\log\left(\frac{1}{(m+n)\lambda_{min}(\underline{\Gamma}_{\lfloor k/2\rfloor})}\right)\Bigg\}+\frac{\delta}{4}\\
    \leq& \exp\Bigg\{-\frac{2D\kappa^2}{9\Bar{v}^4}\lambda^2_{min}(\underline{\Gamma}_{\lfloor k/2\rfloor}) + 4(m+n)\log\left(\frac{6(m+n)}{\delta p}\right)+(m+n)\log\left(\frac{Tr(\Gamma_{L-1})}{m+n}\right)\\
    &+(m+n)\log\left(\frac{1}{(m+n)\lambda_{min}(\underline{\Gamma}_{\lfloor k/2\rfloor})}\right)\Bigg\}+\frac{\delta}{4}\\
    \leq& \exp\Bigg\{-\frac{2D\kappa^2}{9\Bar{v}^4}\lambda^2_{min}(\underline{\Gamma}_{\lfloor k/2\rfloor}) + 4(m+n)\log\left(\frac{6(m+n)}{\delta p}\right)+(m+n)\log\left(\frac{Tr(\Gamma_{L-1})}{\lambda_{min}(\underline{\Gamma}_{\lfloor k/2\rfloor})}\right)\Bigg\}+\frac{\delta}{4}\\
    \overset{(c)}{\leq}& \frac{\delta}{2},
\end{align*}
where (a) is true because $Tr(\Gamma_{max})=4(m+n)D(L-M) Tr(\Gamma_{L-1})/\delta$,
(b) holds because $k\leq\frac{L-M}{2}$, $p\in (0,1]$ and $1-e^{-x}\geq\frac{x}{2}$ for $x\in(0,1]$, and (c) is true if 
\begin{align*}
    &D\left(1-e^{-\frac{p^2\lfloor (L-M)/k\rfloor}{8}}\right)^2\\
    \geq&\frac{9\lambda_{max}^2(\Gamma_{\lfloor k/2\rfloor})}{2\lambda^2_{min}(\underline{\Gamma}_{\lfloor k/2\rfloor})}\left\{\log\left(\frac{4}{\delta}\right)+4(m+n)\log\left(\frac{6(m+n)}{\delta p}\right)+(m+n)\log\left(\frac{Tr(\Gamma_{L-1})}{\lambda_{min}(\underline{\Gamma}_{\lfloor k/2\rfloor})}\right)\right\}.
\end{align*}
Therefore, we can conclude that with probability $1-\frac{\delta}{2}$,
\begin{equation*}
    \lambda_{min}\left(\sum_{d=1}^D z_{te,d}z^T_{te,d}\right)\geq \frac{D(L-M)p^2(1-e^{-\frac{p^2\lfloor (L-M)/k\rfloor}{8}})}{24}\lambda_{min}(\underline{\Gamma}_{\lfloor k/2\rfloor}).
\end{equation*}
Combining with Lemma \ref{lem:lem.a.1} and \ref{lem:lowerboundzz}, based on the union bound, we can have that with probability  $1-\delta$
\begin{equation*}
    \lambda_{min}(ZZ^T)\geq \frac{D(L-M)p^2(1-e^{-\frac{p^2\lfloor (L-M)/k\rfloor}{8}})(1-\alpha\Bar{\lambda})^2\lambda_{min}(\underline{\Gamma}_{\lfloor k/2\rfloor})}{48},
\end{equation*}
thus finishing the proof of Lemma \ref{lem:lem.2}.

\section{Proof of Lemma \ref{lem:lem.3}}
\lemmab*

Note that $\|U^TP\|\leq\underset{\substack{v\in\mathcal{S}^{n-1}, u\in\mathcal{S}^{m+n-1}\setminus \{0\}}}{\sup}\frac{u^TZPv}{\|Z^Tu\|}$. For convenience, denote $Z=\begin{bmatrix}
Z_1 & \cdots & Z_D
\end{bmatrix}$ with $Z_d=(I-\alpha z_{tr,d}z_{tr,d}^T)z_{te,d}$, and 
$P=\begin{bmatrix}
P_1 &  \cdots  & P_D
\end{bmatrix}^T$ with $P_d=Z_d^T(\phi_d-\phi_j)$. Let $Y_d\triangleq u^TZ_dZ_d^T(\phi_d-\phi_j)v$.
It is clear that
\begin{equation*}
    u^TZPv=\sum_{d=1}^D u^TZ_dZ_d^T(\phi_d-\phi_j)v=\sum_{d=1}^D Y_d
\end{equation*}
which is a sum of $D$ independent random variables $\{Y_d\}_{d=1}^D$ for given $u$, $v$ and the blocks $\{\phi_d\}$. In what follows, we first take  a closer look at the random variable $Y_d$ by studying its expectation and variance.

1)   Since $\mathbb{E}[Y_d]=u^T\mathbb{E}[Z_dZ_d^T](\phi_d-\phi_j)v$, we focus on $\mathbb{E}[Z_dZ_d^T]$.  Following the same line in the proof of \eqref{eq:pd}, it can be shown that 
\begin{align}\label{eq:dominatezdzd}
    (I-\alpha z_{tr,d}z_{tr,d}^T)(z_{te,d}z_{te,d}^T+\epsilon_3 I)(I-\alpha z_{tr,d}z_{tr,d}^T)\preceq z_{te,d}z_{te,d}^T+\epsilon_3 I
\end{align}
for some $\epsilon_3>0$,
such that
\begin{align*}
    Z_dZ_d^T&=(I-\alpha z_{tr,d}z_{tr,d}^T)z_{te,d}z_{te,d}^T(I-\alpha z_{tr,d}z_{tr,d}^T)\\
    &\preceq z_{te,d}z_{te,d}^T+\epsilon_3 I-\epsilon_3 (I-\alpha z_{tr,d}z_{tr,d}^T)(I-\alpha z_{tr,d}z_{tr,d}^T)\\
    &=z_{te,d}z_{te,d}^T+\epsilon_3 [2\alpha z_{tr,d}z_{tr,d}^T-\alpha^2(z_{tr,d}z_{tr,d}^T)^2].
\end{align*}
Let $\epsilon_3=\frac{1}{2\alpha}$. It follows that
\begin{align*}
    Z_dZ_d^T\preceq z_{te,d}z_{te,d}^T+z_{tr,d}z_{tr,d}^T-\frac{\alpha}{2}(z_{tr,d}z_{tr,d}^T)^2.
\end{align*}
Let $z_d=[z_{0,d},...,z_{L-1,d}]$. Then, we can have
\begin{align*}
    \mathbb{E}[Z_dZ_d^T]&\preceq\mathbb{E}[z_{te,d}z_{te,d}^T+z_{tr,d}z_{tr,d}^T-\frac{\alpha}{2}(z_{tr,d}z_{tr,d}^T)^2]\\
    &=\mathbb{E}[z_{te,d}z_{te,d}^T+z_{tr,d}z_{tr,d}^T]-\frac{\alpha}{2}\mathbb{E}[(z_{tr,d}z_{tr,d}^T)^2]\\
    &=\mathbb{E}[z_dz_d^T]-\frac{\alpha}{2}\mathbb{E}[(z_{tr,d}z_{tr,d}^T)^2]\\
    &\preceq \mathbb{E}[z_dz_d^T]-\frac{\alpha}{2}(\mathbb{E}[z_{tr,d}z_{tr,d}^T])^2\\
    &=\sum_{t=0}^{L-1} \Gamma_{t,d}-\frac{\alpha}{2}\left(\sum_{t=0}^{M-1} \Gamma_{t,d}\right)^2\\
    &\preceq L\Gamma_{L-1}-\frac{\alpha}{2}\left(\sum_{t=0}^{M-1} \underline{\Gamma}_{t}\right)^2.
\end{align*}

2) Similarly, $\var[Y_d]=v^T(\phi_d-\phi_j)^T\cov[Z_dZ_d^T](\phi_d-\phi_j)v$, where
    $\|\cov[Z_dZ_d^T]\|\leq \|\cov[z_{te,d}z_{te,d}^T]\|\leq\|\mathbb{E}[z_{te,d}z_{te,d}^Tz_{te,d}z_{te,d}^T]\|$.
Expanding $\mathbb{E}[z_{te,d}z_{te,d}^Tz_{te,d}z_{te,d}^T]$, we can have that

\begin{align*}
    \|\mathbb{E}[z_{te,d}z_{te,d}^Tz_{te,d}z_{te,d}^T]\|
    =\left\|\begin{bmatrix}
    \mathbb{E}[\Tilde{Z}_{11}] & \mathbb{E}[\Tilde{Z}_{12}]\\
    \mathbb{E}[\Tilde{Z}_{21}] & \mathbb{E}[\Tilde{Z}_{22}]
    \end{bmatrix}\right\|
    \leq \|\mathbb{E}[\Tilde{Z}_{11}]\|+\|\mathbb{E}[\Tilde{Z}_{12}]\|+\|\mathbb{E}[\Tilde{Z}_{21}]\|+\|\mathbb{E}[\Tilde{Z}_{22}]\|
\end{align*}
where
\begin{align*}
   \mathbb{E}[\Tilde{Z}_{11}]=& \mathbb{E}\left[\left(\sum_{t=M}^{L-1}x_{t,d}x_{t,d}^T\right)\left(\sum_{t=M}^{L-1}x_{t,d}x_{t,d}^T\right)+\left(\sum_{t=M}^{L-1}x_{t,d}u_{t,d}^T\right)\left(\sum_{t=M}^{L-1}u_{t,d}x_{t,d}^T\right)\right],\\
   \mathbb{E}[\Tilde{Z}_{12}]=&
    \mathbb{E}\left[\left(\sum_{t=M}^{L-1}x_{t,d}x_{t,d}^T\right)\left(\sum_{t=M}^{L-1}x_{t,d}u_{t,d}^T\right)+\left(\sum_{t=M}^{L-1}x_{t,d}u_{t,d}^T\right)\left(\sum_{t=M}^{L-1}u_{t,d}u_{t,d}^T\right)\right],\\
    \mathbb{E}[\Tilde{Z}_{21}]=& \mathbb{E}\left[\left(\sum_{t=M}^{L-1}u_{t,d}x_{t,d}^T\right)\left(\sum_{t=M}^{L-1}x_{t,d}x_{t,d}^T\right)+\left(\sum_{t=M}^{L-1}u_{t,d}u_{t,d}^T\right)\left(\sum_{t=M}^{L-1}u_{t,d}x_{t,d}^T\right)\right],\\
    \mathbb{E}[\Tilde{Z}_{22}]=& \mathbb{E}\left[\left(\sum_{t=M}^{L-1}u_{t,d}x_{t,d}^T\right)\left(\sum_{t=M}^{L-1}x_{t,d}u_{t,d}^T\right)+\left(\sum_{t=M}^{L-1}u_{t,d}u_{t,d}^T\right)\left(\sum_{t=M}^{L-1}u_{t,d}u_{t,d}^T\right)\right].
\end{align*}
In what follows, we quantify each term.

(a) For the term $\|\mathbb{E}[\Tilde{Z}_{11}]\|$, we can first show that 
    \begin{align}\label{zfirst}
        &\left\|\mathbb{E}\left[\left(\sum_{t=M}^{L-1}x_{t,d}x_{t,d}^T\right)\left(\sum_{t=M}^{L-1}x_{t,d}x_{t,d}^T\right)\right]\right\|\nonumber\\
        =&\left\|\mathbb{E}\left[\sum_{t=M}^{L-1}x_{t,d}x_{t,d}^Tx_{t,d}x_{t,d}^T+\sum_{i=M}^{L-1}\sum_{j=M,j\neq i}^{L-1}x_{i,d}x_{i,d}^Tx_{j,d}x_{j,d}^T\right]\right\|\nonumber\\
        \leq&\sum_{t=M}^{L-1} \|\mathbb{E}[x_{t,d}x_{t,d}^Tx_{t,d}x_{t,d}^T]\|+\sum_{i=M}^{L-1}\sum_{j=M,j\neq i}^{L-1}\|\mathbb{E}[x_{i,d}x_{i,d}^Tx_{j,d}x_{j,d}^T]\|.
    \end{align}
    Further, for $\|\mathbb{E}[x_{t,d}x_{t,d}^Tx_{t,d}x_{t,d}^T]\|$ we have that 
    \begin{align*}
        \|\mathbb{E}[x_{t,d}x_{t,d}^Tx_{t,d}x_{t,d}^T]\|\leq \mathbb{E}[\|x_{t,d}x_{t,d}^Tx_{t,d}x_{t,d}^T\|]
        =&\mathbb{E}[x_{t,d}^Tx_{t,d}x_{t,d}^Tx_{t,d}]\\
        =&\mathbb{E}[\sum_{i=1}^n x_{t,d}^4(i)+\sum_{i=1}^n\sum_{j=1,j\neq i}^n x_{t,d}^2(i)x_{t,d}^2(j)]\\
        =&\sum_{i=1}^n \mathbb{E}[x_{t,d}^4(i)]+\sum_{i=1}^n\sum_{j=1,j\neq i}^n \mathbb{E}[x_{t,d}^2(i)x_{t,d}^2(j)]
    \end{align*}
    where $x_{t,d}(i)$ is the $i$-th element of the Gaussian vector $x_{t,d}$. 
    Here $\mathbb{E}[x_{t,d}^4(\cdot)]$ is the fourth moment of the Gaussian random variable $x_{t,d}(\cdot)$, which can be bounded above by some value $\mathcal{M}_4$ uniformly for any $d$ and $t$. 
    For the cross term $\mathbb{E}[x_{t,d}^2(i)x_{t,d}^2(j)]$ where $i\neq j$, based on the Holder's inequality, we can show that
    \begin{align}\label{holder}
        \mathbb{E}[x_{t,d}^2(i)x_{t,d}^2(j)]\leq \sqrt{\mathbb{E}[x_{t,d}^4(i)]\mathbb{E}[x_{t,d}^4(j)]}\leq \mathcal{M}_4.
    \end{align}
    Hence, it follows that
    \begin{equation}\label{xxxx}
        \|\mathbb{E}[x_{t,d}x_{t,d}^Tx_{t,d}x_{t,d}^T]\|\leq n^2\mathcal{M}_4.
    \end{equation}
    Similarly, for the cross term $\|\mathbb{E}[x_{i,d}x_{i,d}^Tx_{j,d}x_{j,d}^T]\|$ we have that
    \begin{align*}
        \|\mathbb{E}[x_{i,d}x_{i,d}^Tx_{j,d}x_{j,d}^T]\|\leq& \mathbb{E}[\|x_{i,d}x_{i,d}^T\|\|x_{j,d}x_{j,d}^T\|]\\
        \leq& \sqrt{\mathbb{E}[x^T_{i,d}x_{i,d}x^T_{i,d}x_{i,d}]\mathbb{E}[x^T_{j,d}x_{j,d}x^T_{j,d}x_{j,d}]}\\
        \leq& n^2\mathcal{M}_4.
    \end{align*}
    
 We can then  conclude that 
    \begin{equation}\label{z11_1}
        \left\|\mathbb{E}\left[\left(\sum_{t=M}^{L-1}x_{t,d}x_{t,d}^T\right)\left(\sum_{t=M}^{L-1}x_{t,d}x_{t,d}^T\right)\right]\right\|\leq (L-M)^2n^2\mathcal{M}_4.
    \end{equation}
    
    Next, for the term $\mathbb{E}\left[\left(\sum_{t=M}^{L-1}x_{t,d}u_{t,d}^T\right)\left(\sum_{t=M}^{L-1}u_{t,d}x_{t,d}^T\right)\right]$, similar to \eqref{zfirst}, it follows that
        \begin{align*}
        &\left\|\mathbb{E}\left[\left(\sum_{t=M}^{L-1}x_{t,d}u_{t,d}^T\right)\left(\sum_{t=M}^{L-1}u_{t,d}x_{t,d}^T\right)\right]\right\|\\
        \leq&\sum_{t=M}^{L-1} \|\mathbb{E}[x_{t,d}u_{t,d}^Tu_{t,d}x_{t,d}^T]\|+\sum_{i=M}^{L-1}\sum_{j=M,j\neq i}^{L-1}\|\mathbb{E}[x_{i,d}u_{i,d}^Tu_{j,d}x_{j,d}^T]\|.
    \end{align*}
For the term $\mathbb{E}[x_{t,d}u_{t,d}^Tu_{t,d}x_{t,d}^T]$, since $x_{t,d}$ and $u_{t,d}$ are independent, we can have that
\begin{align*}
    \mathbb{E}[x_{t,d}u_{t,d}^Tu_{t,d}x_{t,d}^T]=&\var[x_{t,d}u_{t,d}^T]+\mathbb{E}[x_{t,d}u_{t,d}^T]\mathbb{E}^T[x_{t,d}u_{t,d}^T]\\
    =& \var[x_{t,d}]+\var[u_{t,d}]\\
    \preceq& \sigma_a^2 G_{L-1}+\sigma_w^2 F_{L-1}+\sigma_a^2I_m
\end{align*}
where the last step is based on Lemma \ref{lem:lem.a.2} given that $x_{0,d}=0$ after reset in offline learning and $var[u_{t,d}]=\sigma^2_a I_m$. For the cross term $\mathbb{E}[x_{i,d}u_{i,d}^Tu_{j,d}x_{j,d}^T]$ where $i\neq j$, based on the Holder's inequality, we can obtain that
\begin{align}\label{xuux}
  \|\mathbb{E}[x_{i,d}u_{i,d}^Tu_{j,d}x_{j,d}^T]\|\leq& \mathbb{E}[\|x_{i,d}u_{i,d}^T\|\|u_{j,d}x_{j,d}^T\|]\nonumber\\
  \leq& \sqrt{\mathbb{E}[\|x_{i,d}u_{i,d}^T\|^2]\mathbb{E}[\|u_{j,d}x_{j,d}^T\|^2]}\nonumber\\
  \leq& \sqrt{\mathbb{E}\left[\sum_{p=1}^n\sum_{q=1}^m x_{i,d}^2(p)u_{i,d}^2(q)\right]\mathbb{E}\left[\sum_{p=1}^n\sum_{q=1}^m x_{j,d}^2(p)u_{j,d}^2(q)\right]}\nonumber\\
  \leq& mn\mathcal{M}_2\sigma_a^2
\end{align}
where the second moment $\mathbb{E}[x_{t,d}^2(\cdot)]$ is uniformly bounded above by some constant $\mathcal{M}_2$. Let $\Tilde{\mathcal{M}}_2=\max\left\{\|\sigma_a^2 G_{L-1}+\sigma_w^2 F_{L-1}+\sigma_a^2I_m\|, mn\mathcal{M}_2\sigma_a^2\right\}$, 
then it follows that
    \begin{equation}\label{z11_2}
        \left\|\mathbb{E}\left[\left(\sum_{t=M}^{L-1}x_{t,d}u_{t,d}^T\right)\left(\sum_{t=M}^{L-1}u_{t,d}x_{t,d}^T\right)\right]\right\|\leq (L-M)^2\Tilde{\mathcal{M}}_2.
    \end{equation}
Combing \eqref{z11_1} and \eqref{z11_2}, we conclude that
\begin{equation*}
    \|\mathbb{E}[\Tilde{Z}_{11}]\|\leq (L-M)^2(n^2\mathcal{M}_4+\Tilde{\mathcal{M}}_2).
\end{equation*}

(b) We analyze $\mathbb{E}[\Tilde{Z}_{12}]$ and $\mathbb{E}[\Tilde{Z}_{21}]$ together since $\mathbb{E}[\Tilde{Z}_{12}]=\mathbb{E}^T[\Tilde{Z}_{21}]$. Note that
\begin{equation*}
    \mathbb{E}\left[\left(\sum_{t=M}^{L-1}x_{t,d}x_{t,d}^T\right)\left(\sum_{t=M}^{L-1}x_{t,d}u_{t,d}^T\right)\right]=\sum_{t=M}^{L-1} \mathbb{E}[x_{t,d}x_{t,d}^Tx_{t,d}u_{t,d}^T]+\sum_{i=M}^{L-1}\sum_{j=M,j\neq i}^{L-1}\mathbb{E}[x_{i,d}x_{i,d}^Tx_{j,d}u_{j,d}^T].
\end{equation*}
Due to the independence between $x_{t,d}$ and $u_{t,d}$, $\mathbb{E}[x_{t,d}x_{t,d}^Tx_{t,d}u_{t,d}^T]=0$. For the cross term $\mathbb{E}[x_{i,d}x_{i,d}^Tx_{j,d}u_{j,d}^T]$: 1) If $i<j$, $\mathbb{E}[x_{i,d}x_{i,d}^Tx_{j,d}u_{j,d}^T]=0$ since $u_{j,d}$ is independent with other terms; 2) If $i>j$, based on \eqref{xxxx} and \eqref{xuux}, it can be shown that
\begin{align*}
    \|\mathbb{E}[x_{i,d}x_{i,d}^Tx_{j,d}u_{j,d}^T]\|\leq \sqrt{\mathbb{E}[\|x_{i,d}x_{i,d}^T\|^2]\mathbb{E}[\|x_{j,d}u_{j,d}^T\|^2]}\leq \sqrt{n^2\mathcal{M}_4\Tilde{\mathcal{M}}_2}.
\end{align*}
Therefore, it follows that
\begin{equation}\label{z12_1}
    \left\|\mathbb{E}\left[\left(\sum_{t=M}^{L-1}x_{t,d}x_{t,d}^T\right)\left(\sum_{t=M}^{L-1}x_{t,d}u_{t,d}^T\right)\right]\right\|\leq \frac{(L-M)^2}{2}\sqrt{n^2\mathcal{M}_4\Tilde{\mathcal{M}}_2}.
\end{equation}
Moreover,
\begin{equation*}
    \mathbb{E}\left[\left(\sum_{t=M}^{L-1}x_{t,d}u_{t,d}^T\right)\left(\sum_{t=M}^{L-1}u_{t,d}u_{t,d}^T\right)\right]=\sum_{t=M}^{L-1} \mathbb{E}[x_{t,d}u_{t,d}^Tu_{t,d}u_{t,d}^T]+\sum_{i=M}^{L-1}\sum_{j=M,j\neq i}^{L-1}\mathbb{E}[x_{i,d}u_{i,d}^Tu_{j,d}u_{j,d}^T].
\end{equation*}
It is clear that $\mathbb{E}[x_{t,d}u_{t,d}^Tu_{t,d}u_{t,d}^T]=0$. For the cross term $\mathbb{E}[x_{i,d}u_{i,d}^Tu_{j,d}u_{j,d}^T]$: 1) If $i<j$, $\mathbb{E}[x_{i,d}u_{i,d}^Tu_{j,d}u_{j,d}^T]=0$ due to the independence among $x_{i,d}$, $u_{i,d}$ and $u_{j,d}$; 2) If $i>j$, $x_{i,d}$ and $u_{j,d}$ are correlated but independent with $u_{i,d}$, so $\mathbb{E}[x_{i,d}u_{i,d}^Tu_{j,d}u_{j,d}^T]=0$. Hence,
\begin{equation}\label{z12_2}
    \mathbb{E}\left[\left(\sum_{t=M}^{L-1}x_{t,d}u_{t,d}^T\right)\left(\sum_{t=M}^{L-1}u_{t,d}u_{t,d}^T\right)\right]=0.
\end{equation}
Combing \eqref{z12_1} and \eqref{z12_2}, we conclude that
\begin{equation*}
    \|\mathbb{E}^T[\Tilde{Z}_{21}]\|=\|\mathbb{E}[\Tilde{Z}_{12}]\|\leq \frac{(L-M)^2}{2}\sqrt{n^2\mathcal{M}_4\Tilde{\mathcal{M}}_2}.
\end{equation*}

(c) For the term $\mathbb{E}[\Tilde{Z}_{22}]$, based on \eqref{z11_2}, we have that
\begin{equation}\label{z22_1}
    \left\|\mathbb{E}\left[\left(\sum_{t=M}^{L-1}u_{t,d}x_{t,d}^T\right)\left(\sum_{t=M}^{L-1}x_{t,d}u_{t,d}^T\right)\right]\right\|\leq (L-M)^2\Tilde{\mathcal{M}}_2.
\end{equation}
Further, note that
\begin{align*}
    &\left\|\mathbb{E}\left[\left(\sum_{t=M}^{L-1}u_{t,d}u_{t,d}^T\right)\left(\sum_{t=M}^{L-1}u_{t,d}u_{t,d}^T\right)\right]\right\|\\
    \leq&\sum_{t=M}^{L-1} \mathbb{E}[\|u_{t,d}u_{t,d}^Tu_{t,d}u_{t,d}^T\|]+\sum_{i=M}^{L-1}\sum_{j=M,j\neq i}^{L-1}\mathbb{E}[\|u_{i,d}u_{i,d}^Tu_{j,d}u_{j,d}^T\|]\\
    \leq& \sum_{t=M}^{L-1}\mathbb{E}[u_{t,d}^Tu_{t,d}u_{t,d}^Tu_{t,d}]+\sum_{i=M}^{L-1}\sum_{j=M,j\neq i}^{L-1}\sqrt{\mathbb{E}[\|u_{i,d}u_{i,d}^T\|^2]\mathbb{E}[\|u_{j,d}u_{j,d}^T\|^2]}\\
    =& \sum_{t=M}^{L-1}\mathbb{E}[u_{t,d}^Tu_{t,d}u_{t,d}^Tu_{t,d}]+\sum_{i=M}^{L-1}\sum_{j=M,j\neq i}^{L-1}\sqrt{\mathbb{E}[u_{i,d}^Tu_{i,d}u_{i,d}^Tu_{i,d}]\mathbb{E}[u_{j,d}^Tu_{j,d}u_{j,d}^Tu_{j,d}]}\\
    \leq& 3(L-M)^2m^2\sigma_a^4
\end{align*}
where the last step is true because 
\begin{equation*}
    \mathbb{E}[u_{t,d}^Tu_{t,d}u_{t,d}^Tu_{t,d}]=\sum_{i=1}^m \mathbb{E}[u_{t,d}^4(i)]+\sum_{i=1}^m\sum_{j=1,j\neq i}^m \mathbb{E}[u_{t,d}^2(i)u_{t,d}^2(j)]\leq 3m^2\sigma_a^4.
\end{equation*}
Together with \eqref{z22_1}, we can conclude that
\begin{equation*}
    \|\mathbb{E}[\Tilde{Z}_{22}]\|\leq (L-M)^2(\Tilde{\mathcal{M}}_2+3m^2\sigma_a^4).
\end{equation*}

In a nutshell, we can obtain that
\begin{align*}
    \var[Y_d]\leq& \|\phi_d-\phi_j\|^2\|\mathbb{E}[z_{te,d}z_{te,d}^Tz_{te,d}z_{te,d}^T]\|\\
    \leq& (L-M)^2\|\phi_d-\phi_j\|^2\left(n^2\mathcal{M}_4+2\Tilde{\mathcal{M}}_2+3m^2\sigma_a^4+\frac{1}{2}\sqrt{n^2\mathcal{M}_4\Tilde{\mathcal{M}}_2}\right)\\
    \triangleq& (L-M)^2\|\phi_d-\phi_j\|^2C_v.
\end{align*}

Next, based on Chebyshev's Inequality, we can have that
\begin{equation*}
    \mathbb{P}\left[\sum_{d=1}^D Y_d - \mathbb{E}\left[\sum_{d=1}^D Y_d\right]\geq \sqrt{\frac{1}{\delta}}\sqrt{\sum_{d=1}^D\var[Y_d]}\right]\leq \delta.
\end{equation*}
Let $V_{\phi}=\frac{1}{D}\sum_{d=1}^D \|\phi_d-\phi_j\|^2$. 
With probability $1-\delta$,
\begin{align}
    u^TZPv=\sum_{d=1}^D Y_d\leq& \mathbb{E}\left[\sum_{d=1}^D Y_d\right]+\sqrt{\frac{1}{\delta}}\sqrt{\sum_{d=1}^D\var[Y_d]}\nonumber\\
    \leq&\mathbb{E}\left[\sum_{d=1}^D Y_d\right]+(L-M)\sqrt{\frac{1}{\delta}}\sqrt{C_vDV_{\phi}}.
\end{align}
It thus follows that with probability $1-\delta$,
\begin{align*}
    \|U^TP\|\leq&\sup_{v\in\mathcal{S}^{n-1},u\in\mathcal{S}^{m+n-1}\setminus \{0\}}\frac{u^TZPv}{\|Z^Tu\|}\nonumber\\
    \leq & \frac{1}{\sqrt{u^TZZ^Tu}}\left\{\mathbb{E}\left[\sum_{d=1}^D Y_d\right]+(L-M)\sqrt{\frac{1}{\delta}}\sqrt{C_vDV_{\phi}}\right\}\nonumber\\
    \leq & \frac{1}{\sqrt{\lambda_{min}(ZZ^T)}}\left\{\sum_{d=1}^D u^T\mathbb{E}[Z_dZ_d^T](\phi_d-\phi_j)v+(L-M)\sqrt{\frac{1}{\delta}}\sqrt{C_vDV_{\phi}}\right\}\nonumber\\
    \leq & \frac{1}{\sqrt{\lambda_{min}(ZZ^T)}}\left\{\|\sum_{d=1}^D \mathbb{E}[Z_dZ_d^T](\phi_d-\phi_j)\|+(L-M)\sqrt{\frac{1}{\delta}}\sqrt{C_vDV_{\phi}}\right\}\nonumber\\
    \leq & \frac{1}{\sqrt{\lambda_{min}(ZZ^T)}}
    \left\{\sum_{d=1}^D \|\mathbb{E}[Z_dZ_d^T]\|\|(\phi_d-\phi_j)\|+(L-M)\sqrt{\frac{1}{\delta}}\sqrt{C_vDV_{\phi}}\right\}\\
    \leq & \frac{1}{\sqrt{\lambda_{min}(ZZ^T)}}\left\{
    \lambda_{max}\left(L \Gamma_{L-1}-\frac{\alpha}{2}\left(\sum_{t=0}^{M-1} \underline{\Gamma}_{t}\right)^2\right)\sum_{d=1}^D\|(\phi_d-\phi_j)\|+(L-M)\sqrt{\frac{1}{\delta}}\sqrt{C_vDV_{\phi}}\right\}\\
    \leq & \frac{D\eta\lambda_{max}\left(L\Gamma_{L-1}-\frac{\alpha}{2}\left(\sum_{t=0}^{M-1} \underline{\Gamma}_{t}\right)^2\right)+(L-M)\sqrt{\frac{1}{\delta}}\sqrt{C_vDV_{\phi}}}{\sqrt{\lambda_{min}(ZZ^T)}}
\end{align*}
thereby proving Lemma \ref{lem:lem.3}.


It is worth to note that the upper bound of $\|U^TP\|$ indeed is closely related to $\|\sum_{d=1}^D \mathbb{E}[Z_dZ_d^T](\phi_d-\phi_j)\|$, which involves sophisticated  coupling between $\mathbb{E}[Z_dZ_d^T]$ and $\phi_d$. With a more careful and complicated treatment of the coupling, this upper bound may be improved further.

\section{Upper bound on $\|U^TQ_w\|$}
  Along the lines in \cite{simchowitz2018learning}, we  study the quantities with $U$ in terms of $Z$, since $\|U^TQ_w\|\leq\sup_{v\in\mathcal{S}^{n-1},u\in\mathcal{S}^{m+n-1}\setminus \{0\}}\frac{u^TZQ_wv}{\|Z^Tu\|}$, with $\mathcal{S}^{n-1}$ being the unit sphere in $\mathbb{R}^n$.
 The key idea here is to control the deviation of sum of independent sub-Gaussian martingale sequences in terms of  variance proxies. 

  We have a few more words on $Z$. Let $Z=[Z_1\cdots Z_D]$ where $Z_d=(I-\alpha z_{tr,d}z_{tr,d}^T)z_{te,d}$ corresponds to the system response sequence of the $d$-th block. For each block, taking the first $M$ samples as training set $z_{tr,d}$ and the rest $L-M$ samples as the testing set $z_{te,d}$, it is not difficult to tell that $Z_d$ is a weighted sequence of the testing set $z_{te,d}$ in the sense that each element of $Z_d$ is a product of a constant matrix $C_{tr}=I-\alpha z_{tr,d}z_{tr,d}^T$ and $z_{t,d}$ for $M\leq t\leq L-1$. Therefore, the weighted system state sequence remains a martingale process for each block.


 Thanks to the independence among different blocks, we can obtain the following concentration result by using a similar martingale-Chernoff bound approach in \cite{simchowitz2018learning}:
\begin{restatable}{lem}{lemmac}
\label{lem:lem.4}
The following inequality holds with probability  $1-2\delta$: 
\begin{small}
\begin{equation*}
    \|U^TQ_w\|\leq 10\sigma_w\sqrt{C_Q+2\log \det(\lambda_{max}(\Gamma_{L-1})I)(\Tilde{\Gamma}_{min})^{-1}}\triangleq H_w,
\end{equation*}
\end{small}
where 
\begin{small}
\[C_Q=\log\frac{4}{3\delta}+n\log 5+8(m+n)\log 8+3(m+n)\log\left(\frac{m+n}{\delta}\right), \;
\Tilde{\Gamma}_{min}=\lambda_{min}(\underline{\Gamma}_{\lfloor k/2\rfloor})\left(1-\alpha\Bar{\lambda}\right)^2I. \]
\end{small}
\end{restatable}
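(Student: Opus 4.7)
The plan is to bound $\|U^TQ_w\|$ via the variational identity $\|U^TQ_w\|\leq \sup_{v\in\mathcal{S}^{n-1},\,u\in\mathcal{S}^{m+n-1}\setminus\{0\}} u^TZQ_wv/\|Z^Tu\|$, and then (i) upper-bound the numerator uniformly in $(u,v)$ using a self-normalized martingale concentration, and (ii) lower-bound the denominator using the result of Lemma~\ref{lem:lem.2}. The key observation is that the training factor $I-\alpha z_{tr,d}z_{tr,d}^T$ is measurable with respect to the ``training filtration'' $\mathcal{F}^{tr}_d:=\sigma(\{z_{t,d},w_{t,d}\}_{t=0}^{M-1})$, while the testing noise $\{w_{t,d}\}_{t\geq M}$ is Gaussian and independent of $\mathcal{F}^{tr}_d$ across blocks. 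Writing
\[
u^TZQ_wv \;=\; \sum_{d=1}^D\sum_{t=M}^{L-1}\bigl\langle u,(I-\alpha z_{tr,d}z_{tr,d}^T)z_{t,d}\bigr\rangle\bigl\langle v,w_{t,d}\bigr\rangle,
\]
each summand is a martingale difference (conditioning on the current state/control plus $\mathcal{F}^{tr}_d$) whose conditional distribution is Gaussian with variance proxy $\sigma_w^2\bigl|\langle u,(I-\alpha z_{tr,d}z_{tr,d}^T)z_{t,d}\rangle\bigr|^2$. Crucially, the quadratic variation process summed over $(d,t)$ equals exactly $\sigma_w^2\|Z^Tu\|^2$, which matches the denominator we need.

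Next, I would apply the martingale-Chernoff / self-normalized tail inequality used in Simchowitz et al.\ (their Proposition~8.2): for any fixed $(u,v)$ and any pre-specified ``slab'' $\underline{\lambda}I\preceq\sum_{d,t}(I-\alpha z_{tr,d}z_{tr,d}^T)z_{t,d}z_{t,d}^T(I-\alpha z_{tr,d}z_{tr,d}^T)\preceq \overline{\lambda}I$ (i.e.\ $\underline{\lambda}\leq\|Z^Tu\|^2\leq\overline{\lambda}$), the pointwise probability that $|u^TZQ_wv|$ exceeds a threshold of the form $\sigma_w\sqrt{2\|Z^Tu\|^2(\log(1/\delta')+\log\det(\overline{\lambda}I\underline{\lambda}^{-1}I))}$ is at most $\delta'$. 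The upper slab $\overline{\lambda}=\lambda_{\max}(\Gamma_{L-1})$ follows from an $\mathbb{E}[ZZ^T]$ bound akin to Lemma~\ref{lem:lem.a.1} applied to the testing covariance (shrunk by $I-\alpha z_{tr,d}z_{tr,d}^T$), while the lower slab $\widetilde{\Gamma}_{min}=(1-\alpha\bar\lambda)^2\lambda_{\min}(\underline{\Gamma}_{\lfloor k/2\rfloor})I$ is exactly the lower bound supplied by Lemma~\ref{lem:lem.2}.

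To upgrade the pointwise bound to a uniform one, I would invoke a two-stage $\epsilon$-net argument: cover $\mathcal{S}^{n-1}$ for $v$ by a $1/4$-net of cardinality at most $5^n$ (giving the $n\log 5$ term in $C_Q$), and cover $\mathcal{S}^{m+n-1}$ for $u$ by an adaptive net in the metric $\|(\lambda_{\max}(\Gamma_{L-1})I)^{1/2}(\cdot)\|_2$ restricted to the ellipsoid $\{u:u^T\widetilde{\Gamma}_{min}u=1\}$, whose cardinality by Lemma~\ref{lem:lem.a.4} contributes a $\log\det(\lambda_{\max}(\Gamma_{L-1})I\cdot\widetilde{\Gamma}_{min}^{-1})$ term. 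Union-bounding the pointwise tail estimate over these nets with $\delta'=\delta/|\mathcal{N}_u||\mathcal{N}_v|$ and discretization slack from Lemma~\ref{lem:lem.a.3} yields the full $C_Q$ constant. Finally I would replace the denominator $\|Z^Tu\|$ by $\sqrt{\lambda_{\min}(ZZ^T)}\geq\sqrt{\lambda_{\min}(\widetilde{\Gamma}_{min})}$ using Lemma~\ref{lem:lem.2} (which holds on a $1-\delta$ event, accounting for the $2\delta$ in the statement), and divide through to obtain the claimed $10\sigma_w\sqrt{\cdot}$ form.

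The main obstacle I anticipate is the self-normalization step: the martingale quadratic variation $\|Z^Tu\|^2$ is \emph{random} and coincides with the denominator we wish to divide by, so a naive tail bound does not decouple cleanly. The standard remedy is the ``peeling'' / slab trick, where one proves tail bounds on the discrete grid of slabs $\underline{\lambda}\leq\|Z^Tu\|^2\leq 2\underline{\lambda}$ and union-bounds over $O(\log(\overline{\lambda}/\underline{\lambda}))$ levels, producing the log-determinant factor inside the square root. A secondary technical care point is verifying the sub-Gaussian increment bound holds \emph{conditionally on both} the past state–control trajectory and the training filtration simultaneously; this is where independence of $w_{t,d}$ from both $z_{tr,d}$ and the prior testing history is needed, and why the decomposition into training-measurable weight $\times$ Gaussian noise is essential.
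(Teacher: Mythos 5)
Your proposal follows essentially the same route as the paper's proof: the ratio bound $\|U^TQ_w\|\leq \sup u^TZQ_wv/\|Z^Tu\|$, the observation that each summand is a training-measurable weight times conditionally sub-Gaussian noise so the martingale-Chernoff (tower-rule) bound applies blockwise, the slab/peeling argument in the style of Simchowitz et al.\ with the upper slab from a Markov-type bound on $ZZ^T$ and the lower slab $\Tilde{\Gamma}_{min}$ from Lemma~\ref{lem:lem.2}, and $\epsilon$-nets over $u$ and $v$ contributing the $n\log 5$ and $\log\det$ terms. The only cosmetic differences are that the paper uses a $1/2$-net for $v$ (which is what gives cardinality $5^n$, not a $1/4$-net) and handles the ratio discretization via Lemma~\ref{lem:lem.c.1} rather than Lemma~\ref{lem:lem.a.3}; these do not affect the correctness of your plan.
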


The rest of this section is dedicated to prove Lemma \ref{lem:lem.4}. Note that 
\begin{equation}\label{utqw}
    \|U^TQ_w\|\leq\sup_{v\in\mathcal{S}^{n-1},u\in\mathcal{S}^{m+n-1}\setminus \{0\}}\frac{u^TZQ_wv}{\|Z^Tu\|}.
\end{equation}
Similar to the approach used in section \ref{sec:prooflem2} to obtain a lower bound of $\min_{u\in\mathcal{S}^{m+n-1}}\sum_{d=1}^D\sum_{t=M}^{L-1} \langle z_{t,d}, u\rangle^2$, we can approximate the supremum over an infinite set by the maximum over a $\epsilon$-net covering the infinite set.
\begin{mylemma}\label{lem:lem.c.1}\cite{simchowitz2018learning}
Let $Q\in \mathbb{R}^{(m+n)\times D(L-M)}$ have full row rank, $q\in \mathbb{R}^{D(L-M)}$, let $0\prec \Gamma_{min}\preceq QQ^T\preceq \Gamma_{max}\in\mathbb{R}^{(m+n)\times (m+n)}$, and let $\mathcal{T}$ be a $1/4$-net of $\mathcal{S}_{\Gamma_{min}}$ in the norm of $\|\Gamma_{max}^{1/2}\|_2$. Then,
\begin{equation*}
    \sup_{u\in\mathcal{S}^{m+n-1}}\frac{\langle Z^Tu, q\rangle}{\|Z^Tu\|}\leq 2\max_{u\in\mathcal{T}}\frac{\langle Z^Tu, q\rangle}{\|Z^Tu\|}.
\end{equation*}
\end{mylemma}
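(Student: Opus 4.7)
The plan follows the standard covering argument for scale-invariant Rayleigh-type quotients, with the Loewner sandwich $\Gamma_{min} \preceq QQ^T \preceq \Gamma_{max}$ used to translate between the domain-side metric $\|\Gamma_{max}^{1/2}\cdot\|_2$ (in which the net is defined) and the Euclidean image metric $\|Q^T\cdot\|_2$ (in which the ratio lives). Set $f(u) := \langle Q^T u, q\rangle/\|Q^T u\|$; since $f$ is homogeneous of degree zero in $u$, the supremum over $\mathcal{S}^{m+n-1}$ coincides with the supremum over $\mathcal{S}_{\Gamma_{min}}$, so I can restrict attention entirely to the ellipsoid $\mathcal{S}_{\Gamma_{min}}$ on which the net $\mathcal{T}$ lives.

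Let $u^{*} \in \mathcal{S}_{\Gamma_{min}}$ attain (or approximate) this supremum, and by the $1/4$-net hypothesis pick $u' \in \mathcal{T}$ with $\|\Gamma_{max}^{1/2}(u^{*} - u')\|_{2} \leq 1/4$. The first computation transports this covering inequality from the domain to the image: using $QQ^{T} \preceq \Gamma_{max}$,
\[
\|Q^{T}(u^{*} - u')\|_{2}^{2} \;=\; (u^{*} - u')^{T} QQ^{T} (u^{*} - u') \;\leq\; (u^{*} - u')^{T}\Gamma_{max}(u^{*} - u') \;\leq\; \tfrac{1}{16},
\]
so the image vectors $v^{*} := Q^{T}u^{*}$ and $v' := Q^{T}u'$ satisfy $\|v^{*} - v'\|_{2} \leq 1/4$ in the Euclidean norm.

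The second ingredient is a projection trick: both $v^{*}$ and $v'$ lie in $\mathrm{range}(Q^{T})$, so inner products with $q$ collapse to inner products with $\Pi_{Q}q$, the orthogonal projection of $q$ onto $\mathrm{range}(Q^{T})$. At the supremum, $v^{*}$ must be parallel to $\Pi_{Q}q$, whence $f(u^{*}) = \|\Pi_{Q}q\|$. Decomposing $\langle v', \Pi_{Q}q\rangle = \langle v^{*}, \Pi_{Q}q\rangle - \langle v^{*} - v', \Pi_{Q}q\rangle$ and applying Cauchy--Schwarz to the second term yields
\[
f(u') \;=\; \frac{\langle v', \Pi_{Q}q\rangle}{\|v'\|} \;\geq\; \frac{\|v^{*}\| - \|v^{*} - v'\|}{\|v^{*}\| + \|v^{*} - v'\|}\, \|\Pi_{Q}q\| \;=\; \frac{1-\rho}{1+\rho}\, f(u^{*}),
\]
where $\rho := \|v^{*} - v'\|/\|v^{*}\|$. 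Thus $f(u^{*}) \leq 2 f(u') \leq 2 \max_{u \in \mathcal{T}} f(u)$ as soon as $\rho \leq 1/3$, which is where the numerical choice $1/4$ in the net radius is calibrated to produce the clean factor $2$.

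The main obstacle is the matching denominator bound $\|v^{*}\| \geq 3\|v^{*} - v'\|$. The naive estimate $\|v^{*}\|^{2} \geq (u^{*})^{T}\Gamma_{min}u^{*}$ from $QQ^{T} \succeq \Gamma_{min}$, combined with the parameterization $u^{*} = \Gamma_{min}^{1/2}v$, $\|v\|=1$, yields only $\|v^{*}\| \geq \lambda_{\min}(\Gamma_{min})$, which is insufficient in general. Closing this gap is the technical heart of the lemma: one must combine the extremal structure of the maximizer (the alignment $v^{*} \parallel \Pi_{Q}q$) with the Loewner sandwich to absorb the right scalings, ensuring $\|v^{*}\| \geq 3/4$ for the relevant $u^{*}$ and delivering the clean factor $2$ in the conclusion.
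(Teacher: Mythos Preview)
The paper does not give its own proof of this lemma; it is quoted verbatim from \cite{simchowitz2018learning}. So there is nothing to compare your argument against except the internal consistency of the result.

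Your overall strategy---reduce to the ellipsoid $\mathcal{S}_{\Gamma_{min}}$ by scale invariance, pick the maximizer $u^{*}$, cover it by a net point $u'$, push the covering bound forward through $Q^{T}$ via $QQ^{T}\preceq\Gamma_{max}$, and compare $f(u')$ to $f(u^{*})$ using the alignment $v^{*}\parallel\Pi_{Q}q$---is exactly right, and your displayed inequality $f(u')\geq\frac{1-\rho}{1+\rho}f(u^{*})$ with $\rho=\|v^{*}-v'\|/\|v^{*}\|$ is correct.

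The ``technical heart'' you flag at the end, however, is not a genuine obstacle; it is an artifact of a sign typo in the paper's definition of $\mathcal{S}_{M}$. The paper writes $\mathcal{S}_{M}=\{x:\|M^{-1/2}x\|_{2}=1\}$, and you dutifully parametrize $u^{*}=\Gamma_{min}^{1/2}v$, which indeed only gives $\|v^{*}\|\geq\lambda_{\min}(\Gamma_{min})$ and cannot be rescued by any ``extremal structure'' (the direction of $\Pi_{Q}q$ is arbitrary, so alignment imposes no magnitude constraint). But the definition intended here---and the one in \cite{simchowitz2018learning}, and the only one consistent with the companion Lemma~\ref{lem:lem.a.3}, whose hypothesis $\inf_{u\in\mathcal{T}}u^{T}QQ^{T}u\geq 1$ would otherwise be vacuous---is $\mathcal{S}_{M}=\{x:x^{T}Mx=1\}$. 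With that reading, $u^{*}\in\mathcal{S}_{\Gamma_{min}}$ means $(u^{*})^{T}\Gamma_{min}u^{*}=1$, and the Loewner lower bound gives immediately
\[
\|v^{*}\|^{2}=(u^{*})^{T}QQ^{T}u^{*}\;\geq\;(u^{*})^{T}\Gamma_{min}u^{*}\;=\;1.
\]
Hence $\rho\leq 1/4<1/3$ and your chain yields $f(u^{*})\leq 2f(u')\leq 2\max_{u\in\mathcal{T}}f(u)$, finishing the proof. No appeal to the maximizing direction is needed for this step; the bound $\|Q^{T}u\|\geq 1$ holds for \emph{every} $u\in\mathcal{S}_{\Gamma_{min}}$.
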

Therefore, by approximating $\mathcal{S}^{n-1}$ with a $1/2$-net $\mathcal{T}_1$ over $v$ and $\mathcal{S}^{m+n-1}$ with a $1/4$-net $\mathcal{T}_2$ over $u$, based on \eqref{utqw}, we can have
\begin{equation}\label{netappro}
    \|U^TQ_w\|\leq 4\max_{v\in\mathcal{T}_1}\max_{u\in\mathcal{T}_2}\frac{u^TZQ_wv}{\|Z^Tu\|}.
\end{equation}
To obtain an upper bound on \eqref{netappro}, we need to 1) evaluate the size of the $\epsilon$-nets, i.e., $|\mathcal{T}_1|$ and $|\mathcal{T}_2|$, and 2)  study the behaviour of $\frac{u^TZQ_wv}{\|Z^Tu\|}$ for a fixed pair of $(u,v)$ in the corresponding nets.

1) Recall the event $\varepsilon_2:=\{ Z_{te}Z_{te}^T\npreceq \Gamma_{max} \}$ defined in Appendix \ref{sec:prooflem2} where $Z_{te}=[z_{te,1} \cdots z_{te,D}]\in\mathbb{R}^{(m+n)\times D(L-M)}$ and $\Gamma_{max}=\frac{4(m+n)}{\delta}D(L-M)\Gamma_{L-1}$. Based on \eqref{ZZupper}, we have that
\begin{align}\label{upperzzt}
   \mathbb{P}\left[Z_{te}Z_{te}^T\preceq \Gamma_{max}\right]=\mathbb{P}[\varepsilon_2^c]\geq 1-\frac{\delta}{4}.
\end{align}

Based on \eqref{eq:dominatezdzd}, it follows that
\begin{align*}
    \sum_{d=1}^D [Z_dZ_d^T+\epsilon_4 (I-\alpha z_{tr,d}z_{tr,d}^T)^2]\preceq\sum_{d=1}^D z_{te,d}z_{te,d}^T+\epsilon_4DI,
\end{align*}
such that
\begin{align*}
    \lambda_{max}(ZZ^T)\leq \lambda_{max}(Z_{te}Z_{te}^T)+\epsilon_4D-\epsilon_4\lambda_{min}[\sum_{d=1}^D(I-\alpha z_{tr,d}z_{tr,d}^T)^2].
\end{align*}
Let $\epsilon_4=\frac{\lambda_{max}(Z_{te}Z_{te}^T)}{2[D-\lambda_{min}[\sum_{d=1}^D(I-\alpha z_{tr,d}z_{tr,d}^T)^2]]}$. Then, we have
\begin{align*}
    \lambda_{max}(ZZ^T)\leq 1.5\lambda_{max}(Z_{te}Z_{te}^T)\leq \frac{6(m+n)}{\delta}D(L-M)\lambda_{max}(\Gamma_{L-1}),
\end{align*}
which indicates that with probability $1-\delta/4$
\begin{align*}
    ZZ^T\preceq \frac{6(m+n)}{\delta}D(L-M)\lambda_{max}(\Gamma_{L-1})I=\Gamma'_{max}.
\end{align*}

And from Lemma \ref{lem:lem.2} we conclude that with probability $1-\delta$
\begin{equation}\label{lowerzz}
    ZZ^T\succeq \frac{D(L-M)p^2(1-e^{-\frac{p^2\lfloor (L-M)/k\rfloor}{8}})\lambda_{min}(\underline{\Gamma}_{\lfloor k/2\rfloor})}{48}\left(1-\alpha\Bar{\lambda}\right)^2I=\Gamma'_{min}.
\end{equation}
Therefore, based on Lemma 5.2 in \cite{vershynin2010introduction} about covering numbers of the unit sphere and Lemma \ref{lem:lem.a.4}, we can obtain that
\begin{equation}\label{boundsize}
    \log(|\mathcal{T}_1|)+\log(|\mathcal{T}_2|)\leq n\log 5+(m+n)\log 9 +\log(\det(\Gamma'_{max}(\Gamma'_{min})^{-1})).
\end{equation}

2) Next, to bound the point-wise $\frac{u^TZQ_wv}{\|Z^Tu\|}$ for fixed $(u,v)$, it suffices to study the concentration behaviour of a sum of sub-Gaussian random variables. In particular, 
\begin{equation}\label{sumzq}
    u^TZQ_wv=\sum_{d=1}^D u^TZ_dw_{te,d}^Tv
\end{equation}
and
\begin{equation}\label{sumzz}
    u^TZZ^Tv=\sum_{d=1}^D u^TZ_dZ_{d}^Tv,
\end{equation}
where both are the sum of in-block  sample covariances over all $D$ conditional independent blocks given the realizations of $\{\phi_d\}$. Therefore, if the term $u^TZ_dw_{te,d}^Tv$ for each block concentrates around the square root of $u^TZ_dZ_{d}^Tv$ for the same block, the summation \eqref{sumzq} is also likely to concentrate around the square root of \eqref{sumzz}.
To show this,
based on the Chernoff bound, it follows that for some constant $\gamma$ and $\beta$
\begin{align}\label{chernoff}
    &\mathbb{P}\left[\left\{u^TZQ_wv\geq \gamma \right\}\cap\left\{u^TZZ^Tu\leq\beta\right\}\right]\nonumber\\
    =&\mathbb{P}\left[\left\{\sum_{d=1}^D u^TZ_dw_{te,d}^Tv\geq \gamma \right\}\cap\left\{\sum_{d=1}^Du^TZ_dZ_d^Tu\leq\beta\right\}\right]\nonumber\\
    =&\inf_{\lambda>0}\mathbb{P}\left[\left\{e^{\lambda\sum_{d=1}^D u^TZ_dw_{te,d}^Tv}\geq e^{\lambda\gamma}\right\}\cap\left\{\sum_{d=1}^Du^TZ_dZ_d^Tu\leq\beta\right\}\right]\nonumber\\
    \leq&\inf_{\lambda>0} e^{-\lambda\gamma}e^{\lambda^2\sigma_w^2\beta/2}\mathbb{E}\left[e^{\lambda\sum_{d=1}^D u^TZ_dw_{te,d}^Tv-\lambda^2\frac{\sigma_w^2}{2}\sum_{d=1}^Du^TZ_dZ_d^Tu}\right]\nonumber\\
    =&\inf_{\lambda>0} e^{-\lambda\gamma}e^{\lambda^2\sigma_w^2\beta/2}\mathbb{E}\left[\prod_{d=1}^D e^{\lambda u^TZ_dw_{te,d}^Tv-\lambda^2\frac{\sigma_w^2}{2}u^TZ_dZ_d^Tu}\right]\nonumber\\
    =&\inf_{\lambda>0} e^{-\lambda\gamma}e^{\lambda^2\sigma_w^2\beta/2}\prod_{d=1}^D\mathbb{E}\left[e^{\lambda u^TZ_dw_{te,d}^Tv-\lambda^2\frac{\sigma_w^2}{2}u^TZ_dZ_d^Tu}\right]
\end{align}
where the last equality is true because of the conditional independence among different blocks. Note that 
\begin{equation*}
    Z_dw_{te,d}^T=\sum_{t=M}^{L-1}(I-\alpha z_{tr,d}z_{tr,d}^T)z_{t,d}w_{t,d}^T
\end{equation*}
where each term $(I-\alpha z_{tr,d}z_{tr,d}^T)z_{t,d}w_{t,d}^T$ behaves like a product of a constant $(I-\alpha z_{tr,d}z_{tr,d}^T)z_{t,d}$ and a sub-Gaussian random variable $w_{t,d}^T$ because $w_t|\mathcal{F}_{t-1}$ is mean-zero and $\sigma_w^2$-sub-Gaussian, 
and $z_{t,d}$ is fixed given $\mathcal{F}_{t-1}$. Similarly,
\begin{equation*}
    Z_dZ_d^T=\sum_{t=M}^{L-1}(I-\alpha z_{tr,d}z_{tr,d}^T)z_{t,d}z_{t,d}^T(I-\alpha z_{tr,d}z_{tr,d}^T)
\end{equation*}
where each term of the summation is also fixed given $\mathcal{F}_{t-1}$. Based on the tower rule \cite{simchowitz2018learning}, it is easy to show that
\begin{equation*}
    \mathbb{E}\left[e^{\lambda u^TZ_dw_{te,d}^Tv-\lambda^2\frac{\sigma_w^2}{2}u^TZ_dZ_d^Tu}\right]\leq 1.
\end{equation*}
Continuing with \eqref{chernoff}, we can have
\begin{equation*}
    \mathbb{P}\left[\left\{u^TZQ_wv\geq \gamma \right\}\cap\left\{u^TZZ^Tu\leq\beta\right\}\right]\leq e^{-\frac{\gamma^2}{2\sigma_w^2\beta}}.
\end{equation*}
Following a similar argument in Lemma 4.2 (b) in \cite{simchowitz2018learning}, we conclude that
\begin{equation*}
    \mathbb{P}\left[\left\{\frac{u^TZQ_wv}{\|Z^Tu\|}\geq \frac{H_w}{4}\right\}\cap\left\{\Gamma'_{min}\preceq ZZ^T\preceq \Gamma'_{max}\right\}\right]\leq \log\left(\left\lceil\frac{u^T\Gamma'_{max}u}{u^T\Gamma'_{min}u}\right\rceil\right)\exp\left(-\frac{H_w^2}{96\sigma_w^2} \right)
\end{equation*}
such that
\begin{small}
\begin{align*}
    &\mathbb{P}\left[\left\{\|U^TQ_w\|\geq H_w\right\}\cap\left\{\Gamma'_{min}\preceq ZZ^T\preceq \Gamma'_{max}\right\}\right]\\
    \leq&|\mathcal{T}_1||\mathcal{T}_2|\max_{v\in\mathcal{T}_1}\max_{u\in\mathcal{T}_2}\mathbb{P}\left[\left\{\frac{u^TZQ_wv}{\|Z^Tu\|}\geq \frac{H_w}{4}\right\}\cap\left\{\Gamma'_{min}\preceq ZZ^T\preceq \Gamma'_{max}\right\}\right]\\
    \leq&\exp\left(n\log 5+(m+n)\log 9 +\log(\det(\Gamma'_{max}(\Gamma'_{min})^{-1}))\right)\log\left(\left\lceil\frac{u^T\Gamma'_{max}u}{u^T\Gamma'_{min}u}\right\rceil\right)\exp\left(-\frac{H_w^2}{96\sigma_w^2} \right)\\
    \overset{(a)}{\leq}&\exp\left(n\log 5+(m+n)\log 9 +\log(\det(\Gamma'_{max}(\Gamma'_{min})^{-1}))\right)\\
    &\cdot\exp\left(\log(\lambda_{max}(\Gamma'_{max}(\Gamma'_{min})^{-1}))\right)\exp\left(-\frac{H_w^2}{96\sigma_w^2} \right)\\
    \leq&\exp\left(n\log 5+(m+n)\log 9+2\log(\det(\Gamma'_{max}(\Gamma'_{min})^{-1}))-\frac{H_w^2}{96\sigma_w^2}\right)\\
    \leq&\exp\left(n\log 5+8(m+n)\log 8+2(m+n)\log\left(\frac{m+n}{\delta}\right)+3\log (\det(\lambda_{max}(\Gamma_{L-1})I)(\Tilde{\Gamma}_{min})^{-1})-\frac{H_w^2}{96\sigma_w^2}\right)\\
    \leq&\frac{3\delta}{4},
\end{align*}\par
\end{small}
where (a) is true because 
\begin{small}
\begin{align*}
    \log\left(\left\lceil\frac{u^T\Gamma'_{max}u}{u^T\Gamma'_{min}u}\right\rceil\right)\leq \frac{u^T\Gamma'_{max}u}{u^T\Gamma'_{min}u}\leq \|(\Gamma'_{min})^{-\frac{1}{2}}\Gamma'_{max}(\Gamma'_{min})^{-\frac{1}{2}}\|=\lambda_{max}(\Gamma'_{max}(\Gamma'_{min})^{-1}),
\end{align*}\par
\end{small}
$\Tilde{\Gamma}_{min}=\lambda_{min}(\underline{\Gamma}_{\lfloor k/2\rfloor})\left(1-\alpha\Bar{\lambda}\right)^2I$
and $H_w$ is chosen as
\begin{small}
\begin{equation*}
    H_w=10\sigma_w\sqrt{\log\frac{4}{3\delta}+n\log 5+8(m+n)\log 8+3(m+n)\log\left(\frac{m+n}{\delta}\right)+2\log \det(\lambda_{max}(\Gamma_{L-1})I)(\Tilde{\Gamma}_{min})^{-1}}.
\end{equation*}
\end{small}

Therefore, Lemma \ref{lem:lem.4} can be proved by the union bound.

\section{Upper Bound on $\|U^TQ_0\|$}
Each row block $Q_{0,d}=z_{te,d}^Tz_{tr,d}w_{tr,d}^T$ in the matrix $Q_0$ is intimately related to how the training noise $w_{tr,d}^T$ is amplified during the system evolution, i.e.,  how the system is excited by the training noise $w_{tr,d}^T$. Similarly, instead of directly working with $U$, we study this error term with $Z$, i.e., $\|U^TQ_0\|\leq \sup_{v\in\mathcal{S}^{n-1},u\in\mathcal{S}^{m+n-1}} \frac{u^TZQ_0v}{\|Z^Tu\|}$. However, the martingale-Chernoff bound approach is not applicable here due to the complicated correlation structure between $z_{te,d}$ and $w_{tr,d}$ within each block. Therefore, we  will seek bounds for the numerator and the denominator separately, which leads to the following lemma.
\begin{restatable}{lem}{lemmad}
\label{lem:lem.5}
The following inequality holds with probability $1-\delta$:
\begin{small}
\begin{align*}
    \|U^TQ_0\|\leq& C_w\left(\frac{48}{Dp^2(1-e^{-\frac{p^2\lfloor (L-M)/k\rfloor}{8}})}\right)^{m+n}\sqrt{D^2M^3(L-M)^3}\sqrt{\det(\Gamma^{te}_{max}(\Tilde{\Gamma}_{min})^{-1})}\triangleq H_0,
\end{align*}
\end{small}
where 
\begin{small}
\[C_w=\sigma_w(\sqrt{n}+\sqrt{\frac{2}{D}\log\frac{4}{\delta}})(1+3\sqrt{\log\frac{20(L-M)D}{\delta}})(1+3\sqrt{\log\frac{20MD}{\delta}})\|\Bar{B}\|^2\max\{m\sigma_a^2,n\sigma_w^2\}, \]
\end{small}
and 
\begin{small}
\[\Gamma^{te}_{max}=1.5(L-M)^3\|\Bar{B}\|^2\Big(1+3\sqrt{\log\frac{20LD}{\delta}}\Big)^2\max\{m\sigma_a^2,n\sigma_w^2\}(1-\alpha\Bar{\lambda})^2I. \]
\end{small}
\end{restatable}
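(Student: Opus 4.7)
The plan is to bound $\|U^T Q_0\|$ through the standard variational reduction
\[
\|U^T Q_0\| \leq \sup_{u \in \mathcal{S}^{m+n-1},\, v \in \mathcal{S}^{n-1}} \frac{u^T Z Q_0 v}{\|Z^T u\|},
\]
and then to control the numerator and denominator via \emph{separate} high-probability events, since the entangled dependence of $z_{te,d}$, $z_{tr,d}$ and $w_{tr,d}$ inside each block blocks the martingale-Chernoff telescoping that drove Lemma~\ref{lem:lem.4}. The denominator is already in hand: from Lemma~\ref{lem:lem.2} together with \eqref{lowerzz}, on a good event of probability at least $1-\delta/4$ one has $Z Z^T \succeq \tilde\Gamma_{min}$, so $\|Z^T u\|$ is uniformly bounded below.

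For the numerator I would expand
\[
u^T Z Q_0 v = \sum_{d=1}^D u^T (I - \alpha z_{tr,d} z_{tr,d}^T) z_{te,d} z_{te,d}^T z_{tr,d} w_{tr,d}^T v
\]
and peel off the deterministic factors by operator-norm submultiplicativity. Under the event $\alpha \lambda_{max}(z_{tr,d} z_{tr,d}^T) \leq 1$ the prefactor $I - \alpha z_{tr,d} z_{tr,d}^T$ is contractive; re-running the Chernoff-type calculation behind Lemma~\ref{lem:boundlambda} on \emph{both} the training and the (longer) test windows, then union-bounding over $d = 1, \dots, D$, delivers $\lambda_{max}(z_{tr,d} z_{tr,d}^T) \leq \Bar\lambda = \tilde O(M^3)$ and $\lambda_{max}(z_{te,d} z_{te,d}^T) = \tilde O((L-M)^3)$ uniformly in $d$. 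These are the sources of the $\sqrt{M^3 (L-M)^3}$ factor and of the $(1+3\sqrt{\log(\cdot/\delta)})$ log-terms absorbed into $C_w$ and $\Gamma^{te}_{max}$.

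The noise contribution is handled by first fixing $v \in \mathcal{S}^{n-1}$ and exploiting that each $w_{tr,d}^T v \in \mathbb{R}^M$ has i.i.d.\ $\mathcal{N}(0, \sigma_w^2)$ entries that are also independent across $d$. Projecting onto $v$ first collapses the effective Gaussian dimension to $n$ at the $v$-net stage and lets the argument average independent block contributions, producing the $\sigma_w\bigl(\sqrt n + \sqrt{(2/D)\log(4/\delta)}\bigr)$ scaling inside $C_w$ via standard Gaussian-vector concentration. To lift the pointwise bound at fixed $(u,v)$ to the supremum, I would cover $\mathcal{S}^{m+n-1}$ by a $1/4$-net in the norm $\|(\Gamma^{te}_{max})^{1/2}(\cdot)\|_2$, with $\Gamma^{te}_{max}$ chosen so that $Z Z^T \preceq \Gamma^{te}_{max}$ on the good event (mirroring \eqref{upperzzt}), and cover $\mathcal{S}^{n-1}$ by a standard $1/2$-net. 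Lemma~\ref{lem:lem.a.4} then controls the log-cardinality by $\tfrac12 \log\det(\Gamma^{te}_{max} (\tilde\Gamma_{min})^{-1}) + (m+n)\log(\text{const})$, which simultaneously yields the $\sqrt{\det(\Gamma^{te}_{max}(\tilde\Gamma_{min})^{-1})}$ factor and, after inverting the Lemma~\ref{lem:lem.2} lower bound inside the net radius, the $\bigl(48/(D p^2 (1-e^{-p^2 \lfloor (L-M)/k \rfloor/8}))\bigr)^{m+n}$ net-overhead prefactor. A final union bound over the $O(1)$-many failure events yields the claimed probability $1-\delta$.

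The main obstacle is that, unlike for $\|U^T Q_w\|$, the martingale-Chernoff telescoping does not close here: the conditioning filtration for $w_{tr,d}$ sits \emph{inside} block $d$, while $z_{te,d}^T z_{tr,d}$ couples $w_{tr,d}$ to dynamics that unfold \emph{after} it was emitted. Separating numerator and denominator sidesteps the coupling at the price of some tightness, so the numerator bound must be calibrated carefully enough that the $(m+n)$-dimensional $\epsilon$-net cost does not overwhelm the estimate. Projecting onto $v$ before any covering, so as to collect an averaged $\sqrt n + \sqrt{(2/D)\log(4/\delta)}$ rather than a naive $\sqrt{MD}$ Gaussian-vector-norm deviation, is the key technical maneuver that keeps $C_w$ of the advertised order.
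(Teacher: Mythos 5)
Your overall strategy coincides with the paper's: the same variational reduction $\|U^TQ_0\|\leq\sup_{u,v}u^TZQ_0v/\|Z^Tu\|$, the same decision to bound numerator and denominator on separate events because the martingale--Chernoff telescoping fails, the denominator handled by the $ZZ^T\succeq\Gamma'_{min}$ event from Lemma~\ref{lem:lem.2}, the numerator peeled block-by-block via Cauchy--Schwarz into $\max_d\sqrt{u^TZ_dZ_d^Tu}\cdot\max_d\|z_{te,d}\|\cdot\max_d\|z_{tr,d}\|\cdot\sum_d\|w_{tr,d}^Tv\|$, with Lemma~\ref{lem:boundlambda}-type spectral bounds (union-bounded over $d$) supplying the $\tilde O(M^3)$ and $\tilde O((L-M)^3)$ factors and chi-square concentration supplying the $\sigma_w(\sqrt{n}+\sqrt{(2/D)\log(4/\delta)})$ term. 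Up to that point you are reconstructing the paper's argument.

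The genuine gap is your final step: you attribute the $\sqrt{\det(\Gamma^{te}_{max}(\Tilde{\Gamma}_{min})^{-1})}$ factor and the $\bigl(48/(Dp^2(1-e^{-p^2\lfloor(L-M)/k\rfloor/8}))\bigr)^{m+n}$ prefactor to the cardinality of $\epsilon$-nets over $u$ and $v$. That mechanism does not produce factors of this form: with the sub-exponential (chi-square/Gaussian) pointwise tails you invoke, a union bound over a net of log-cardinality $\tfrac12\log\det(\cdot)+O(m+n)$ only inflates logarithmic terms, exactly as in the proof of Lemma~\ref{lem:lem.4}, where $\log\det$ appears inside $H_w$ under a square root; it cannot generate a multiplicative $\sqrt{\det}$ or an exponential-in-$(m+n)$ prefactor decaying in $D$. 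Moreover, no net over $u$ is needed at all here: the events you condition on ($Z_dZ_d^T\preceq\Gamma^{te}_{max}$ for all $d$, $ZZ^T\succeq\Gamma'_{min}$) are matrix inequalities, hence uniform in $u$, so after conditioning the supremum over $u$ is the deterministic quantity $\sup_u\sqrt{u^T\Gamma^{te}_{max}u/\,u^T\Gamma'_{min}u}$. In the paper the advertised factors come precisely from relaxing this scalar eigenvalue ratio to $\sqrt{\det(\Gamma^{te}_{max}(\Gamma'_{min})^{-1})}$ and then using that $\Gamma'_{min}$ equals $\Tilde{\Gamma}_{min}$ scaled by $D(L-M)p^2(1-e^{-p^2\lfloor(L-M)/k\rfloor/8})/48$, which is where the $(\cdot)^{m+n}$ power and the $D$-decay originate. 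As written, your plan would yield a bound of a different functional form, and you would still need a separate argument (or the paper's deterministic relaxation) to conclude the specific inequality $\|U^TQ_0\|\leq H_0$; only the supremum over $v$ in the noise term genuinely needs a covering (or an operator-norm) argument, a point the paper itself glosses over.
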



The rest of this section is dedicated to prove Lemma \ref{lem:lem.5}. Note that 
\begin{equation}\label{upperutq}
    \|U^TQ_0\|\leq\sup_{v\in\mathcal{S}^{n-1},u\in\mathcal{S}^{m+n-1}\setminus \{0\}}\frac{u^TZQ_0v}{\|Z^Tu\|},
\end{equation}
and $Q_{0,d}=z_{te,d}^Tz_{tr,d}w_{tr,d}^T$. 
Due to the complicated correlation between $Z_d=(I-\alpha z_{tr,d}z_{tr,d}^T)z_{te,d}$ and $w_{tr,d}$, we  bound $\|Z^Tu\|$ and $u^TZQ_0v$ in \eqref{upperutq} separately.

Observe that
\begin{equation}
    \|Z^Tu\|=\sqrt{u^TZZ^Tu}\geq \sqrt{u^T\Gamma'_{min}u}.
\end{equation}
Further, it can be shown that 
\begin{align*}
  u^TZQ_0v=\sum_{d=1}^D u^TZ_dQ_{0,d}v
    \leq& \sum_{d=1}^D \|u^TZ_dQ_{0,d}v\|\\
    \leq& \sum_{d=1}^D \|u^TZ_d\|\|Q_{0,d}v\|\\
    \leq& \max_d\{\sqrt{u^TZ_dZ_d^Tu}\}\sum_{d=1}^D\|Q_{0,d}v\|\\
    \leq&\max_d\{\sqrt{u^TZ_dZ_d^Tu}\}\max_{d}\{\|z_{te,d}^T\|\}\max_{d}\{\|z_{tr,d}^T\|\}\sum_{d=1}^D \|w_{tr,d}v\|.
\end{align*} 

Based on \eqref{eq:dominatezdzd}, we have
\begin{align*}
    \lambda_{max}[Z_dZ_d^T+\epsilon_5 (I-\alpha z_{tr,d}z_{tr,d}^T)^2]\leq \lambda_{max}(z_{te,d}z_{te,d}^T)+\epsilon_5
\end{align*}
and
\begin{align*}
    \lambda_{max}[Z_dZ_d^T+\epsilon_5 (I-\alpha z_{tr,d}z_{tr,d}^T)^2]\geq \lambda_{max}(Z_dZ_d^T)+\epsilon_5\lambda_{min}[(I-\alpha z_{tr,d}z_{tr,d}^T)^2].
\end{align*}
Hence, for $\epsilon_5=\frac{\lambda_{max}(z_{te,d}z_{te,d}^T)}{2[1-\lambda_{min}[(I-\alpha z_{tr,d}z_{tr,d}^T)^2]]}$, it follows that
\begin{align*}
    \lambda_{max}(Z_dZ_d^T)\leq& \lambda_{max}(z_{te,d}z_{te,d}^T)+\epsilon_5-\epsilon_5\lambda_{min}[(I-\alpha z_{tr,d}z_{tr,d}^T)^2]\\
    =&1.5\lambda_{max}(z_{te,d}z_{te,d}^T).
\end{align*}

Following the same line with Lemma \ref{lem:boundlambda}, we first have that with probability $1-\delta/4$:
\begin{equation*}
    \sqrt{u^TZ_dZ_d^Tu}\leq \sqrt{u^T\Gamma^{te}_{max}u},
\end{equation*}
and with probability $1-\delta/4$:
\begin{equation*}
    \max_d\{\|z_{te,d}^T\|\}=\max_d\{\sqrt{\lambda_{max}(z_{te,d}z_{te,d}^T)}\}\leq \sqrt{\lambda^{te}_{max}}
\end{equation*}
where $\Gamma^{te}_{max}=1.5\lambda^{te}_{max}I$ and
$\lambda^{te}_{max}=(L-M)^3\|\Bar{B}\|^2\Big(1+3\sqrt{\log\frac{20(L-M)D}{\delta}}\Big)^2\max\{m\sigma_a^2,n\sigma_w^2\}$.
Based on Lemma \ref{lem:boundlambda}, we have that with probability $1-\delta/4$:
\begin{equation*}
    \max_d\{\|z_{tr,d}^T\|\}=\max_d\{\sqrt{\lambda_{max}(z_{tr,d}z_{tr,d}^T)}\}\leq \sqrt{\lambda^{tr}_{max}}
\end{equation*}
where $\lambda^{tr}_{max}=M^3\|\Bar{B}\|^2\Big(1+3\sqrt{\log\frac{20MD}{\delta}}\Big)^2\max\{m\sigma_a^2,n\sigma_w^2\}$.

Since $w_{t,d}$'s are all independent across $t$ and $d$, we can have that $w_{tr,d}v\sim\mathcal{N}(0,\sigma_w^2I_n)$. Hence, 
\begin{equation*}
    \sum_{d=1}^D \|w_{tr,d}v\|=\sigma_w\sum_{d=1}^D\|\Tilde{w}_{tr,d}\|\leq \sigma_w\sqrt{D}\sqrt{\sum_{d=1}^D \|\Tilde{w}_{tr,d}\|^2}
\end{equation*}
where $\Tilde{w}_{tr,d}\sim\mathcal{N}(0,I_n)$. Let $W_D=\sum_{d=1}^D \|\Tilde{w}_{tr,d}\|^2$. Then, $W_D$ follows the chi-square distribution with $nD$ degrees of freedom, i.e., $W_D\sim\chi^2(nD)$, such that the following concentration result holds:
\begin{equation*}
    \mathbb{P}[W_D\geq nD+2\sqrt{nD\log\frac{4}{\delta}}+2\log\frac{4}{\delta}]\leq \frac{\delta}{4}.
\end{equation*}

Therefore, based on \eqref{upperutq}, we can conclude that with probability $1-\delta$:
\begin{align*}
    \|U^TQ_0\|\leq&\sup_{v\in\mathcal{S}^{n-1},u\in\mathcal{S}^{m+n-1}} \frac{u^TZQ_0v}{\|Z^Tu\|}\\
    \leq& \sup_{u\in\mathcal{S}^{m+n-1}}\sigma_w\sqrt{D}\sqrt{nD+2\sqrt{nD\log\frac{4}{\delta}}+2\log\frac{4}{\delta}}\sqrt{\lambda^{te}_{max}\lambda^{tr}_{max}}\sqrt{\frac{u^T\Gamma^{te}_{max}u}{u^T\Gamma'_{min}u}}\\
    \leq& \sigma_wD(\sqrt{n}+\sqrt{\frac{2}{D}\log\frac{4}{\delta}})[M(L-M)]^{1.5}(1+3\sqrt{\log\frac{20(L-M)D}{\delta}})(1+3\sqrt{\log\frac{20MD}{\delta}})\\
    &\cdot\|\Bar{B}\|^2\max\{m\sigma_a^2,n\sigma_w^2\}\sqrt{\det(\Gamma^{te}_{max}(\Gamma'_{min})^{-1})}\\
     \leq& \sigma_w(\sqrt{n}+\sqrt{\frac{2}{D}\log\frac{4}{\delta}})(1+3\sqrt{\log\frac{20(L-M)D}{\delta}})(1+3\sqrt{\log\frac{20MD}{\delta}})\|\Bar{B}\|^2\\
     &\cdot \max\{m\sigma_a^2,n\sigma_w^2\}\left(\frac{48}{Dp^2(1-e^{-\frac{p^2\lfloor (L-M)/k\rfloor}{8}})}\right)^{m+n}\sqrt{D^2M^3(L-M)^3}\sqrt{\det(\Gamma^{te}_{max}(\Tilde{\Gamma}_{min})^{-1})}\\
     =&H_0,
\end{align*}
where $H_0$ decays quickly for a large enough $D$.

\section{Proof of Theorem \ref{thm:thm.1}}
\theorema*
Based on Lemmas \ref{lem:lem.2}-\ref{lem:lem.5}, by the union bound, we can have
\begin{small}
\begin{align*}
    \|\phi^*_{\theta}-\phi_j\|\leq& \frac{1}{\sqrt{\lambda_{min}(ZZ^T)}}\left(\|U^TP\|+\|U^TQ_w\|+\alpha\|U^TQ_0\|\right)\\
    \leq& \frac{1}{\sqrt{\lambda_{min}(ZZ^T)}}\left(\frac{D\eta\lambda_{max}\left(L\Gamma_{L-1}-\frac{\alpha}{2}\left(\sum_{t=0}^{M-1} \underline{\Gamma}_{t}\right)^2\right)+(L-M)\sqrt{\frac{1}{\delta}}\sqrt{C_vDV_{\phi}}}{\sqrt{\lambda_{min}(ZZ^T)}}+H_w+\alpha H_0\right)\\
    =&\frac{D\eta\lambda_{max}\left(L\Gamma_{L-1}-\frac{\alpha}{2}\left(\sum_{t=0}^{M-1} \underline{\Gamma}_{t}\right)^2\right)+(L-M)\sqrt{\frac{1}{\delta}}\sqrt{C_vDV_{\phi}}}{\lambda_{min}(ZZ^T)}+\frac{H_w+\alpha H_0}{\sqrt{\lambda_{min}(ZZ^T)}}\\
    \leq& \frac{48\eta\lambda_{max}\left(L\Gamma_{L-1}-\frac{\alpha}{2}\left(\sum_{t=0}^{M-1} \underline{\Gamma}_{t}\right)^2\right)}{(L-M) p^2(1-e^{-\frac{p^2\lfloor (L-M)/k\rfloor}{8}})(1-\alpha\Bar{\lambda})^2\lambda_{min}(\underline{\Gamma}_{\lfloor k/2\rfloor})}
    +\frac{(L-M)\sqrt{\frac{1}{\delta}}\sqrt{C_vDV_{\phi}}}{\lambda_{min}(ZZ^T)}
    +\frac{H_w+\alpha H_0}{\sqrt{\lambda_{min}(ZZ^T)}}\\
    \triangleq& C_0\eta+\Tilde{O}\left(\frac{1}{\sqrt{D}}\right)\sqrt{V_{\phi}}+\Tilde{O}\left(\frac{1}{\sqrt{D(L-M)}}\right),
\end{align*}\par
\end{small}
where 
\begin{align*}
C_0=&\frac{48\lambda_{max}\left(L\Gamma_{L-1}-\frac{\alpha}{2}\left(\sum_{t=0}^{M-1} \underline{\Gamma}_{t}\right)^2\right)}{(L-M) p^2(1-e^{-\frac{p^2\lfloor (L-M)/k\rfloor}{8}})(1-\alpha\Bar{\lambda})^2\lambda_{min}(\underline{\Gamma}_{\lfloor k/2\rfloor})}.
\end{align*}

\section{Proof of Proposition  \ref{thm:thm.2}}
\propositiona*
We focus on the adaptation performance for the block $i$, i.e., $\|\hat{\phi}_i(M)-\phi_i\|$.
For ease of exposition, we use $\hat{\phi}_t$ to denote the update $\hat{\phi}_i(t)$ during the online adaptation for a new block $i$ at time $t$. We omit the subscript $i$ for the samples within block $i$, and with a bit abuse of notation, we use $z_t$  to `stand for' $\Tilde{z}_{t,i}$ in this proof.

For convenience, let $\Bar{g}(\phi)$ denote the expected  gradient when the linear system reaches its steady state and $\hat{x}_t$ denote the steady state. We can have that
\begin{align}\label{steadyg}
    \Bar{g}(\phi)=&\mathbb{E}[\nabla\mathcal{L}((\hat{x}_t,u_t,\hat{x}_{t+1}),\phi)]\nonumber\\
    =&\mathbb{E}[\hat{z}_t\hat{z}_t^T\phi-\hat{z}_t\hat{x}_{t+1}^T|]\nonumber\\
    =&\mathbb{E}[\hat{z}_t\hat{z}_t^T(\phi-\phi_i)-\hat{z}_tw_t^T]\nonumber\\
    =&\begin{bmatrix}
    I\\K
    \end{bmatrix}P_{\infty}\begin{bmatrix}
    I & K^T
    \end{bmatrix}(\phi-\phi_i)\triangleq \hat{P}_{\infty} \cdot (\phi-\phi_i).
\end{align}
Then it follows  that
\begin{align}\label{decomp}
    \|\hat{\phi}_{t+1}-\phi_i\|^2\leq&\|\hat{\phi}_{t}-\phi_i-\alpha g_t(\hat{\phi}_{t})\|^2\nonumber\\
    =&\|\hat{\phi}_{t}-\phi_i\|^2+\alpha^2\|g_t(\hat{\phi}_{t})\|^2-2\alpha \langle \hat{\phi}_{t}-\phi_i, g_t(\hat{\phi}_{t})\rangle\nonumber\\
    =&\|\hat{\phi}_{t}-\phi_i\|^2+\alpha^2\|g_t(\hat{\phi}_{t})\|^2-2\alpha\langle \hat{\phi}_{t}-\phi_i, \Bar{g}(\hat{\phi}_{t})\rangle+2\alpha\langle \phi_{i}-\hat{\phi}_t, g_t(\hat{\phi}_{t})-\Bar{g}(\hat{\phi}_{t})\rangle\nonumber\\
    =&\|\hat{\phi}_{t}-\phi_i\|^2-2\alpha Tr((\hat{\phi}_{t}-\phi_i)(\hat{\phi}_{t}-\phi_i)^T\hat{P}_{\infty})+\alpha^2\|g_t(\hat{\phi}_{t})\|^2+2\alpha\xi_t(\hat{\phi}_t)\nonumber\\
    \leq& \|\hat{\phi}_{t}-\phi_i\|^2-2\alpha\lambda_{min}(\hat{P}_{\infty})Tr((\hat{\phi}_{t}-\phi_i)(\hat{\phi}_{t}-\phi_i)^T)+\alpha^2\|g_t(\hat{\phi}_{t})\|^2+2\alpha\xi_t(\hat{\phi}_t)\nonumber\\
    \leq&[1-2\alpha \lambda_{min}(\hat{P}_{\infty})]\|\hat{\phi}_{t}-\phi_i\|^2+\alpha^2\|g_t(\hat{\phi}_{t})\|^2+2\alpha\xi_t(\hat{\phi}_t)
\end{align}
where   $\xi_t(\hat{\phi}_t):=\langle \phi_{i}-\hat{\phi}_t, g_t(\hat{\phi}_{t})-\Bar{g}(\hat{\phi}_{t})\rangle$. Next, we need to find upper bounds on  $\|g_t(\hat{\phi}_{t})\|^2$ (the gradient update at time $t$) and $\xi_t(\hat{\phi}_t)$ (which can be regarded as the estimation error when evaluating the gradient for the update).

(a) For the term $\|g_t(\hat{\phi}_{t})\|^2$, we can have
\begin{align}\label{uppergradient}
    \mathbb{E}[\|g_t(\hat{\phi}_{t})\|^2]
    =&\mathbb{E}[\|z_tz_t^T(\hat{\phi}_{t}-\phi_i)-z_tw_t^T\|^2]\nonumber\\
    \leq&\mathbb{E}[\|z_tz_t^T\|^2\|\hat{\phi}_{t}-\phi_i\|^2+2\|\hat{\phi}_{t}-\phi_i\|\|z_tz_t^T\|\|z_tw_t^T\|+\|z_tw_t^T\|^2]\nonumber\\
    \leq& 4C_{\phi}^2C_z^4+4C_{\phi}C_z^3\mathbb{E}[\|w_t\|]+C_z^2\mathbb{E}[\|w_t\|^2].
\end{align}

For convenience, let $w_t=[e_1,...,e_{n}]^T$. For the term $\mathbb{E}[\|w_t\|]$, we can have
\begin{align*}
    \mathbb{E}[\|w_t\|]=\mathbb{E}\left[\sqrt{\sum_{k=1}^n e_k^2}\right]=\sigma_w\mathbb{E}\left[\sqrt{\sum_{k=1}^n \left(\frac{e_k}{\sigma_w}\right)^2}\right]
    =\sigma_w\mathbb{E}[Y_1]
\end{align*}
where $Y_1$ follows the non-central chi distribution with degree of freedom $n$ and $\lambda_1=0$. It is known that the first moment of $Y_1$ is given by 
\begin{equation*}
    \mu_1'=\sqrt{\frac{\pi}{2}}\mathbb{L}_{1/2}^{(\frac{n}{2}-1)}\left(0\right)
\end{equation*}
where $\mathbb{L}_b^{(a)}(z)$ is a Laguerre function. Therefore,
\begin{equation*}
    \mathbb{E}[\|w_t\|]=\sigma_w\mu_1'.
\end{equation*}

For the term $\mathbb{E}[\|w_t\|^2]$, we can have
\begin{align*}
    \mathbb{E}[\|w_t\|^2]=\mathbb{E}\left[\sum_{k=1}^n e_k^2\right]=\sigma_w\mathbb{E}\left[\sum_{k=1}^n \left(\frac{e_k}{\sigma_w}\right)^2\right]
    =\sigma_w\mathbb{E}[Y_2]
\end{align*}
where $Y_2$ follows the non-central chi-squared  distribution with degree of freedom $n$ and $\lambda_2=0$. It is known that the first moment of $Y_2$ is equal to $n$, such that 
\begin{equation*}
    \mathbb{E}[\|w_t\|^2]=n\sigma_w.
\end{equation*}

Therefore, based on \eqref{uppergradient}, we have
\begin{equation*}
    \mathbb{E}[\|g_t(\hat{\phi}_{t})\|^2]\leq 4C_{\phi}^2C_z^4+4C_{\phi}C_z^3\sigma_w\mu_1'+C_z^2n\sigma_w\triangleq C_g.
\end{equation*}

(b) Next we need to analyze $\xi_t(\hat{\phi}_t):=\langle \phi_{i}-\hat{\phi}_t, g_t(\hat{\phi}_{t})-\Bar{g}(\hat{\phi}_{t})\rangle$. Specifically,
\begin{align}\label{xi_t}
    \mathbb{E}[\xi_t(\hat{\phi}_t)]=&\mathbb{E}[\langle \phi_{i}-\hat{\phi}_t, g_t(\hat{\phi}_{t})-\Bar{g}(\hat{\phi}_{t})\rangle]\nonumber\\
    =&\mathbb{E}[Tr((\phi_i-\hat{\phi}_t)(g_t(\hat{\phi}_{t})-\Bar{g}(\hat{\phi}_{t}))^T)]\nonumber\\
    =&Tr(\mathbb{E}[(\phi_i-\hat{\phi}_t)(g_t(\hat{\phi}_{t})-\Bar{g}(\hat{\phi}_{t}))^T]).
\end{align}
It follows that
\begin{align*}
    &\mathbb{E}[(\phi_i-\hat{\phi}_t)(g_t(\hat{\phi}_{t})-\Bar{g}(\hat{\phi}_{t}))^T]\\
    =&\mathbb{E}[(\phi_i-\hat{\phi}_t)(z_tz_t^T(\hat{\phi}_t-\phi_i)-z_tw_t^T-\hat{z}_t\hat{z}_t^T(\hat{\phi}_t-\phi_i)+\hat{z}_t\hat{w}_t^T)^T]\\
    =&\mathbb{E}[(\phi_i-\hat{\phi}_t)(\phi_i-\hat{\phi}_t)^T(\hat{z}_t\hat{z}_t^T-z_tz_t^T)]+\mathbb{E}[(\phi_i-\hat{\phi}_t)(\hat{w}_t\hat{z}_t^T-w_tz_t^T)]\\
    =&\mathbb{E}[(\phi_i-\hat{\phi}_t)(\phi_i-\hat{\phi}_t)^T\hat{z}_t\hat{z}_t^T]-\mathbb{E}[(\phi_i-\hat{\phi}_t)(\phi_i-\hat{\phi}_t)^Tz_tz_t^T].
\end{align*}

Denote  $f:=(\phi_i-\hat{\phi}_t)(\phi_i-\hat{\phi}_t)^Tzz^T$ as a function of state $x$, and $h:=\frac{f}{2\|f\|_{\infty}}$. Then, along the same line as in \cite{bhandari2018finite},
\begin{align*}
   \| \mathbb{E}[h(\hat{x}_t)]-\mathbb{E}[h(x_t)]\|=&\left\|\int hd\nu_{\infty}-\int hd\nu_t\right\|\\
    \leq d_{TV}(\nu_{\infty}, \nu_t),
\end{align*}
where $d_{TV}(P,Q)$ denotes the total-variation distance between probability measures $P$ and $Q$, and $\nu_t$ is the distribution of $x_t$ and $\nu_{\infty}$ is the steady distribution of $x_t$ (corresponding to the distribution of $\hat{x}_t$).
Based on the mixing property of LTI systems, we can know that
\begin{equation*}
    d_{TV}(\nu_{\infty}, \nu_t)\leq C_m\rho^t.
\end{equation*}
such that
\begin{equation*}
    \|\mathbb{E}[(\phi_i-\hat{\phi}_t)(g_t(\hat{\phi}_{t})-\Bar{g}(\hat{\phi}_{t}))^T]\|\leq 2\|f\|_{\infty}C_m\rho^t\leq 8C_z^2C_{\phi}^2C_m\rho^t.
\end{equation*}

Therefore, continuing with \eqref{xi_t}, we have
\begin{align*}
    \mathbb{E}[\xi_t(\hat{\phi}_t)]\leq& n\|\mathbb{E}[(\phi_i-\hat{\phi}_t)(g_t(\hat{\phi}_{t})-\Bar{g}(\hat{\phi}_{t}))^T]\|\\
    \leq& 8nC_z^2C_{\phi}^2C_m\rho^t\\
    \triangleq& \Tilde{C}_{\phi}\rho^t.
\end{align*}

Based on  \eqref{decomp}, we conclude that 
\begin{align*}
    \mathbb{E}[\|\hat{\phi}_{t+1}-\phi_i\|^2]\leq& \mathbb{E}[\|\hat{\phi}_{t}-\phi_i\|^2]+\alpha^2 C_g-2\alpha \lambda_{min}(\hat{P}_{\infty})\mathbb{E}[\|\hat{\phi}_t-\phi_i\|^2]+2\alpha\Tilde{C}_{\phi}\rho^t\\
    =&[1-2\alpha \lambda_{min}(\hat{P}_{\infty})]\mathbb{E}[\|\hat{\phi}_{t}-\phi_i\|^2]+\alpha^2C_g+2\alpha \Tilde{C}_{\phi}\rho^t.
\end{align*}
It follows that
\begin{small}
\begin{align*}
    \mathbb{E}[\|\hat{\phi}_M-\phi_i\|^2]\leq& [1-2\alpha \lambda_{min}(\hat{P}_{\infty})]^M \|\hat{\phi}_0-\phi_i\|^2+\alpha^2C_g\sum_{t=0}^{M-1}[1-2\alpha \lambda_{min}(\hat{P}_{\infty})]^t\\
    &+2\alpha \Tilde{C}_{\phi}\sum_{t=0}^{M-1}[1-2\alpha \lambda_{min}(\hat{P}_{\infty})]^t\rho^{M-1-t}\\
    \leq& [1-2\alpha \lambda_{min}(\hat{P}_{\infty})]^M\|\hat{\phi}_0-\phi_i\|^2+\frac{\alpha C_g}{2\lambda_{min}(\hat{P}_{\infty})}+2\alpha M\Tilde{C}_{\phi}[1-2\alpha \lambda_{min}(\hat{P}_{\infty})]^{M-1}\\
    =&[1-2\alpha \lambda_{min}(\hat{P}_{\infty})]^M\left(\|\phi^*_{\theta}-\phi_i\|^2+\frac{2\alpha M\Tilde{C}_{\phi}}{1-2\alpha \lambda_{min}(\hat{P}_{\infty})}\right)+\frac{\alpha C_g}{2\lambda_{min}(\hat{P}_{\infty})}.
\end{align*}
\end{small}

\section{Controller Design for LQR}
\label{app:controller}

For block $i$ where the model parameters $A_i$ and $B_i$ are constant, the system dynamics satisfy:
\begin{equation*}
    x_{t+1}=A_ix_t+B_iu_t+w_t.
\end{equation*}
As is standard, the incurred cost at time $t$ is given by the following quadratic function:
\begin{equation*}
    C_t=x_t^TSx_t+u_t^TRu_t,
\end{equation*}
where $S$ and $R$ are positive definite matrices. 
Denote $J^*(A_i, B_i)$ as the optimal cost per stage, i.e., 
\begin{equation*}
    J^*(A_i, B_i)=\min_{u}\lim_{T\rightarrow\infty}\frac{1}{T}\sum_{t=1}^T \mathbb{E}_{w_t}[C_t|A_i,B_i],
\end{equation*}
where the minimum is taken over measurable functions $u=\{u_t(\cdot)\}_{t\geq 1}$. 
To minimize the average cost per stage, it is well known that for a LTI  system with known parameters, the LQR problem admits an optimal static state-feedback control policy, i.e., $u_t=Kx_t$.

In the case when the model parameters $(A_i, B_i)$ are unknown, one can synthesize the control based on the model estimation $(\hat{A}_i,\hat{B}_i)$.  More specifically, 
define $A_i=\hat{A}_i-\Delta_{A_i}$ and $B_i=\hat{B}_i-\Delta_{B_i}$. With a transition to the high-probability bound, we can obtain that $\max\{\|\Delta_{A_i}\|_2, \|\Delta_{B_i}\|_2\}\leq \epsilon_i$ for block $i$ after online adaptation from meta-learning initialization $[A_{\theta}, B_{\theta}]$, as shown in Proposition \ref{thm:thm.2}. We introduce two different controller designs here based on the meta-learning estimation. 

\subsection{Certainty Equivalent Controller (CEC)}

The certainty equivalent controller for block $i$ can be obtained based on the certainty equivalence principle as following:


\begin{mini}
    {u}{\lim_{T\rightarrow\infty}\frac{1}{T}\sum_{t=1}^T \mathbb{E}_{w_t}[C_t],}
    {\label{cec}}{}
    \addConstraint{}{x_{t+1}=\hat{A}_ix_t+\hat{B}_iu_t+w_t}
    \addConstraint{}{u_t = K_ix_t.}
\end{mini}

Given the estimation error $\epsilon_i$ in system identification, the following result follows directly from \cite{mania2019certainty}, which provides a sub-optimality guarantee on the solution to problem \eqref{cec}.
\begin{corollary}\cite{mania2019certainty}
Let $J^*$ denote the minimal LQR cost achievable by any controller for the dynamical system with transition matrices $(A_i,B_i)$, and let $K_*$ denote its optimal static feedback controller.
Denote $P_*$ as the solution to the discrete Riccati equation with optimal $(A_i, B_i)$:
\begin{equation*}
    P_*=A_i^TP_*A_i-A_i^TP_*B_i(R+B_i^TP_*B_i)^{-1}B_i^TP_*A_i+S,
\end{equation*} 
and $\hat{P}$ as the solution to the discrete Riccati equation with estimation $(\hat{A}_i,\hat{B}_i)$. Suppose $m\leq n$. Let $\gamma>0$ such that $\rho(A_i+B_iK_*)\leq \gamma<1$. Also, let $\epsilon_i>0$ such that $\|\hat{A}_i-A_i\|\leq\epsilon_i$ and $\|\hat{B}_i-B_i\|\leq\epsilon_i$ and assume $\|\hat{P}-P_*\|\leq f(\epsilon_i)$ for some function $f$ such that $f(\epsilon_i)\geq \epsilon_i$. Define $\tau(M,\rho):=\sup\{\|M^k\|\rho^{-k}:k\geq 0\}$ and $\Gamma_*:=1+\max\{\|A_i\|,\|B_i\|,\|P_*\|,\|K_*\|\}$. Then, for positive definite $S$ and $R$, the certainty equivalent controller obtained from (16) achieves
\begin{equation*}
    C_t(K_i)-J^*\leq 200\sigma_w^2m\Gamma_*^9\frac{\tau(A_i+B_iK_*,\gamma)^2}{1-\gamma^2}f(\epsilon_i)^2,
\end{equation*}
as long as $f(\epsilon_i)$ is small enough so that the right hand side is smaller than $\sigma_w^2$.
\end{corollary}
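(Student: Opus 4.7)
The plan is to derive this bound as a direct consequence of the certainty-equivalence theorem in \cite{mania2019certainty}, with the identification error $\epsilon_i$ supplied by Proposition~\ref{thm:thm.2} after converting its mean-square estimate to a high-probability bound on $\max\{\|\Delta_{A_i}\|,\|\Delta_{B_i}\|\}$. First I would verify that the hypotheses of their theorem line up with what we have here: the stability margin $\rho(A_i+B_iK_*)\le \gamma<1$, positive-definiteness of $S$ and $R$, and the smallness of $f(\epsilon_i)$ (which is exactly the smallness clause in the corollary's last sentence). Since the corollary is stated as a direct restatement from \cite{mania2019certainty}, the real content of a proof is to carry out their four-step perturbation chain with our notation and constants.

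The chain proceeds as follows. Step~(i), \emph{Riccati perturbation}: view the discrete algebraic Riccati operator as an analytic map near $(A_i,B_i,P_*)$; since its derivative in $P$ at the fixed point is a contraction governed by $\rho(A_i+B_iK_*)\le\gamma$, an implicit-function / Banach fixed-point argument yields the assumed bound $\|\hat P-P_*\|\le f(\epsilon_i)$ with $f(\epsilon_i)=O(\epsilon_i)$. Step~(ii), \emph{Controller perturbation}: writing $K_i=-(R+\hat B_i^T\hat P\hat B_i)^{-1}\hat B_i^T\hat P\hat A_i$ and Taylor-expanding about $(A_i,B_i,P_*)$, one obtains $\|K_i-K_*\|\lesssim \Gamma_*^{c}\,f(\epsilon_i)$ for a fixed exponent $c$. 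Step~(iii), \emph{Stability robustness}: because $K_i$ is close to $K_*$ and $A_i+B_iK_*$ is $\gamma$-stable, a standard Lyapunov/Bauer--Fike argument gives $\|(A_i+B_iK_i)^k\|\le 2\,\tau(A_i+B_iK_*,\gamma)\,\gamma^k$, so $K_i$ also stabilizes the true system and the closed-loop stationary covariance $\Sigma_{K_i}=\sum_{k\ge0}(A_i+B_iK_i)^k(A_i+B_iK_i)^{kT}$ satisfies $\|\Sigma_{K_i}\|\le \tau(A_i+B_iK_*,\gamma)^2/(1-\gamma^2)$.

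Step~(iv), the self-bounding cost comparison, is the crux. The plan is to invoke the classical LQR identity, valid for any stabilizing $K$,
\begin{align*}
C_t(K)-J^* \;=\; \sigma_w^2\,\mathrm{tr}\!\Big(\Sigma_{K}\,(K-K_*)^T(R+B_i^TP_*B_i)(K-K_*)\Big),
\end{align*}
which is exactly the reason the suboptimality scales \emph{quadratically} in $\|K_i-K_*\|$, and hence in $f(\epsilon_i)$. Plugging in the bounds from Steps~(ii)--(iii), using $\|R+B_i^TP_*B_i\|\le \Gamma_*^{3}$, and absorbing the trace dimension factor $m$, one obtains the claimed inequality after bookkeeping the powers of $\Gamma_*$; chasing through the exponents produces the stated prefactor $200\,\sigma_w^2 m\,\Gamma_*^{9}$.

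The main technical obstacle is preserving the \emph{quadratic} dependence on $f(\epsilon_i)$: a naive triangle inequality applied to $C_t(K_i)-J^*$ only yields a linear bound. The proof must instead exploit that $K_*$ is a \emph{minimizer} of the LQR cost so the first-order term vanishes and only the Hessian survives. This is exactly why the corollary's smallness hypothesis ``$f(\epsilon_i)$ small enough that the RHS is below $\sigma_w^2$'' is needed: it keeps $K_i$ inside the basin around $K_*$ on which the second-order Taylor expansion, the Lyapunov stability estimate of Step~(iii), and the Riccati perturbation of Step~(i) are simultaneously valid. Once this local regime is secured, the remaining work is routine algebra on the closed-loop covariance and on the Riccati residual.
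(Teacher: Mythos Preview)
The paper does not prove this corollary at all: it is stated as a direct citation from \cite{mania2019certainty}, with the surrounding text saying only that ``the following result follows directly from \cite{mania2019certainty}'' and ``for more details \ldots\ interested readers may refer to \cite{mania2019certainty}.'' Your proposal goes further than the paper by sketching the actual four-step perturbation argument (Riccati perturbation $\to$ controller perturbation $\to$ closed-loop stability robustness $\to$ the quadratic cost-gap identity) from that reference, which is indeed the correct proof strategy and correctly identifies the crucial point that the quadratic dependence on $f(\epsilon_i)$ comes from $K_*$ being a minimizer so that the first-order term vanishes.
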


For more details and experimental performance, interested readers may refer to \cite{mania2019certainty}.

\subsection{Robust Controller}

It has been shown in \cite{mania2019certainty} that the CEC performs well only if the model estimation error is small. Therefore, when the estimation error becomes larger, we need a more robust controller design. To secure the control performance with the model estimation error, it is natural to impose a robust optimization problem which aims to minimize the worst-case performance given the estimation uncertainty set. Hence, for each block $i$, we aim to solve the following robust optimization problem:
\begin{mini}
    {u}{\sup_{\max\{\|\Delta_{A_i}\|_2, \|\Delta_{B_i}\|_2\}\leq \epsilon_i} \lim_{T\rightarrow\infty}\frac{1}{T}\sum_{t=1}^T \mathbb{E}_{w_t}[C_t],}
    {\label{robustcontrol}}{}
    \addConstraint{}{x_{t+1}=(\hat{A}_i-\Delta_{A_i})x_t+(\hat{B}_i-\Delta_{B_i})u_t+w_t}
    \addConstraint{}{u_t = K_ix_t.}
\end{mini}
As shown in \cite{dean2018regret}, using the SLS approach \cite{wang2019system} which focuses on the system responses of a closed-loop system, we can convert the robust optimization problem to a semi-definite programming problem and further quantify the robust control performance as a function of the estimation error bound $\epsilon_i$.

More specifically, consider a static state-feedback control policy $K_i$ for block $i$, i.e., $u_t=K_ix_t$ for $t\in[(i-1)L+1, iL]$. The closed loop map from the disturbance process $\{w_{(i-1)L}, w_{(i-1)L+1}, ...\}$ to the state $x_t$ and control $u_t$ at time $t$ can be obtained by
\begin{align}\label{s1}
\begin{aligned}
    x_t &= \sum_{k=(i-1)L+1}^t (A_i+B_iK_i)^{t-k}w_{k-1},\\
    u_t &= \sum_{k=(i-1)L+1}^t K_i(A_i+B_iK_i)^{t-k}w_{k-1}.
\end{aligned}
\end{align}
Denote $\Phi_x^{i}(t):= (A_i+B_iK_i)^{t-1}$ and $\Phi_u^{i}(t):= K_i(A_i+B_iK_i)^{t-1}$. Then \eqref{s1} can be rewritten as
\begin{equation}\label{s2}
    \begin{bmatrix} x_t\\u_t \end{bmatrix} = \sum_{k=(i-1)L+1}^t \begin{bmatrix} \Phi_x^{i}(t-k+1)\\\Phi_u^{i}(t-k+1) \end{bmatrix} w_{k-1},
\end{equation}
where $\{\Phi_x^{i}(t), \Phi_u^{i}(t)\}$ are called the closed-loop system response elements induced by controller $K_i$ for the block $i$. For the system dynamics to satisfy \eqref{s2}, a sufficient and necessary condition is that  $\{\Phi_x^{i}(t), \Phi_u^{i}(t)\}$ for $t\in [(i-1)L+1, iL]$ must satisfy:
\begin{align}\label{s3}
    \Phi_x^{i}(t+1)=A_i\Phi_x^{i}(t)+B_i\Phi_u^{i}(t), \Phi_x^{i}(1)=I. 
\end{align}
With the system response, the SLS approach can convert the nonconvex constraint set \eqref{s1} with respect to controller $K_i$ to an affine set \eqref{s3} of $\{\Phi_x^{i}(t), \Phi_u^{i}(t)\}$. By transferring the time-domain system response to the frequency domain represented with boldface letters via $z$-transform, i.e., transfer function $\boldsymbol{\Phi}_x^i(z)=\sum_{t=1}^{\infty}\Phi_x^i(t)z^{-t}$, we can rewrite \eqref{s3} as
\begin{equation}\label{s4}
    \begin{bmatrix} zI-A_i & -B_i \end{bmatrix}\begin{bmatrix}\boldsymbol{\Phi}_x^i\\\boldsymbol{\Phi}_u^i \end{bmatrix}=I,
\end{equation}
and 
\begin{align*}
    \begin{aligned}
        \boldsymbol{\Phi}_x^i&=(zI-A_i-B_iK_i)^{-1},\\
        \boldsymbol{\Phi}_u^i&=K_i(zI-A_i-B_iK_i)^{-1},
    \end{aligned}
\end{align*}
implying the corresponding controller $K_i=\boldsymbol{\Phi}_u^i(\boldsymbol{\Phi}_x^i)^{-1}$.

Based on the affine constraint set \eqref{s4} on $\{\boldsymbol{\Phi}_x^i, \boldsymbol{\Phi}_u^i\}$, the SLS framework allows us to characterize the system responses on the true system $(A_i, B_i)$ with a controller computed using only the model parameter estimates $\{(\hat{A}_i, \hat{B}_i)\}$. If we denote $\Delta_i:=(\hat{A}_i-A_i)\boldsymbol{\Phi}_x^i+(\hat{B}_i-B_i)\boldsymbol{\Phi}_u^i$, simple algebra shows that
\begin{equation*}
    \begin{bmatrix} zI-\hat{A}_i & -\hat{B}_i \end{bmatrix}\begin{bmatrix}\boldsymbol{\Phi}_x^i\\\boldsymbol{\Phi}_u^i \end{bmatrix}=I \Leftrightarrow
    \begin{bmatrix} zI-A_i & -B_i \end{bmatrix}\begin{bmatrix}\boldsymbol{\Phi}_x^i\\\boldsymbol{\Phi}_u^i \end{bmatrix}=I+\Delta_i.
\end{equation*}
It can be shown that if $(I+\Delta_i)^{-1}$ exists, the controller $K_i=\boldsymbol{\Phi}_u^i(\boldsymbol{\Phi}_x^i)^{-1}$ computed with $(\hat{A}_i, \hat{B}_i)$ achieves the following system response on the true system $(A_i, B_i)$:
\begin{equation*}
    \begin{bmatrix} \boldsymbol{x}\\\boldsymbol{u} \end{bmatrix} = \begin{bmatrix} \boldsymbol{\Phi}_x^i\\ \boldsymbol{\Phi}_u^i \end{bmatrix}(I+\Delta_i)^{-1} \boldsymbol{w}.
\end{equation*}
Moreover, it has been shown in \cite{dean2017sample} that $K_i$ is a stabilizing controller for $(A_i, B_i)$ if $K_i$ stabilizes $(\hat{A}_i, \hat{B}_i)$ and $\|\Delta_i\|_{\mathcal{H}_{\infty}}<1$. Consider the space of proper and real rational stable transfer functions:
\begin{equation*}
    \mathcal{RH}_{\infty}(C, \rho):=\left\{\boldsymbol{\Phi}=\sum_{t=0}^{\infty} \Phi_t z^{-t}|\|\Phi_t\|\leq C\rho^t\right\}
\end{equation*}
with positive $C$ and $\rho\in[0, 1)$. To guarantee that $K_i$ stabilizes $(\hat{A}_i, \hat{B}_i)$, we need $z\boldsymbol{\Phi}_x^i\in \mathcal{RH}_{\infty}(C_x, \rho)$ and $z\boldsymbol{\Phi}_u^i\in \mathcal{RH}_{\infty}(C_u, \rho)$ for some positive constant $C_x$ and $C_u$.

Therefore, to obtain a robust stabilizing controller for block $i$, we can reformulate the non-convex optimization problem \eqref{robustcontrol} as the following quasi-convex problem \cite{dean2018regret}:
\begin{mini}
    {\gamma\in[0, 1)}{\frac{1}{1-\gamma}\min_{\boldsymbol{\Phi}_x^i, \boldsymbol{\Phi}_u^i} \left\|\begin{bmatrix}Q^{\frac{1}{2}} & 0 \\ 0 & R^{\frac{1}{2}}\end{bmatrix}\begin{bmatrix}\boldsymbol{\Phi}_x^i \\\boldsymbol{\Phi}_u^i \end{bmatrix}\right\|_{\mathcal{H}_2}} 
    {}{}
    \addConstraint{}{\begin{bmatrix} zI-\hat{A}_i & -\hat{B}_i \end{bmatrix}\begin{bmatrix}\boldsymbol{\Phi}_x^i\\\boldsymbol{\Phi}_u^i \end{bmatrix}=I, \left\|\begin{bmatrix}\boldsymbol{\Phi}_x^i \\\boldsymbol{\Phi}_u^i \end{bmatrix}\right\|_{\mathcal{H}_{\infty}}\leq\frac{\gamma}{\sqrt{2}\epsilon_i}}
    \addConstraint{}{z\boldsymbol{\Phi}_x^i\in \mathcal{RH}_{\infty}(C_x, \rho), z\boldsymbol{\Phi}_u^i\in \mathcal{RH}_{\infty}(C_u, \rho).}
\end{mini}
The objective function is quasi-convex with respect to $\gamma$ which can be solved efficiently. The inner problem is a convex but infinite-dimensional problem, which can be relaxed to a finite-dimensional problem via FIR truncation still with performance guarantee. After solving this problem, we can obtain a robustly stabilizing controller $K_i=\boldsymbol{\Phi}_u^i(\boldsymbol{\Phi}_x^i)^{-1}$ for block $i$. 
The following result characterizes the performance guarantee of the robust controller.

\begin{corollary}\cite{dean2018regret}
Let $J^*$ denote the minimal LQR cost achievable by any controller for the dynamical system with transition matrices $(A_i,B_i)$, and let $K_*$ denote its optimal static feedback controller. Suppose that the resolvent $\mathcal{R}_{A_i+B_iK_*}$, denoted by $(zI-(A_i+B_iK_*))^{-1}$, satisfies $\mathcal{R}_{A_i+B_iK_*}\in\mathcal{RH}_{\infty}(C_*,\rho_*)$ and that $(wlog)\rho_*\geq 1/e$. Suppose furthermore that $\epsilon_i$ is small enough to satisfy the following conditions:
\begin{align*}
    \epsilon_i(1+\|K_*\|)\|\mathcal{R}_{A_i+B_iK_*}\|_{\mathcal{H}_{\infty}}\leq& 1/5,\\
    \epsilon_i(1+\|K_*\|)C_*\leq& 1-\rho_*.
\end{align*}
Let $(\hat{A}_i,\hat{B}_i)$ be any estimates of the transition matrices such that $\max\{\|\Delta_{A_i}\|,\|\Delta_{B_i}\|\}\leq \epsilon_i$. Then if $(C_x,\rho)$ and $(C_u,\rho)$ are set as,
\begin{align*}
    C_x=&\frac{O(1)C_*}{1-\rho_*},\\
    C_u=&\frac{O(1)\|K_*\|C_*}{1-\rho_*},\\
    \rho=&(1/4)\rho_*+3/4,
\end{align*}
we have that (a) the problem \eqref{robustcontrol} is feasible, (b) letting $K_i$ denote an optimal solution to \eqref{robustcontrol}, the relative error in the LQR cost is
\begin{equation*}
    C_t(A_i,B_i,K_i)\leq [1+5\epsilon_i(1+\|K_*\|)\|\mathcal{R}_{A_i+B_iK_*}\|_{\mathcal{H}_{\infty}}]^2J^*.
\end{equation*}
\end{corollary}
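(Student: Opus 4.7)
The plan is to follow the System Level Synthesis (SLS) robustness argument of Dean et al., exhibiting an explicit feasible point built from the true optimal controller $K_*$ and then bounding its cost on the estimated system. Concretely, define the nominal closed-loop responses on the true system, $\boldsymbol{\Phi}_x^* = \mathcal{R}_{A_i+B_iK_*}$ and $\boldsymbol{\Phi}_u^* = K_*\mathcal{R}_{A_i+B_iK_*}$, which by construction satisfy $\begin{bmatrix} zI-A_i & -B_i\end{bmatrix}[\boldsymbol{\Phi}_x^{*T},\boldsymbol{\Phi}_u^{*T}]^T = I$ and, by the resolvent assumption, lie in $\mathcal{RH}_{\infty}(C_*,\rho_*)$ and $\mathcal{RH}_{\infty}(\|K_*\|C_*,\rho_*)$ respectively. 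Inserting $A_i = \hat A_i - \Delta_{A_i}$ and $B_i = \hat B_i - \Delta_{B_i}$, these responses satisfy the \emph{nominal} affine constraint with an additive defect $\hat{\Delta} := -\Delta_{A_i}\boldsymbol{\Phi}_x^* - \Delta_{B_i}\boldsymbol{\Phi}_u^*$, hence the natural candidate $(\boldsymbol{\Phi}_x^i,\boldsymbol{\Phi}_u^i) := (\boldsymbol{\Phi}_x^*,\boldsymbol{\Phi}_u^*)(I+\hat{\Delta})^{-1}$ exactly satisfies the nominal constraint $\begin{bmatrix} zI-\hat A_i & -\hat B_i\end{bmatrix}[\boldsymbol{\Phi}_x^{iT},\boldsymbol{\Phi}_u^{iT}]^T = I$ in the reformulated problem.

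For part (a), feasibility, the main step is to verify that this candidate meets the three remaining constraints. A direct submultiplicative estimate gives
\[
\|\hat{\Delta}\|_{\mathcal{H}_\infty} \le \epsilon_i(1+\|K_*\|)\|\mathcal{R}_{A_i+B_iK_*}\|_{\mathcal{H}_\infty} \le 1/5,
\]
by the first smallness hypothesis, so $(I+\hat{\Delta})^{-1}$ exists with $\|(I+\hat{\Delta})^{-1}\|_{\mathcal{H}_\infty}\le 5/4$. This immediately controls $\|[\boldsymbol{\Phi}_x^{iT}\ \boldsymbol{\Phi}_u^{iT}]^T\|_{\mathcal{H}_\infty}$, and taking $\gamma = 5\epsilon_i(1+\|K_*\|)\|\mathcal{R}_{A_i+B_iK_*}\|_{\mathcal{H}_\infty}/\sqrt{2}$ one checks the $\gamma/(\sqrt{2}\epsilon_i)$ bound. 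The pointwise decay constraints $z\boldsymbol{\Phi}_x^i\in\mathcal{RH}_\infty(C_x,\rho)$ and $z\boldsymbol{\Phi}_u^i\in\mathcal{RH}_\infty(C_u,\rho)$ are where the second hypothesis $\epsilon_i(1+\|K_*\|)C_*\le 1-\rho_*$ enters: it allows one to expand $(I+\hat{\Delta})^{-1} = \sum_{k\ge 0}(-\hat{\Delta})^k$ as a Neumann series, convolve the resulting impulse responses, and use $\rho = (1/4)\rho_* + 3/4$ together with the constants $C_x,C_u$ absorbing the geometric series to obtain the required coefficient-wise decay bounds.

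For part (b), the suboptimality, the plan is to compare the LQR cost of $K_i$ with $J^*$. By the SLS cost identity, the cost of $(\boldsymbol{\Phi}_x^i,\boldsymbol{\Phi}_u^i)$ on the \emph{true} system equals
\[
\Big\|\begin{bmatrix} S^{1/2} & 0 \\ 0 & R^{1/2}\end{bmatrix}\begin{bmatrix}\boldsymbol{\Phi}_x^*\\ \boldsymbol{\Phi}_u^*\end{bmatrix}(I+\hat{\Delta})^{-1}\Big\|_{\mathcal{H}_2}^2,
\]
where $\|[S^{1/2}\boldsymbol{\Phi}_x^{*T},R^{1/2}\boldsymbol{\Phi}_u^{*T}]^T\|_{\mathcal{H}_2}^2 = J^*$. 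Since $K_i$ is optimal for the robust problem and our candidate is feasible, its robust cost is at most the candidate cost, which in turn is bounded by $\frac{1}{1-\|\hat{\Delta}\|_{\mathcal{H}_\infty}}\sqrt{J^*}$; finally $(1-x)^{-1}\le 1+5x/4$-type estimates combined with $\|\hat{\Delta}\|_{\mathcal{H}_\infty}\le\epsilon_i(1+\|K_*\|)\|\mathcal{R}_{A_i+B_iK_*}\|_{\mathcal{H}_\infty}$ produce the stated $[1+5\epsilon_i(1+\|K_*\|)\|\mathcal{R}_{A_i+B_iK_*}\|_{\mathcal{H}_\infty}]^2$ factor.

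The main technical obstacle I foresee is the $\mathcal{RH}_\infty(C_x,\rho),\mathcal{RH}_\infty(C_u,\rho)$ decay constraints: while the $\mathcal{H}_\infty$ bound on $(I+\hat{\Delta})^{-1}$ is immediate from the first smallness condition, converting this operator-norm control into the coefficient-wise geometric decay required by the FIR-truncation formulation needs the second, tighter smallness condition $\epsilon_i(1+\|K_*\|)C_* \le 1-\rho_*$ together with a careful convolution argument on impulse-response coefficients. Everything else (the affine constraint, the $\gamma$ bound, and the cost estimate) reduces to routine submultiplicativity in $\mathcal{H}_\infty$ and the isometry between $\mathcal{H}_2$ norm and LQR cost guaranteed by the SLS parametrization.
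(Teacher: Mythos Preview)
The paper does not actually prove this corollary: it is stated verbatim as a result of \cite{dean2018regret} and the reader is referred there for details. Your proposal correctly reconstructs the SLS robustness argument from the original source---build a feasible point from $(\boldsymbol{\Phi}_x^*,\boldsymbol{\Phi}_u^*)(I+\hat{\Delta})^{-1}$, use the first smallness condition to invert $I+\hat{\Delta}$ in $\mathcal{H}_\infty$, use the second to obtain the coefficient-wise decay, and then bound the cost by submultiplicativity---so there is nothing to compare against in this paper, and your sketch matches the approach of the cited reference.
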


For more details and experimental performance, interested readers may refer to \cite{dean2017sample,dean2018regret}.

\section{$\beta$-Mixing Coefficient of LTI systems}

Based on the well-known result that the LQR problem in the LTI system can be solved with a linear feedback policy $u_t=Kx_t$, we assume that each block $i$  evolves with a stabilizing controller $K$ during the online adaptation, i.e., $A_i+B_iK$ is a stable matrix, thereby generating a trajectory of $M$ samples. For example,  $K$ can be obtained by solving a robust optimization problem \eqref{robustcontrol} with the offline meta-learning initialization $\phi^*_{\theta}$ based on the estimation gap shown in Theorem 1. 

Let $\mathbb{T}$ denote the unit circle  and $\mathbb{D}$ as the open unit disk in the complex plane. Let $\mathcal{RH}_{\infty}$ denote the space of matrix-valued, real-rational functions which are analytic on $\mathbb{D}^c$. For $\Phi\in\mathcal{RH}_{\infty}$, the $\mathcal{H}_{\infty}$-norm $\|\Phi\|_{\mathcal{H}_{\infty}}$ is defined as $\|\Phi\|_{\mathcal{H}_{\infty}}\triangleq \sup_{s\in\mathbb{T}}\|\Phi(s)\|$. Furthermore, given a square matrix $A$, its resolvant $\Phi_{A}(s)$ is defined as $\Phi_{A}(s)\triangleq (sI-A)^{-1}$. When $A$ is stable, $\Phi_{A}\in\mathcal{RH}_{\infty}$ and $\|\Phi\|_{\mathcal{H}_{\infty}}<\infty$. It has been shown in \cite{tu2017least} that the $\beta$-mixing coefficient of LTI with stable $A_i+B_iK$ is
\begin{align*}
    \beta(t)&=\sup_{k\geq 1}\mathbb{E}_{x\sim \nu_{k}}[\|\mathbb{P}_{x_t}(\cdot|x_0=x)-\nu_{\infty}\|_{tv}]\\
    &\leq \frac{\|\Phi_{\rho^{-1}(A_i+B_iK)}\|_{\mathcal{H}_{\infty}}}{2}\sqrt{Tr(P_{\infty})+\frac{n}{1-\rho^2}}\rho^t\triangleq C_m\rho^t
\end{align*}
for any fixed $\rho\in(\rho(A_i+B_iK),1)$, $t\geq 1$ and any distribution $\nu_{k}$, where $\|\cdot\|_{tv}$ refers to the total-variation norm on probability measures and $\rho(A_i+B_iK)$ is the spectral radius of the matrix $A_i+B_iK$.

\section{Numerical Results}

\textbf{Experiments Setup:} To generate episodic blocks for offline Meta-L, the parameters are set as follows.  Moreover, the noise $w_t \sim \mathcal{N}(0,\,0.01 \times I_n)$, where $I_n$ is a $n\times n$ identity matrix, and action $u_t \sim \mathcal{N}(0,\,0.1 \times I_m)$.  For every block $d$, we assume that each element in both $A_d$ and $B_d$ follows a uniform distribution within $[0.5,1]$.
Along this setup, we generate $D$ episodic blocks for offline Meta-L, where each block has length $L$ and zero initial state. 
Further, for each episodic block $d$, the samples collected within the first $M$ time steps, i.e., $\tau_d(0, M-1)$, are taken as the training dataset for that block, whereas the rest of the block, i.e., $\tau_d(M,L-1)$, serves as the testing dataset. We set the learing rate $\alpha = 0.01$. To evaluate the estimation gap between meta-initialization $\phi^*_{\theta}$ and the underlying model parameter $\phi_i$ for a new block $i$ and the estimation error for the online adaptation, we measure the average estimation error in terms of 2-norm over $50$ testing blocks.
The estimated parameter after $M$ steps adaptation is denoted as $\phi(M)$ in the experiments.

\textbf{Offline Meta-L:} 
The performance of offline Meta-L is demonstrated in Figure \ref{fig:04} and Figure \ref{fig:05}.
For a fixed dimension, it can be seen from Figure \ref{fig:04} that the average estimation gap $\|\phi^*_{\theta}-\phi_i\|$ decreases with the number of blocks $D$ at first, and then converges to some constant at some point, corroborating the result in Theorem \ref{thm:thm.1}. Besides, this estimation gap clearly increases with the dimension $m$ and $n$, as illustrated in Figure \ref{fig:05}.
As shown in Figure \ref{fig:impactL}, one can always choose a smaller $L$ to improve the  accuracy of the episodic block model for the  LTV system, and the meta-L algorithm for offline learning can still work well as long as many blocks are available.

\begin{figure}[htbp]
\centering
\begin{minipage}[t]{0.48\textwidth}
\centering
\includegraphics[width=0.9\textwidth]{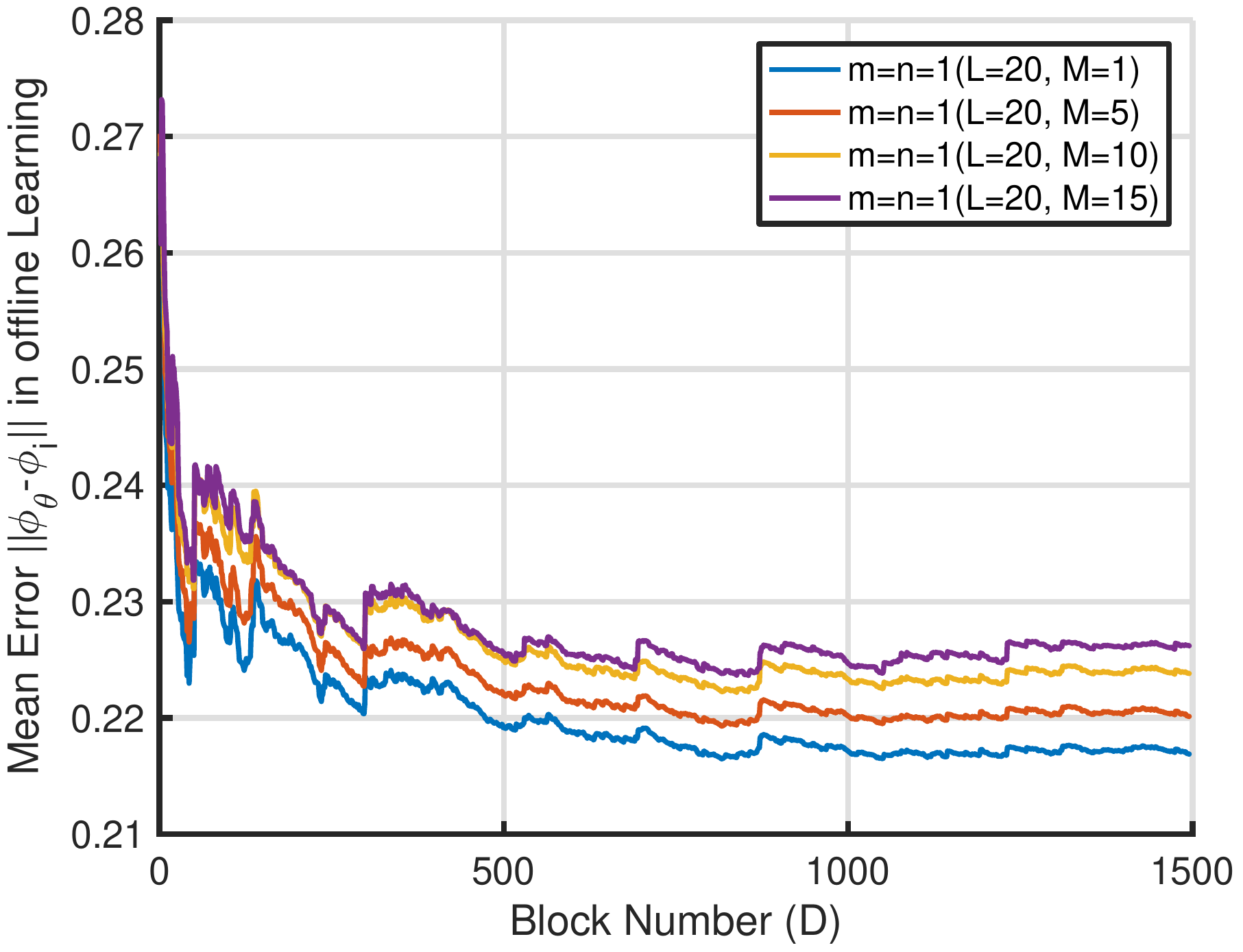}
\caption{Impact of the number of blocks $D$ on the average estimation gap for offline Meta-L  with same dimension but different training sizes.}\label{fig:04}
\end{minipage}\hfill
\begin{minipage}[t]{0.48\textwidth}
\centering
\includegraphics[width=0.9\textwidth]{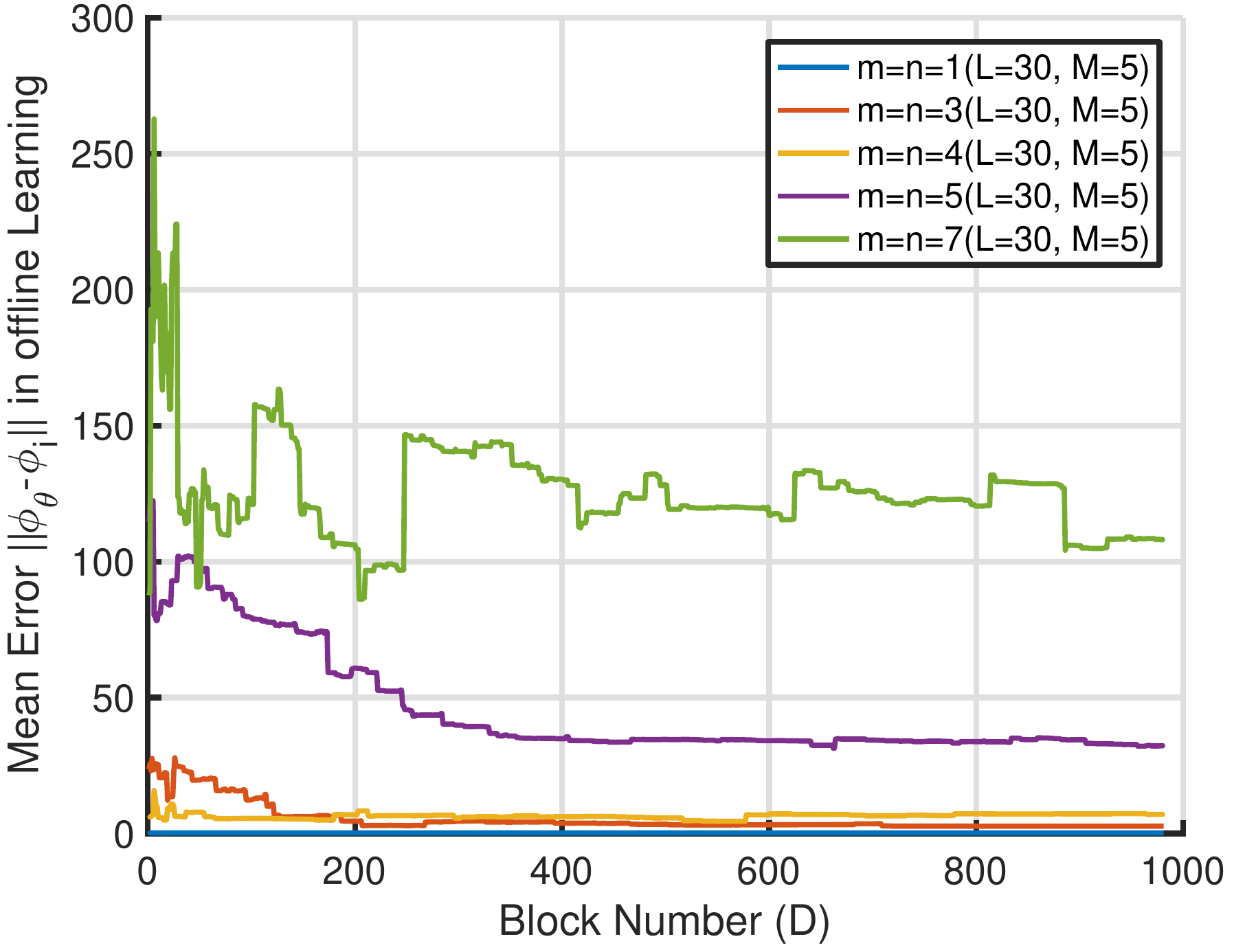}
\caption{Impact of the dimension $m$ and $n$ on the average estimation gap for offline Meta-L  with same block size.}\label{fig:05}
\end{minipage}
\end{figure}

\begin{figure}
    \centering
    \includegraphics[width=0.5\textwidth]{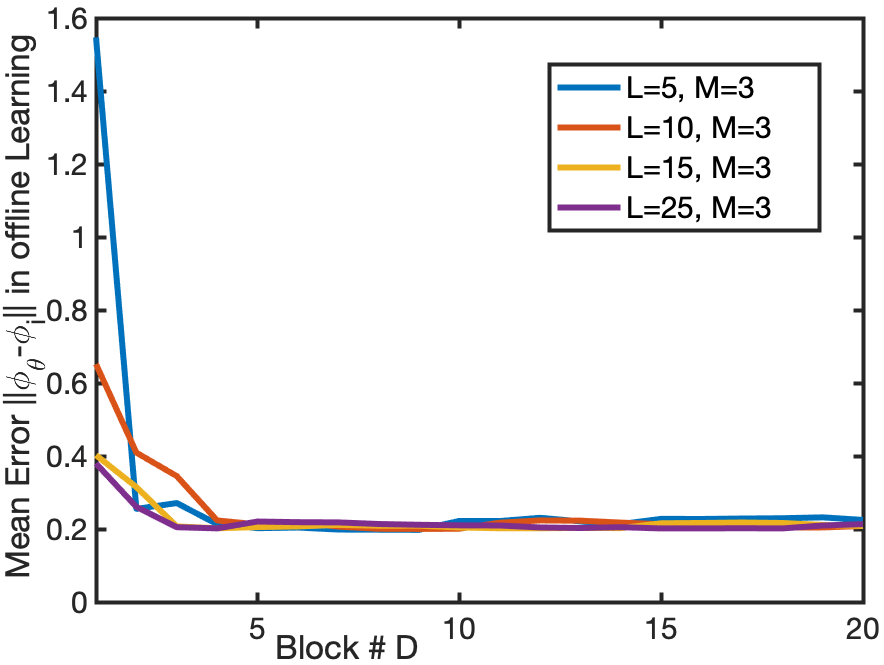}
    \caption{Impact of the block length $L$ on the average estimation gap for offline Meta-L.}
    \label{fig:impactL}
\end{figure}

\textbf{Online Adaptation:} Figure \ref{fig:06} demonstrates that the impact of different Meta-L model initialization $\phi^*_{\theta}$ on the average  estimation error $\|\phi(M)-\phi_i\|$. Clearly, the error decreases at first with the increase of the block number $D$ because a better $\phi^*_{\theta}$ would be learnt by the offline Meta-L. This trend stops when the model initialization $\phi^*_{\theta}$ stops improving with $D$.
Moreover, as illustrated in Figure \ref{fig:07}, for a suitably chosen learning rate $\alpha$ the average estimation error for the online adaptation decreases quickly with the increase of the number of training samples $M$.

\begin{figure}[htbp]
\centering
\begin{minipage}[t]{0.48\textwidth}
\centering
\includegraphics[width=0.9\textwidth]{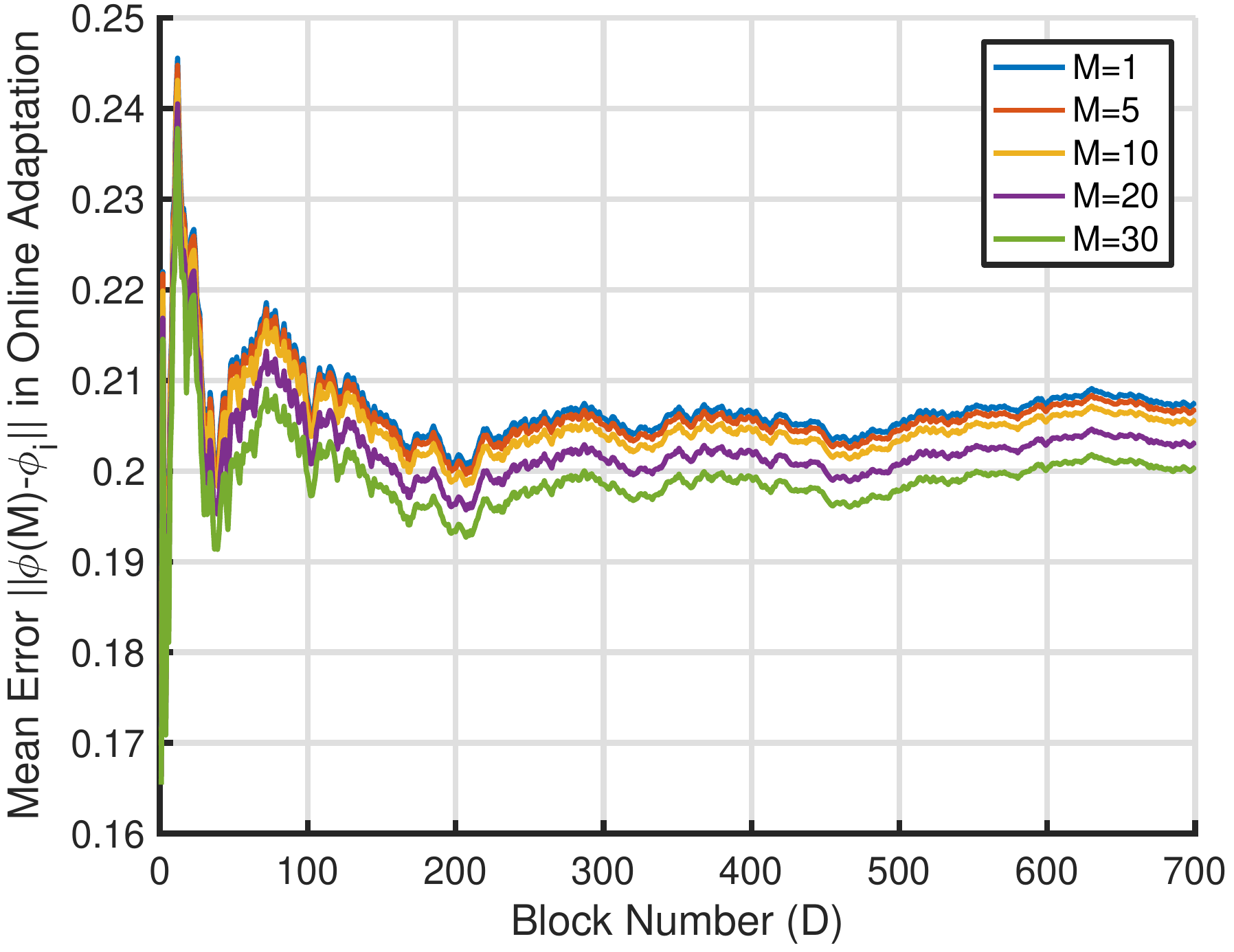}
\caption{Impact of offline learning block number $D$ on the average estimation error  for online adaptation under different training sizes.}\label{fig:06}
\end{minipage}\hfill
\begin{minipage}[t]{0.48\textwidth}
\centering
\includegraphics[width=0.9\textwidth]{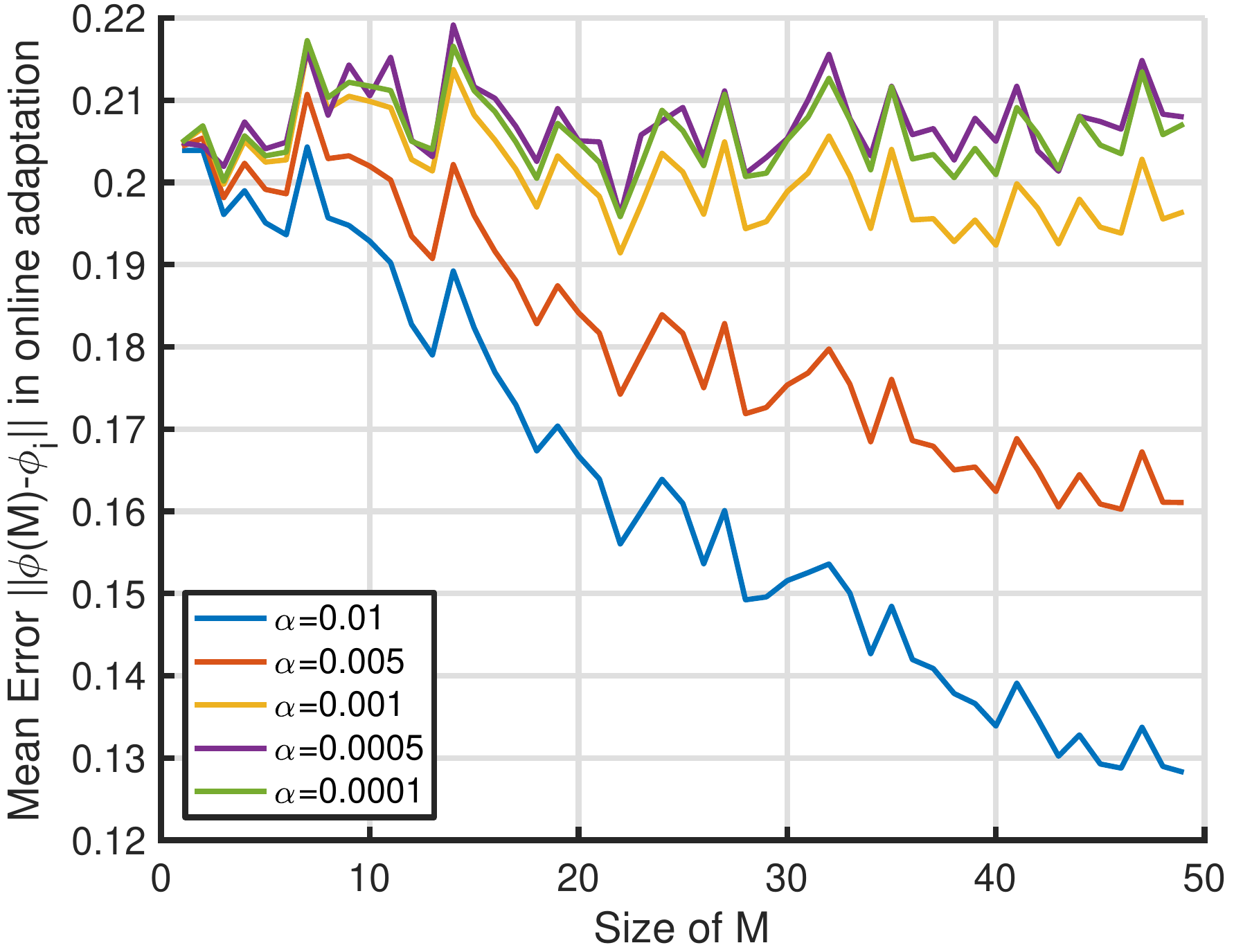}
\caption{Impact of sample size $M $ on the the average estimation error  for online adaptation under different learning rates.}\label{fig:07}
\end{minipage}
\end{figure}

\textbf{Least Square Estimator vs Meta-L:}
To demonstrate the advantage of Meta-L based system identification with small sample sizes, we compare the average estimation performance between the proposed Meta-L based system identification and the classical Least Square Estimator (LSE). More specifically, we set the noise $w_t\sim N(0,1)$ and $u_t\sim N(0,1)$. We also present the comparison under different values of meta-initialization $\phi^*_{\theta}$ to further substantiate the performance of Meta-L even with perturbed $\phi^*_{\theta}$, as demonstrated in Figure \ref{fig:08}-\ref{fig:11}. Clearly, the Meta-L based online adaptation outperforms the LSE in all cases with small sample sizes, which corroborates the benefits by using a good initial point in the recursive linear stochastic approximation algorithm, compared with the classical LSE where such a good initial point is not utilized.

\textbf{``Harmonic'' Block Model:} To show the performance of meta-learning under correlated block structures, we consider a simple ``harmonic'' block model where the block model parameters switch deterministically between blocks as $[(A_1,B_1),(A_2,B_2),(A_1,B_1),(A_2,B_2),...]$. Here, $(A_1,B_1)=(0.5,0.7)$ and $(A_2,B_2)=(0.8,0.8)$. In this setting $\phi^*_{\theta}$ learnt by offline meta-learning is  the middle point between $\phi^1=(A_1,B_1)$ and $\phi^2=(A_2,B_2)$ with  $\eta=\frac{1}{2}\|\phi^1-\phi^2\|$.
Since the two models $\phi^1$ and $\phi^2$ are bi-modal and hence `easier to find' by online adaptation, an additional gradient step using only a few samples, starting from  $\phi^*_{\theta}$,  can quickly converge to the true model (either $\phi^1$ or $\phi^2$)  in a new block, as illustrated in Figure \ref{fig:harmonic}. In a nutshell, the proposed meta-L based SI algorithm for general cases depends on the `joint effort' of  offline learning  and online adaptation, and this is the essence of meta-L algorithms.

\textbf{Downstream LQR Control based on Model Estimation:} Based on the model estimated via online adaptation using the first $M$ samples for each block, one can deploy the certainty equivalent controller or the robust controller introduced in Appendix \ref{app:controller} for the rest of the block. Since in our model each block is LTI, the controller design of LQR based on model estimation  in LTI systems can be directly applied here for the control of the rest of the block. Interested readers can refer to \cite{mania2019certainty,dean2017sample,dean2018regret} for more details and the empirical performance of the designed controllers with respect to the model estimation error, especially for the scenario when the model estimation error is small, corresponding our experiments on the model estimation performance of the proposed meta-learning based system identification.

\begin{figure}[htbp]
\centering
\begin{minipage}[t]{0.48\textwidth}
\centering
\includegraphics[width=0.9\textwidth]{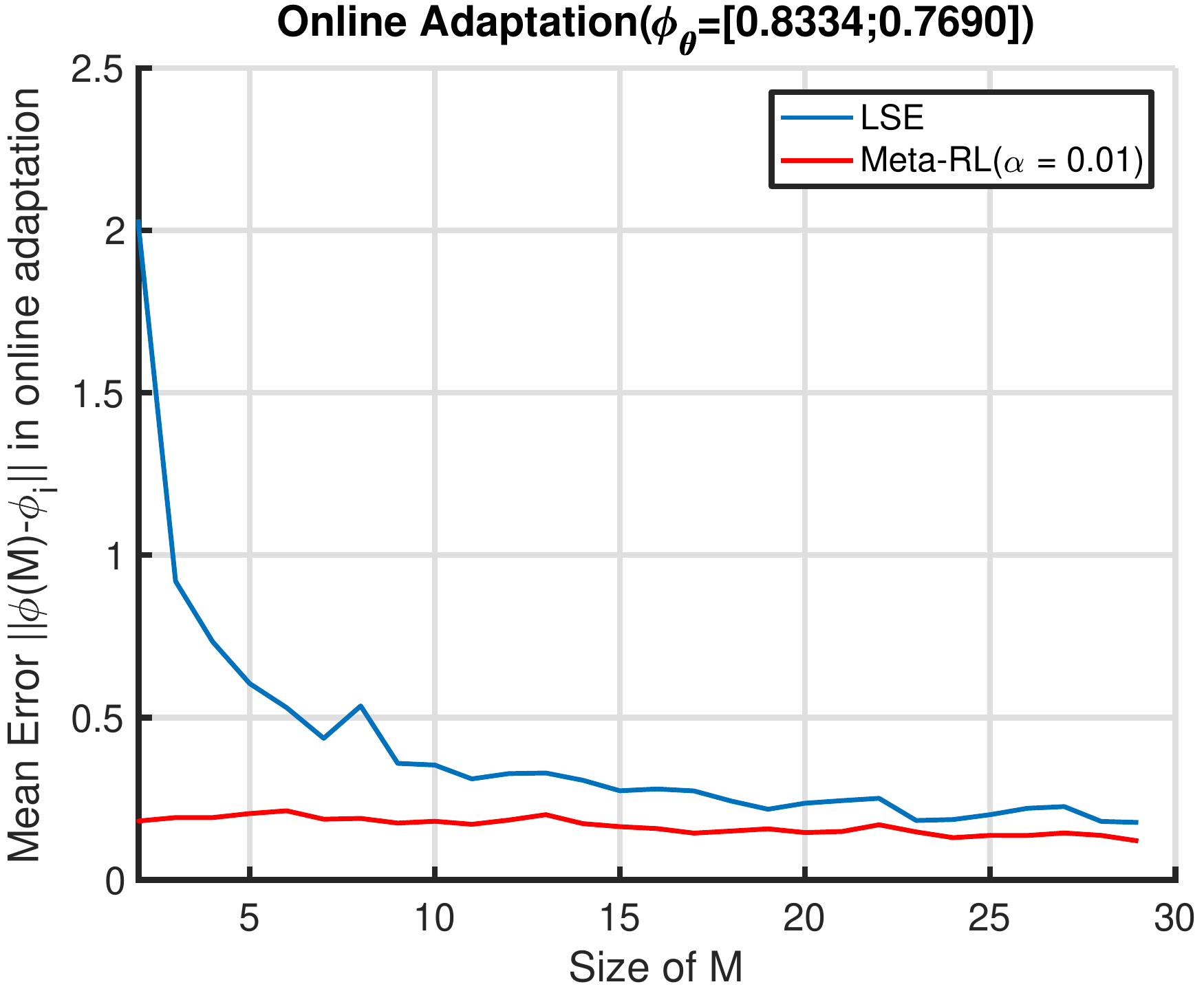}
\caption{Performance comparison for online adaptation between Least Square Estimator and Meta-L based fast adaptation with $\phi^*_{\theta}$ trained by using $300$ blocks in the offline Meta-L.}\label{fig:08}
\end{minipage}
\hfill
\begin{minipage}[t]{0.48\textwidth}
\centering
\includegraphics[width=0.9\textwidth]{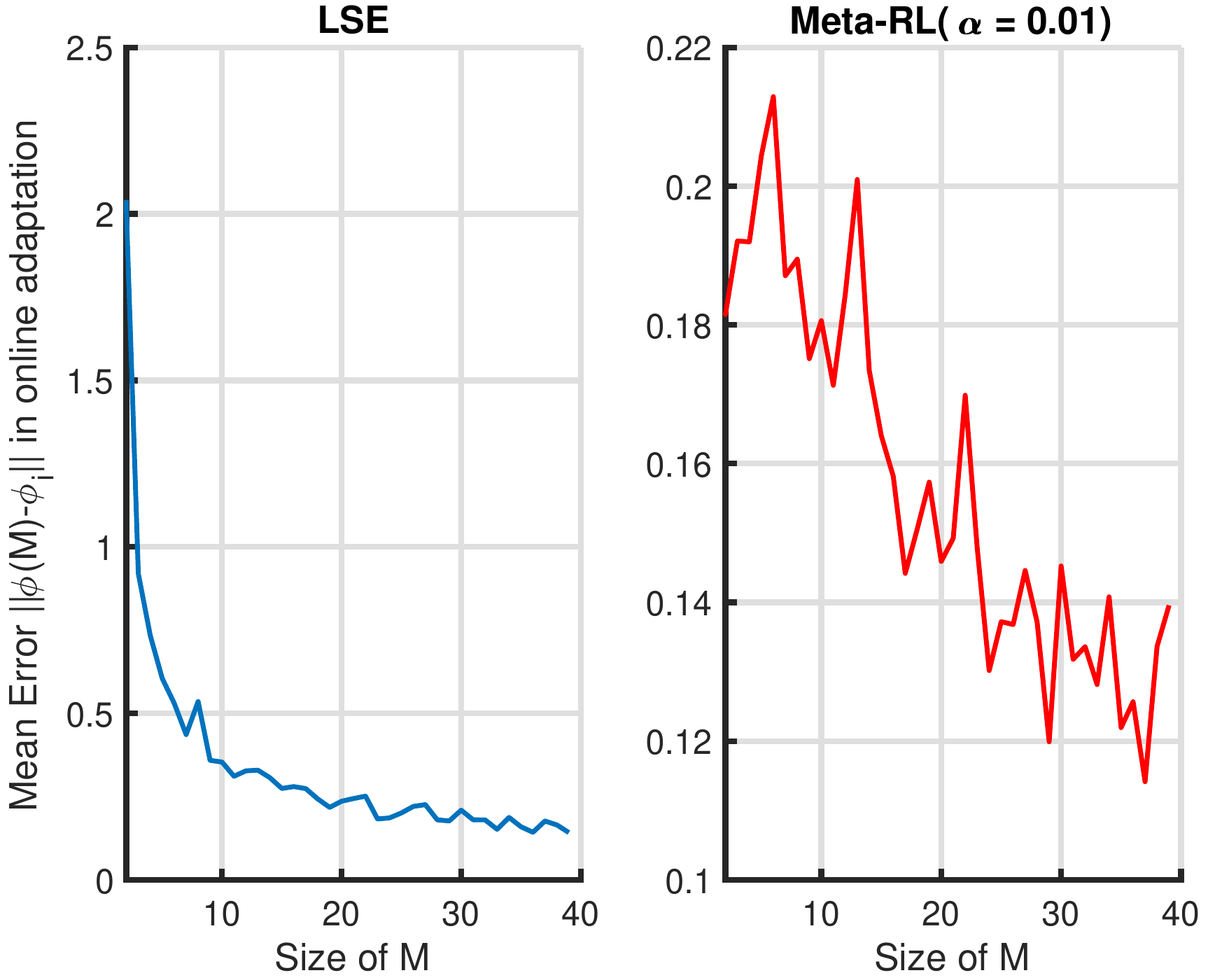}
\caption{A separate view on different scales for  the performance comparison illustrated in Figure \ref{fig:08}. }\label{fig:09}
\end{minipage}
\end{figure}

\begin{figure}[htbp]
\centering
\begin{minipage}[t]{0.48\textwidth}
\centering
\includegraphics[width=0.9\textwidth]{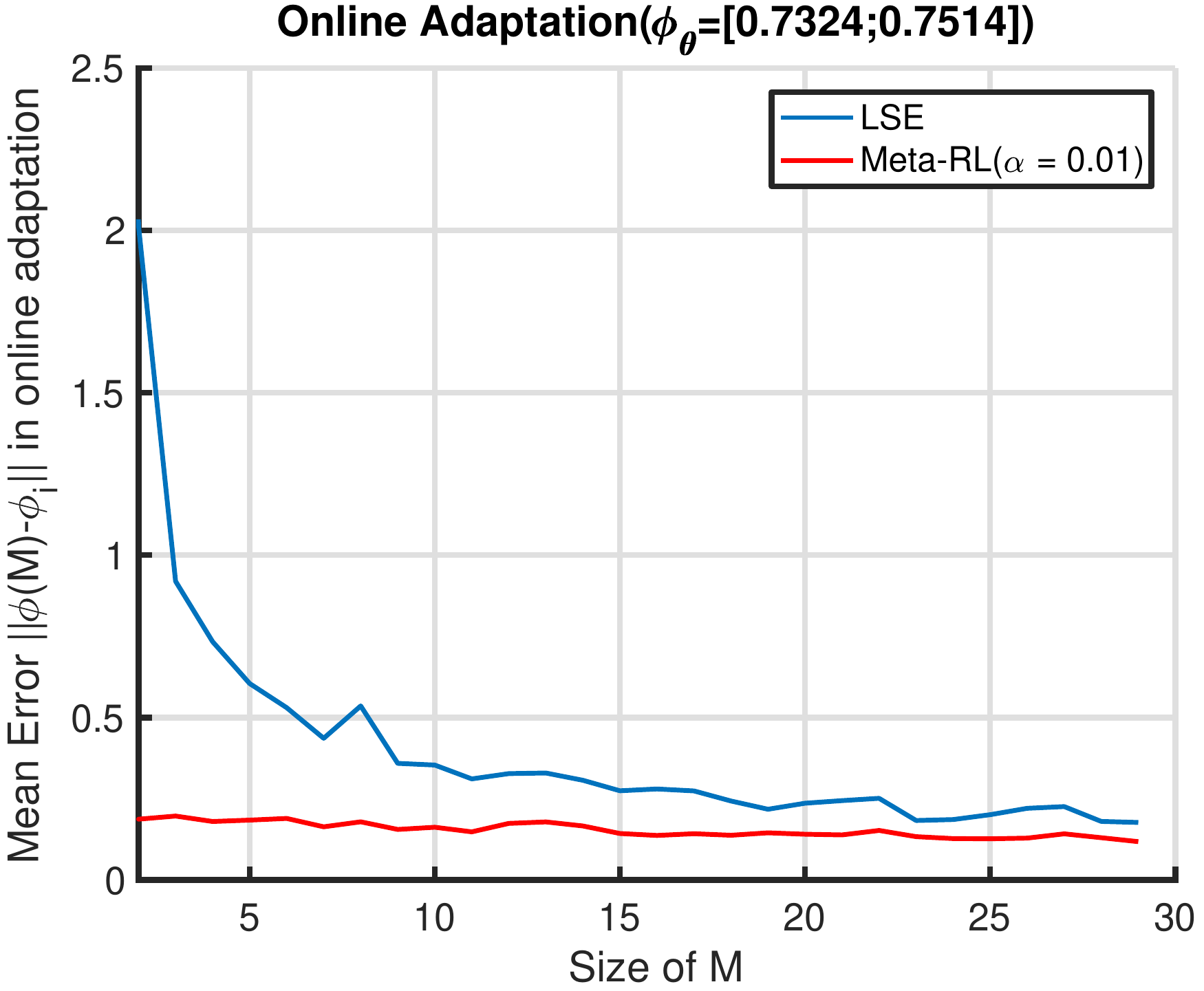}
\caption{Performance comparison for online adaptation between Least Square Estimator and Meta-L based fast adaptation with $\phi^*_{\theta}$ trained by using $1000$ blocks in the offline Meta-L.}\label{fig:10}
\end{minipage}
\hfill
\begin{minipage}[t]{0.48\textwidth}
\centering
\includegraphics[width=0.9\textwidth]{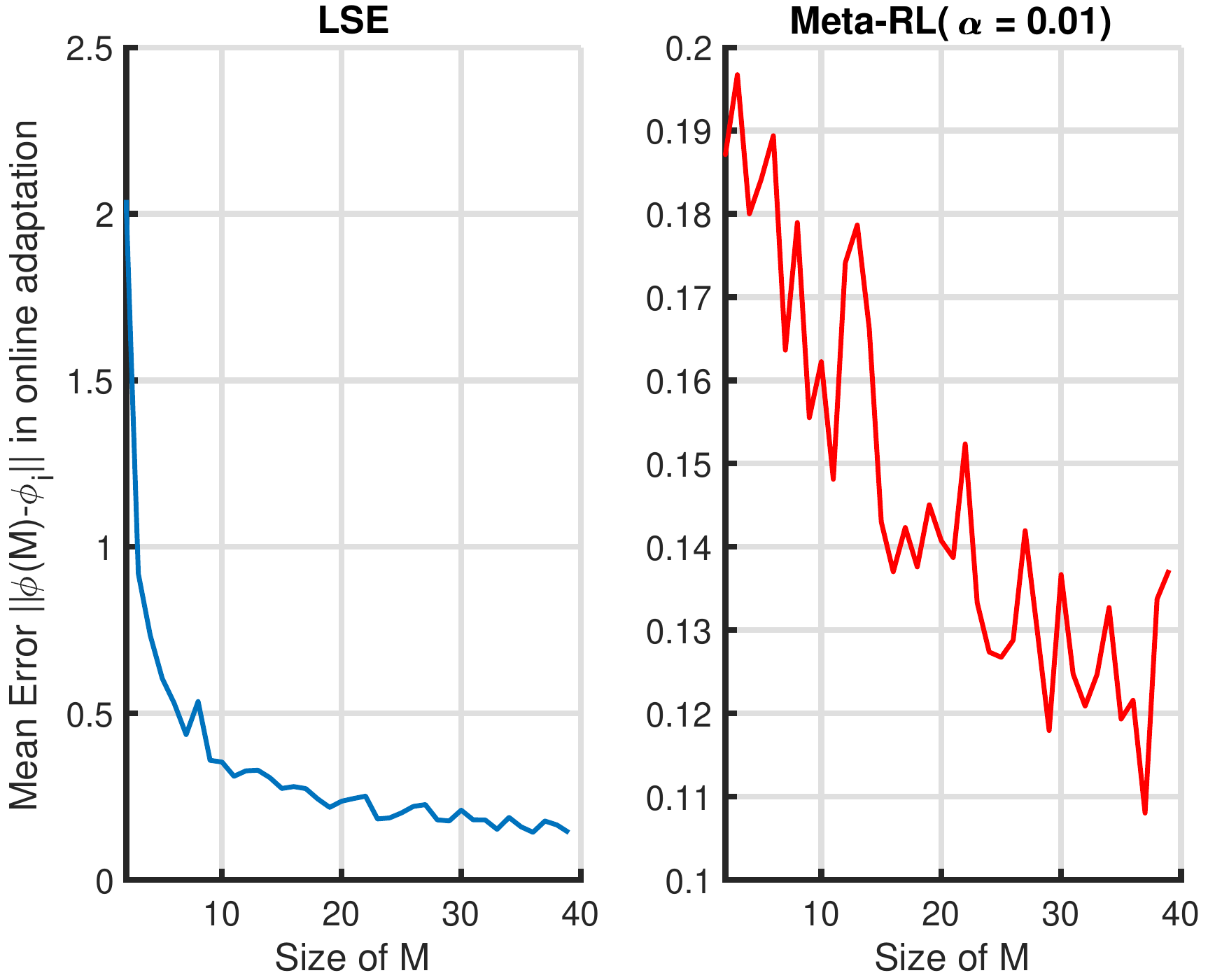}
\caption{A separate view on different scales for the performance comparison  illustrated in Figure \ref{fig:10}.}\label{fig:11}
\end{minipage}
\end{figure}

\begin{figure}
    \centering
    \includegraphics[width=0.45\textwidth]{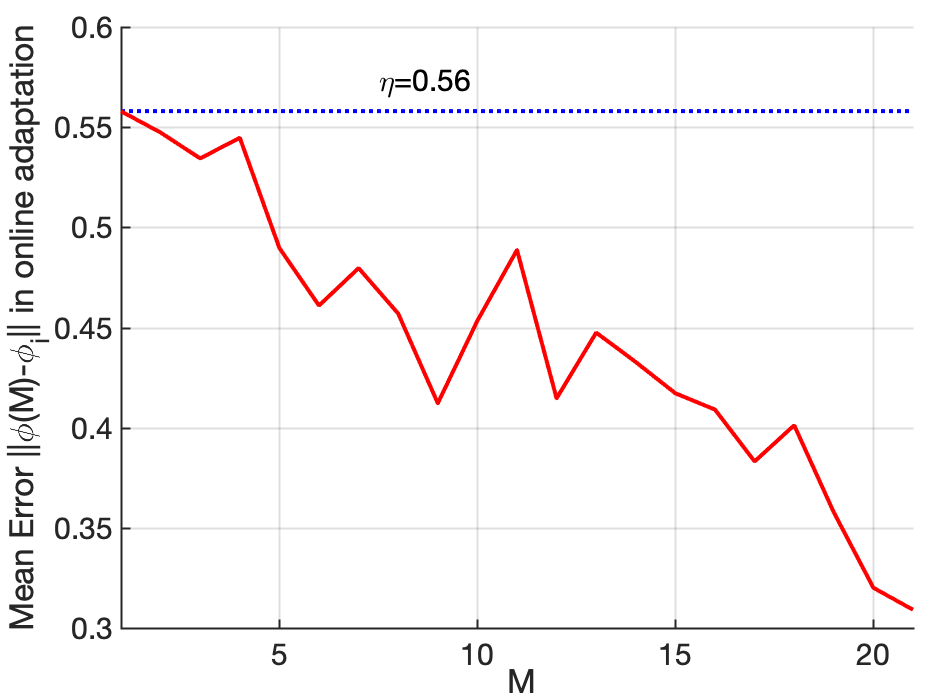}
    \caption{Online adaptation for the ``harmonic" block model.}
    \label{fig:harmonic}
\end{figure}

\end{document}